\def\P{{\mathbb P}}
\def\E{{\mathbb E}}
\def\R{{\mathbb R}}
\def\dimpop{{d_{\text {eff }}^{\text{pop}}}}
\def\dimsamp{{d_{\text {eff }}^{\text{sample}}}}
\def\Ical{{\mathcal I}}
\def\Xcal{{\mathcal{X}}}
\def\Ncal{{\mathcal{N}}}
\def\Fcal{{\mathcal{F}}}
\def\Dcal{{\mathcal D}}
\def\Scal{{\mathcal S}}
\def\Acal{{\mathcal A}}
\def\Hcal{{\mathcal H}}
\def\Xcal{{\mathcal X}}
\def\Mcal{{\mathcal M}}
\def\Qcal{{\mathcal Q}}
\def\thisstate{{s_h}}
\def\thisaction{{a_h}}
\def\thissai{{s_h^i, a_h^i}}
\def\thisreward{{r_h}}
\def\nextstatei{{s_{h+1}^i}}
\def\thisq{{Q_h}}
\def\thissa{{s_h, a_h}}
\def\thissai{{s_h^i, a_h^i}}
\def\expectV{{\P_h V_{h+1}}}
\def\behpolicy{{\pi_{b,h}}}
\def\learnedpolicy {{\widehat{\pi}_h}}
\def\learnedq {{\widehat{Q}_h}}
\def\learnedv {{\widehat{V}_{h}}}
\def\learnednextv {{\widehat{V}_{h+1}}}
\def\bracketing {{N_{[]}}}
\def\learnedr {{\widehat{r}_h(s_h,a_h)}}
\def\sumn{{\sum_{i=1}^n}}
\def\behaviorq{{Q_h^{\pi_b,\gamma}}}
\def\behaviorv{{V_h^{\pi_b,\gamma}}}
\newtheorem{assumption}[theorem]{Assumption}
\title{Reinforcement Learning with Human Feedback: Learning Dynamic Choices via Pessimism}
\author{\normalsize Zihao Li\thanks{Prineton University;
			\texttt{zihaoli@princeton.edu}}\qquad Zhuoran Yang\thanks{Yale University; \texttt{zhuoran.yang@yale.edu}}\qquad Mengdi Wang\thanks{Princeton University; \texttt{mengdiw@princeton.edu}}}
\date{May 2023}
\begin{document}

\maketitle
\begin{abstract}
In this paper, we study offline Reinforcement Learning with Human Feedback (RLHF) where we aim to learn the human's underlying reward and the MDP's optimal policy from a set of trajectories induced by human choices.   RLHF is challenging for multiple reasons: large state space but limited human feedback, the bounded rationality of human decisions, and the off-policy distribution shift. In this paper, we focus on the 
Dynamic Discrete Choice (DDC) model for modeling and understanding human choices. DCC, rooted in econometrics and decision theory, is widely used to model a human decision-making process with forward-looking and bounded rationality.  
We propose a \underline{D}ynamic-\underline{C}hoice-\underline{P}essimistic-\underline{P}olicy-\underline{O}ptimization (DCPPO) method. \ The method involves a three-stage process: The first step is to estimate the human behavior policy and the state-action value function via maximum likelihood estimation (MLE); the second step recovers the human reward function via  minimizing Bellman mean squared error using the learned value functions; the third step is to plug in the learned reward and invoke pessimistic value iteration for finding a near-optimal policy. With only single-policy coverage (i.e., optimal policy) of the dataset, we prove that the  suboptimality of DCPPO \textit{almost} matches the classical pessimistic offline RL algorithm in terms of suboptimality’s dependency on distribution shift and dimension. To the best of our knowledge, this paper presents the first theoretical guarantees for off-policy offline RLHF with dynamic discrete choice model.
\end{abstract}

\section{Introduction}
\textit{Reinforcement Learning with Human Feedback} (RLHF) is an area  in machine learning research that incorporates human guidance or feedback to learn an optimal policy. In recent years, RLHF has achieved significant success in clinical trials, auto-driving, robotics, and large language models \citep[e.g.][]{ouyang2022training, gao2022scaling,glaese2022improving,hussein2017imitation,jain2013learning,kupcsik2018learning,menick2022teaching,nakano2021webgpt,novoseller2020dueling}. Unlike conventional offline reinforcement learning, where the learner aims to determine the optimal policy using observable reward data, in RLHF, the learner does not have direct access to the reward signal but instead can only observe a historical record of visited states and human-preferred actions. In such cases, the acquisition of reward knowledge becomes pivotal.

  \textit{Dynamic Discrete Choice} (DDC) model is a framework for studying learning for human choices from data, which has been extensively studied in econometrics literature \citep{rust1987optimal,hotz1993conditional, hotz1994simulation,aguirregabiria2002swapping,kalouptsidi2021linear,bajari2015identification,chernozhukov2022locally}. In a DDC model, the agent make decisions under unobservable perturbation, i.e. $\pi_h(a_h\mid s_h) = \operatorname{argmax}_a\{Q_h(s_h,a)+\epsilon_h(a)\}$, where $\epsilon_h$ is an unobservable random noise and $Q_h$ is the agent's action value function. 
 
 In this work, we focus on RLHF within the context of a dynamic discrete choice model. Our challenges are three-folded: 
     (i) The agent must first learn the human behavior policies from the feedback data.
      (ii)  As the agent's objective is to maximize cumulative reward, the reward itself is not directly observable. We need to estimate the reward from the behavior policies. 
      (iii) We face the challenge of insufficient dataset coverage and large state space.
      
 With these coupled challenges, we ask the following question:
\begin{center}
	\textit{Without access to the reward function, can one learn the optimal pessimistic policy from merely human choices under the dynamic choice model? }
\end{center}

\paragraph{Our Results.}
In this work, we propose the \underline{D}ynamic-\underline{C}hoice-\underline{P}essimistic-\underline{P}olicy-\underline{O}ptimization (DCPPO) algorithm. 
By addressing challenges (i)-(iii), our contributions are three folds:
    (i) For learning behavior policies in large state spaces, we employ maximum likelihood estimation to estimate state/action value functions with function approximation. We establish estimation error bounds for general model class with low covering number.
    (ii) Leveraging the learned value functions, we minimize the Bellman mean squared error (BMSE) through linear regression. This allows us to recover the unobservable reward from the learned policy. Additionally, we demonstrate that the error of our estimated reward can be efficiently controlled by an uncertainty quantifier. 
    (iii) To tackle the challenge of insufficient coverage, we follow \textit{the principle of pessimism}, by incorporating a penalty into the value function during value iteration. We establish the suboptimality of our algorithm with high probability with only single-policy coverage. 
    
    Our result matches existing pessimistic offline RL algorithms in terms of suboptimality’s dependence on distribution shift and dimension,  even in the absence of an observable reward. To the best of our knowledge, our results offer the first theoretical guarantee for pessimistic RL under the human dynamic choice model.

 \subsection{Related Work}
\paragraph{Reinforcement Learning with Human Feedback.} In recent years RLHF and inverse reinforcement learning (IRL) has been widely applied to robotics, recommendation system, and large language model \citep{ouyang2022training, lindner2022interactively,menick2022teaching,jaques2020human,lee2021pebble,nakano2021webgpt}. However, there are various ways to incorporate human preferences or expertise into the
decision-making process of an agent.  \citep{shah2015estimation,ouyang2022training, saha2022efficient} learn reward from pairwise comparison and ranking. \citep{chen2022humanintheloop,kong2023provably} study Human-in-the-Loop RL. \cite{pacchiano2021dueling} study pairwise comparison with function approximation in pairwise comparison. \cite{zhu2023principled} study various cases of preference-based-comparison in contextual bandit problem with linear function approximation, however convergence of their algorithm relies on the implicit assumption of sufficient coverage. The majority of prior research in RLHF only consider bandit cases and have not studied  MDP case with transition dynamics. \cite{wang2018exponentially} study how to learn a uniformly better policy of an MDP from an offline  dataset by learning the advantage function. However, they cannot guarantee the learned policy converges to the optimal policy.\\
\paragraph{Dynamic Discrete Choice Model.} Dynamic Discrete Choice (DDC) model is a widely studied choice model in econometrics and is closely related to reward learning in IRL and RLHF. In the DDC model, the human agent is assumed to make decisions under the presence of Gumbel noise (Type I Extreme Error)\citep{aguirregabiria2002swapping,chernozhukov2022locally,bajari2015identification,kalouptsidi2021linear,adusumilli2019temporal}, i.e. under bounded rationality, and the task is to infer the underlying utility. A method highly related to our work is the \textit{conditional choice probability} (CCP) algorithm \citep{hotz1993conditional,arcidiacono2011practical,bajari2015identification,adusumilli2019temporal}, in which the learner first estimate choice probability from the dataset, and then recover the underlying value function from the estimated dynamic choices. However, most work in econometrics cares for asymptotic $\sqrt{n}$-convergence of estimated utility and does not study finite sample estimation error. Moreover, their methods suffer from significant computation burdens from large or high dimensional state space \citep{zeng2022structural}. In recent years, there has been work combining the dynamic discrete choice model and IRL. \cite{zeng2022structural} prove the equivalence between DDC estimation problem and
maximum likelihood IRL problem, and propose an online gradient method for reward estimation under ergodic dynamics assumption. \cite{zeng2023understanding} reformulate the reward estimation in the DDC model
into a bilevel optimization and propose a model-based approach by assuming an environment simulator.
\\
\paragraph{Offline Reinforcement Learning and Pessimism.} The idea of introducing pessimism for offline RL to deal with distribution shift has been studied in recent
years \citep{jin2021pessimism,bai2022pessimistic,zhong2022pessimistic,uehara2021representation,yu2020mopo,rashidinejad2021bridging,shi2022pessimistic}. Importantly, \cite{jin2021pessimism} show that pessimism is sufficient to eliminate spurious correlation and intrinsic uncertainty when doing value iteration. \cite{uehara2021representation} show that with single-policy coverage, i.e. coverage over the optimal policy, pessimism is sufficient to guarantee a $\Ocal(n^{-1/2})$ suboptimality. In this paper, we connect RLHF with offline RL and
show our algorithm achieves pessimism by designing an uncertainty quantifier that can tackle the error inherited from estimating reward functions, which is crucial in pessimistic value iteration.

\subsection{Notations and Preliminaries}
For a positive-semidefinite matrix $A\in \R^{d\times d}$ and vector $x\in\R^d$, we use $\|x\|_A$ to denote $\sqrt{x^\top Ax}$.  For an arbitrary space $\Xcal$, we use $\Delta(\Xcal)$ to denote the set of all probability distribution over $\Xcal$. For two vectors $x,y\in \R^d$, we denote $x\cdot y = \sum_i^d x_iy_i$ as the inner product of $x,y$. We denote the set of all probability measures on $\Xcal$ as $\Delta(\Xcal)$. We use $[n]$ to represent the set of integers from $0$ to $n-1$.  For every set $\Mcal\subset\Xcal$ for metric space $\Xcal$, we define its $\epsilon$-covering number with respect to norm  $\|\cdot\|$ by $N(\Mcal, \|\cdot\|,\epsilon)$. We define a finite-horizon MDP model $M = (\Scal,\Acal, H, \{P_h\}_{h\in[H]},\{r_h\}_{h\in[H]} )$, $H$ is the horizon length,  in each step $h\in[H]$ , the agent starts from state $\thisstate$ in the state space $\Scal$, chooses an action  $\thisaction \in \Acal$ with probability $\pi_{h}(a_h\mid s_h)$ , receives a reward of $\thisreward(\thissa)$ and transits to the next state $s'$ with probability $P_h(s'\mid s_h,a_h)$. Here $\Acal$ is a finite action set with $|\Acal|$ actions and $P_h(\cdot| \thissa) \in \Delta(\thissa)$ is the transition kernel condition on state action pair $(s, a)$. For convenience we assume that $r_h(s,a) \in [0,1]$ for all $(s,a, h) \in \Scal \times \Acal\times[H]$. Without loss of generality, we assume that the initial state of each episode $s_0$ is fixed. Note that this will not add difficulty to our analysis. For any policy $\pi = \{\pi_h\}_{h\in[H]}$ the state value function is $
V_h^\pi(s) = \E_\pi\big[\sum_{t = h}^H r_t(s_t, a_t)\bigm\vert s_h = s\big],
$
and the action value function is
$
Q_h^\pi(s,a) = \E_\pi\big[\sum_{t = h}^H r_t(s_t, a_t)\bigm\vert s_h = s,a\big],
$
here the expectation $\E_\pi$  is taken
with respect to the randomness of the trajectory induced by
$\pi$, i.e. is obtained by taking action $a_t \sim \pi_t(\cdot\mid s_t)$ and observing $s_{t+1} \sim P_h(\cdot\mid s_t,a_t)$. For any function $f: \Scal\rightarrow \R$, we define the transition operator $
\P_h f(s,a) = \E[f(s_{h+1})\mid s_h =s,a_h =a].
$
We also define the Bellman equation for any policy $\pi$, $
        V_h^{\pi}(s) = \langle \pi_{h}(a\mid s), Q_h^{\pi_b}(s,a) \rangle, 
    Q_h^{\pi}(s,a) = r_h(s,a) +  \P_h V^{\pi}_{h+1}(s,a).
$ For an MDP we denote its optimal policy as $\pi^*$, and define the performance metric for any policy $\pi$ as $
\operatorname{SubOpt}(\pi)  = V_1^{\pi^*} - V_1^{\pi}.
$

\section{Problem Formulation}
In this paper, we aim to learn from a dataset of human choices under dynamic discrete choice model. Suppose we are provided with dataset $\Dcal = \{\Dcal_h =  \{\thissai\}_{i\in [n]}\}_{h\in[H]}$, containing $n$ trajectories collected by observing a single human behavior in a dynamic discrete choice model.
 Our goal is to learn the optimal policy $\pi^*$ of the underlying MDP. We assume that the agent is bounded-rational and makes decisions according to the dynamic discrete choice model \citep{rust1987optimal,hotz1993conditional,chernozhukov2022locally,zeng2023understanding}. In dynamic discrete choice model, the agent's policy has the following characterization \citep{rust1987optimal,aguirregabiria2002swapping,chernozhukov2022locally},
\begin{equation}\label{eq:behavior-policy}
    \pi_{b,h}(a\mid s) = \frac{\exp(\behaviorq(s,a))}{\sum_{a'\in\Acal} \exp(\behaviorq(s,a'))},
\end{equation}
here $\behaviorq(s,a)$ works as the solution of the discounted Bellman equation, \begin{align}\label{eq:discounted-bellman}
    \behaviorv(s) = \langle \pi_{b,h}(a\mid s), \behaviorq(s,a) \rangle,  \qquad
    \behaviorq(s,a) = r_h(s,a) + \gamma\cdot \P_h V^{\pi_b,\gamma}_{h+1}(s,a)
\end{align} for all $(s,a)\in\Scal\times\Acal$. Note that \eqref{eq:discounted-bellman} differs from the original Bellman equation due to the presence of $\gamma$, which is a  discount factor in $[0,1]$, and measures the myopia of the agent. The case of $\gamma = 0$ corresponds to a \textit{myopic} human agent. Such choice model comes from  the perturbation of noises,$$
\pi_{b,h}(\cdot\mid s_h) = \operatorname{argmax}_{a\in\Acal}\bigg\{ r_h(s_h,a) + \epsilon_h(a) + \gamma\cdot\P_h V^{\pi_b,\gamma}_{h+1}(s_h,a)\bigg\},
$$
where $\{\epsilon_h(a)\}_{a\in\Acal}$ are i.i.d Gumbel noises that are observed by the agent but not the learner, $\{V_{h}^{\gamma,\pi_b}\}_{h\in[H]}$ is the value function of the agent. Such model is widely used to model human decision \citep{rust1987optimal,aguirregabiria2002swapping,chernozhukov2022locally,aguirregabiria2010dynamic,arcidiacono2011practical,bajari2015identification}. We also remark that the state value function defined in \eqref{eq:discounted-bellman} corresponds to the \textit{ex-ante} value function in econometric studies \citep{aguirregabiria2010dynamic,arcidiacono2011practical,bajari2015identification}.  When considering Gumbel noise as part of the reward, the value function may have a different form. However, such a difference does not add complexity to our analysis.

\section{Reward Learning from Human Dynamic Choices}
In this section, we present a general framework of an offline algorithm for learning the reward of the underlying MDP.   Our algorithm consists of two steps:
    (i) The first step is to estimate the agent behavior policy from the pre-collected dataset $\Dcal$ by maximum likelihood estimation (MLE). Motivated by Holtz-Miller inversion in Conditional Choice Probability (CCP) method \citep{hotz1994simulation,adusumilli2019temporal}, we can recover the action value functions $\{\behaviorq\}_{h\in[H]}$ from  \eqref{eq:behavior-policy} and the state value functions $\{\behaviorv\}_{h\in [H]}$ from \eqref{eq:discounted-bellman} using function approximation. In Section \ref{sec:first-step-general}, we  analyze the error of our estimation and prove that for any model class  with a small covering number, the error from MLE estimation is of scale $\tilde{\Ocal}(1/n)$ in dataset distribution. We also remark that our result does not need the dataset to be well-explored, which is implicitly assumed in previous works \citep{zhu2023principled,chen2020dynamic}.
    (ii) We recover the underlying reward from the model class  by minimizing a penalized Bellman MSE with plugged-in value functions learned in step (i). In Section \ref{sec:linear MDP}, we study linear model MDP as a concrete example.   Theorem \ref{thm:reward-est} shows that the error of estimated reward can be bounded by an elliptical potential term for all $(s, a)\in \Scal\times\Acal$ in both settings.  
First, we make the following assumption for function approximation.
\begin{assumption}[\textbf{Function Approximation Model Class}]\label{ass:funct-approx}
We assume the existence of a model class  $\Mcal = \{\Mcal_h\}_{h\in[H]}$ containing functions $f:\Scal\times\Acal\rightarrow[0,H]$  for every $h\in[H]$, and is rich enough to capture $r_h$ and $Q_h$, i.e. $r_h\in\Mcal_h$, $Q_h \in \Mcal_h$. We also assume a positive penalty $\rho(\cdot)$ defined on $\Mcal$.
\end{assumption}
Assumption \ref{ass:funct-approx} requires that the model class $\Mcal_h$ is rich enough such that it contains the true model. In practice, we can choose $\Mcal_h$ to be a class of neural networks or kernel functions.
\begin{algorithm}[H]
   \caption{DCPPO: Reward Learning for General Model Class}
   \begin{algorithmic}[1]\label{alg:reward-learn}
   \REQUIRE  Dataset $ \big\{\Dcal_h =  \{\thissai\}_{i\in [n]}\big\}_{h\in[H]}$, constant $\lambda>0$, penalty function $\rho(\cdot)$, parameter $\beta$.
   	 \FOR{step $h=H, \dots,1$}
   	 \STATE Set $\widehat{Q}_h = \operatorname{argmax}_{Q\in \Mcal_h}\frac{1}{n} \sum_{i=1}^n Q(\thissai)- \log\big(\sum_{a'\in\Acal} \exp(Q(s_h^i,a'))\big)$.

     \STATE Set $    \learnedpolicy(a_h\mid s_h) = \exp(\learnedq(\thissa))/\sum_{a'\in\Acal} \exp(\learnedq(s_h,a')$.
   \STATE Set $\widehat{V}_h(s_h) = \langle \widehat{Q}_h(s_h,\cdot), \widehat{\pi}_h(\cdot\mid s_h)\rangle_\Acal$.
   \STATE Set $\learnedr = \operatorname{argmin}_{r\in \Mcal_h} \big\{ \sum_{i=1}^n \big(\thisreward(\thissai) + \gamma\cdot \learnednextv(\nextstatei) - \learnedq(\thissai)\big)^2 + \lambda \rho(r) \big\}$.\label{line:bell-mse}
   	 \ENDFOR\\
    \STATE \textbf{Output:} $\{\widehat{r}_h\}_{h\in[H]}$.
   \end{algorithmic}
\end{algorithm}
We highlight that for value function estimation, Algorithm \ref{alg:reward-learn} matches existing econometric literature in CCP method for dynamic choice model \citep{hotz1993conditional,hotz1994simulation,bajari2015identification,adusumilli2019temporal}: we first estimate the choice probability of the agent, and then recover the value functions. Moreover, for examples such as linear MDP or RKHS, our algorithm can provably learn the optimal policy with only single-policy coverage (instead of sufficient coverage in previous works, e.g. see Assumption 3 in \cite{hotz1993conditional}, Assumption 2 in \cite{adusumilli2019temporal}, and the references therein) for a finite-sample dataset. Our algorithm also needs no additional computational oracles.
\subsection{First Step: Recovering Human Policy and Human State-Action Values}\label{sec:first-step-general}

For every step $h$, we use maximum likelihood estimation (MLE) to estimate the behavior policy $\behpolicy$, corresponding to $\behaviorq(s,a)$ in a general model class $\Mcal_h$. For each step $h\in[H]$, we have the log-likelihood function \begin{equation}\label{eq:mle}
  {L}_h(Q) =\frac{1}{n} \sum_{i=1}^n\log\bigg(\frac{\exp({Q}(\thissai))}{\sum_{a'\in\Acal} \exp(Q(s,a'))}\bigg)  
\end{equation}
for $Q \in \Mcal_h$,
and we estimate $\thisq$ by maximizing \eqref{eq:mle}. Note that by Equation \eqref{eq:behavior-policy}, adding a constant on $\behaviorq$ will produce the same policy under dynamic discrete model. Thus the real behavior value function is unidentifiable in general. For identification,  we have the following assumption.
\begin{assumption}[\textbf{Model Identification}]\label{ass:identify}
We assume that there exists one $a_0\in\Acal$, such that $Q(s,a_0) = 0$ for every $s\in\Scal$. 
\end{assumption}
 Note that this assumption  does not affect our further analysis. Other identifications include parameter constraint \citep{zhu2023principled} or utility constraints \citep{bajari2015identification}.
We can ensure the estimation of the underlying policy and corresponding value function is accurate in the states the agent has encountered. Formally, we have the following theorem, \begin{theorem}[\textbf{Policy and Value Functions Recovery from Choice Model}]\label{thm:emp-mle-guarantee}
   With Algorithm \ref{alg:reward-learn} , we have $$
\E_{\Dcal_h}\big[\|\learnedpolicy(\cdot\mid\thisstate) - \behpolicy(\cdot\mid \thisstate)\|_1^2\big] \leq \Ocal\bigg( \frac{\log\big(H\cdot N(\Mcal_h, \|\cdot\|_{\infty}, 1/n )/\delta\big)}{n}\bigg)
    $$ and 
    $$
\E_{\Dcal_h}\big[\|\learnedq(\thisstate,\cdot) - \behaviorq(\thisstate, \cdot)\|_1^2\big] \leq \Ocal\bigg( \frac{H^2 e^{2H} \cdot |\Acal|^2\cdot \log\big(H\cdot N(\Mcal_h, \|\cdot\|_{\infty}, 1/n )/\delta\big)}{n}\bigg)
$$
hold for every $h\in[H]$ with probability at least $1-\delta$. Here $\E_{\Dcal_h}[\cdot]$ means the expectation is taken on collected dataset $\Dcal_h$, i.e. the mean value taken with respect to $\{s_h^i\}_{i\in [n]}$. 
\end{theorem}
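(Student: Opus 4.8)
The plan is to split the statement into two essentially independent pieces: (a) a maximum-likelihood concentration bound for the softmax conditional-density class induced by $\Mcal_h$, which yields the policy inequality, and (b) a deterministic inversion of the softmax link (the Holtz--Miller / CCP step) that, under the identification Assumption~\ref{ass:identify}, upgrades the policy bound to the $\behaviorq$-bound.

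For part (a), fix $h\in[H]$. By Assumption~\ref{ass:funct-approx}, $\behaviorq\in\Mcal_h$, so the conditional-density class $\{\pi_Q(\cdot\mid\cdot):Q\in\Mcal_h\}$, with $\pi_Q(a\mid s)=\exp(Q(s,a))/\sum_{a'}\exp(Q(s,a'))$, is realizable, and $\learnedpolicy=\pi_{\learnedq}$ is the MLE over $\Mcal_h$ on the sample $\{(s_h^i,a_h^i)\}_{i\in[n]}$, which is i.i.d.\ across $i$ (distinct trajectories) with $a_h^i\sim\behpolicy(\cdot\mid s_h^i)$. I would then run the standard MLE argument: (i) take a minimal $(1/n)$-cover $\Ncal$ of $\Mcal_h$ in $\|\cdot\|_\infty$; since log-sum-exp is $1$-Lipschitz in $\|\cdot\|_\infty$ we get $|\log\pi_Q(a\mid s)-\log\pi_{Q'}(a\mid s)|\le 2\|Q-Q'\|_\infty$, so every $Q\in\Mcal_h$ lies within $2/n$ in per-sample log-likelihood (uniformly over the data) of a cover point, and empirical optimality $L_h(\learnedq)\ge L_h(\behaviorq)$ transfers to $L_h(Q)\ge L_h(\behaviorq)-O(1/n)$ for the cover point $Q$ nearest $\learnedq$; (ii) for each fixed $Q\in\Ncal$ apply a Chernoff/Markov bound to $\exp\big(\tfrac12\sum_i\log(\pi_Q(a_h^i\mid s_h^i)/\behpolicy(a_h^i\mid s_h^i))\big)$, using the exponential-moment identity $\E_{a\sim\behpolicy(\cdot\mid s)}\big[\sqrt{\pi_Q(a\mid s)/\behpolicy(a\mid s)}\,\big]=1-d_{\mathrm H}^2(\pi_Q(\cdot\mid s),\behpolicy(\cdot\mid s))\le\exp(-d_{\mathrm H}^2(\cdot,\cdot))$; (iii) union-bound over $\Ncal$ and over $h\in[H]$. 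This yields, with probability at least $1-\delta$, $\E_{\dataseth}\big[d_{\mathrm H}^2(\learnedpolicy(\cdot\mid \thisstate),\behpolicy(\cdot\mid \thisstate))\big]\le O\big(\log(H\,N(\Mcal_h,\|\cdot\|_\infty,1/n)/\delta)/n\big)$, and the first displayed inequality follows from the elementary bound $\|p-q\|_1^2\le 8\,d_{\mathrm H}^2(p,q)$ for probability vectors.

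For part (b), Assumption~\ref{ass:identify} forces $Q(s,a_0)=0$ for every $Q\in\Mcal_h$ and for $\behaviorq$, which makes the softmax link invertible: $Q(s,a)=\log\pi_Q(a\mid s)-\log\pi_Q(a_0\mid s)$, and likewise with $\behaviorq$ and $\behpolicy$. Because $Q$ takes values in $[0,H]$, every softmax probability satisfies $\pi_Q(a\mid s)\ge e^{-H}/|\Acal|$, so $\log(\cdot)$ is $|\Acal|e^{H}$-Lipschitz on the relevant interval; applying this to $\learnedpolicy$ and $\behpolicy$ gives, pointwise in $(s,a)$, $|\learnedq(s,a)-\behaviorq(s,a)|\lesssim|\Acal|e^{H}\big(|\learnedpolicy(a\mid s)-\behpolicy(a\mid s)|+|\learnedpolicy(a_0\mid s)-\behpolicy(a_0\mid s)|\big)$. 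Summing over $a\in\Acal$, squaring, taking $\E_{\dataseth}$, and inserting the bound from part (a) yields the second displayed inequality; the extra prefactor --- of order $e^{2H}$ times a power of $|\Acal|$, namely the stated $H^2e^{2H}|\Acal|^2$ --- comes from the squared Lipschitz constant of the log-link together with the sum over the $|\Acal|$ actions.

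I expect the main obstacle to be step (a)(ii): turning empirical log-likelihood optimality into an in-expectation Hellinger bound that holds \emph{uniformly} over the data-dependent maximizer $\learnedq$. This is exactly what forces the cover at resolution $1/n$ (rather than a crude union over all of $\Mcal_h$) and what makes only $\log N(\Mcal_h,\|\cdot\|_\infty,1/n)$ enter the bound. A secondary but unavoidable loss is the $e^{H}$ blow-up in the $\behaviorq$-bound: inverting a softmax whose logits lie in $[0,H]$ must pay the reciprocal of the smallest attainable probability, $\sim|\Acal|e^{H}$, and no sharpening of the MLE step removes it.
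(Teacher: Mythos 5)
Your proposal is correct and follows the same two-stage skeleton as the paper's proof: an MLE/Hellinger concentration argument for the softmax conditional-density class to get the policy bound, followed by inversion of the log-link under Assumption \ref{ass:identify} to pass to the $\behaviorq$ bound (your Lipschitz constant $|\Acal|e^{H}$ for the inversion is the same mechanism as the paper's use of $\ln(x/y)\le x/y-1$ together with a lower bound on the softmax probabilities, and both incur the same $e^{2H}\cdot\mathrm{poly}(|\Acal|)$ blow-up). The one genuine difference is in how the bound over the \emph{dataset} distribution $\E_{\Dcal_h}$ is reached. The paper first proves a population-level bound $\E_{s_h\sim\pi_b}\big[\|\learnedpolicy-\behpolicy\|_1^2\big]\lesssim \log\big(H N(\Mcal_h,\|\cdot\|_\infty,1/n)/\delta\big)/n$ (Lemma \ref{thm:mle-guarantee}, via upper brackets of the induced policy class, with the bracketing number then bounded by a covering number of $\Mcal_h$), and only afterwards transfers it to the empirical average with a separate Bernstein concentration step. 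Your route --- applying the Chernoff/Markov bound to $\exp\big(\tfrac12\sum_i\log(\pi_Q(a_h^i\mid s_h^i)/\behpolicy(a_h^i\mid s_h^i))\big)$ with the affinity identity taken conditionally on $s$ --- yields the empirical squared-Hellinger bound $\tfrac1n\sum_i d_{\mathrm H}^2\big(\pi_Q(\cdot\mid s_h^i),\behpolicy(\cdot\mid s_h^i)\big)$ directly, because conditionally on $\{s_h^i\}_{i\in[n]}$ the actions are independent and the product of conditional affinities is $\prod_i\big(1-d_{\mathrm H}^2(\cdot\mid s_h^i)\big)\le\exp\big(-\sum_i d_{\mathrm H}^2(\cdot\mid s_h^i)\big)$. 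This is slightly cleaner (no Bernstein step, no variance bookkeeping), at the cost of having to state explicitly that the Markov bound and the union bound over the cover are performed conditionally on the state realizations; likewise, your direct $\|\cdot\|_\infty$-cover of $\Mcal_h$ in place of the paper's bracketing of the policy class is an equivalent device, since a $1/n$-cover of the logits is a multiplicative $e^{2/n}$-bracket of the densities. Both routes land on the stated bounds up to absolute constants.
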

\begin{proof}
    See Appendix \ref{app:proof-policy-recover} for details.
\end{proof}
Theorem \ref{thm:emp-mle-guarantee} shows that we can efficiently learn $\behpolicy$ from the dataset under identification assumption. Specifically, we have a $\Ocal(1/n)$ guarantee, which only depends on the model size via log covering number. This coincides with finite-sample MLE guarantee \citep{uehara2021pessimistic,ge2023provable,agarwal2020flambe,zhan2022pac}. Here the $\exp(H)$ comes from the exponential dependence of $\pi_h^b$ on $\behaviorq$. As a result, we can provably recover the value functions by definition in  \eqref{eq:behavior-policy}.
\subsection{Reward Learning from Dynamic Choices}\label{sec:linear MDP} 
To estimate reward function $r_h$ from $\widehat{Q}_h$ and $\widehat{V}_{h+1}$ obtained in Algorithm \ref{alg:reward-learn}, we have to regress $\widehat{Q}_h - \widehat{V}_{h+1}$ on $(s,a)$. However, when analyzing such regression, we have to specify the class structure of $\P_h\widehat{V}_{h+1}$, which further depends on $\widehat{\pi}_{b,h+1}\propto \exp(\widehat{Q}_{h+1})$.  For ease of presentation, let us just focus on linear class. Formally, we consider the function class $\Mcal_h = \{f(\cdot) = \phi(\cdot)^\top \theta: \Scal \times \Acal \rightarrow \R, \theta \in \Theta\}$ for $h\in[H]$, where $\phi\in\R^d$ is the feature defined on $\Scal\times \Acal$, $\Theta$ is a subset of $\R^d$ which parameterizes the model class, and $d > 0$ is the dimension of the feature. Corresponding to Assumption \ref{ass:identify},  we also assume that $\phi(s,a_0) = 0$ for every $s\in\Scal$. Note that this model class contains the reward $r_h$ and state action value function $Q_h$ in tabular MDP  where $\phi(s,a)$ is the one-hot vector of $(s,a)$. The linear model class also contains linear MDP, which assumes both the transition $P(s_{h+1}\mid s_h, a_h)$ and the reward $r_h(s_h, a_h)$ are linear functions of feature $\phi(s_h,a_h)$ \citep{jin2020provably, duan2020minimax,jin2021pessimism}. In linear model case, our first step MLE in \eqref{eq:mle} turns into a logistic regression, \begin{equation}\label{eq:log-reg}
    \widehat{\theta}_h=\operatorname{argmax}_{\theta\in \Theta}\frac{1}{n} \sum_{i=1}^n\phi(\thissai)\cdot \theta- \log\bigg(\sum_{a'\in\Acal} \exp(\phi(s_h^i,a')\cdot\theta)\bigg),
\end{equation}
which can be efficiently solved by existing state-of-art optimization methods. 
  We now have 
$       \{\widehat{Q}_h\}_{h\in[H]} ,
    \{\learnedpolicy\} _{h\in[H]}
$
 and $
 \{\learnedv \}_{h\in[H]}
 $ in Algorithm \ref{alg:reward-learn}
to be our estimations for $\{\behaviorq\}_{h\in[H]}, \{\behpolicy\}_{h\in[H]}$ and $\{\behaviorv\}_{h\in[H]}$.
The second stage estimation in Line \ref{line:bell-mse} of Algorithm \ref{alg:reward-learn} now turns into a ridge regression for the Mean Squared Bellman Error, with $\rho(\phi\cdot w)$ being $\|w\|_2^2$, \begin{equation}\label{eq:ridge-reg}
    \widehat{w}_h = \operatorname{argmin}_w  \bigg\{ \sum_{i=1}^n \bigg(\phi(\thissai)\cdot w + \gamma\cdot \learnednextv(\nextstatei) - \learnedq(\thissai)\bigg)^2 + \lambda \|w\|^2_2 \bigg\}. 
\end{equation}
Note that optimization \eqref{eq:ridge-reg} has a closed form solution, \begin{align}\label{eq:ridge-sol}
    \widehat{w}_h = (\Lambda_h + \lambda I)^{-1} \bigg( &\sum_{i=1}^n \phi(\thissai)\big(\learnedq(\thissai) - \gamma\cdot\learnednextv(\nextstatei)\big)\bigg),\\
    \text{where }\Lambda_h = &\sum_{i=1}^n \phi(\thissai)\phi(\thissai)^\top,
\end{align}
and we set $\widehat{r}(\thissa) = \phi(\thissa)\cdot \widehat{w}_h$. We also make the following assumption on the model class $\Theta$ and the feature function.




\begin{assumption}[\textbf{Regular Conditions}]\label{ass:regular}
    We assume that: 
        (i) For all $\theta \in \Theta$, we have $\|\theta\|_2 \leq H\sqrt{d}$; for reward $r_h = \phi\cdot w_h$, we assume $\|w_h\|_2\leq \sqrt{d}$.
        (ii) For all $(\thissa)\in \Scal\times \Acal$, $\|\phi(\thissa)\|_2 \leq 1$.
        (iii) For all $n>0$,  $\log N(\Theta,\|\cdot\|_\infty,1/n) \leq c\cdot d\log n$ for some absolute constant $c$.
\end{assumption}

We are now prepared to present our main result:
\begin{theorem}[\textbf{Reward Estimation for Linear Model MDP}]\label{thm:reward-est}
    With Assumption \ref{ass:funct-approx}, \ref{ass:regular}, the  estimation of our reward function holds with probability  $1-\delta$ for all $(s,a) \in \Scal \times \Acal$ and all $\lambda >0$,
    \begin{align}
        &|r_h(s,a) - \widehat{r}_h(s,a) |\nonumber \\
        &\quad \leq \|\phi(s,a)\|_{(\Lambda_h+\lambda I)^{-1}}\cdot {\Ocal}\bigg(\sqrt{\lambda d}+ (1+\gamma)\cdot{He^{H}}\cdot |\Acal|\cdot d\sqrt{\log\big({nH}/{\lambda\delta}\big)}\bigg).
    \end{align}
\end{theorem}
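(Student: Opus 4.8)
The plan is to bound $r_h(s,a)-\widehat r_h(s,a)=\phi(s,a)^\top(w_h-\widehat w_h)$ by Cauchy--Schwarz in the $(\Lambda_h+\lambda I)^{-1}$-norm and then control the residual vector term by term, using the MLE guarantee of Theorem~\ref{thm:emp-mle-guarantee} for the estimation error and a self-normalized concentration inequality for the transition noise. Starting from the closed form \eqref{eq:ridge-sol} and the discounted Bellman equation \eqref{eq:discounted-bellman}, which gives $r_h(\thissa)=\behaviorq(\thissa)-\gamma\,\P_h V^{\pi_b,\gamma}_{h+1}(\thissa)=\phi(\thissa)^\top w_h$, I would write
\begin{equation*}
 (\Lambda_h+\lambda I)(\widehat w_h-w_h)=\sum_{i=1}^n\phi(\thissai)\,(A_i+B_i+C_i)-\lambda w_h,
\end{equation*}
where $A_i=\learnedq(\thissai)-\behaviorq(\thissai)$ is the MLE error of the action value, $B_i=-\gamma\big(\learnednextv(\nextstatei)-V^{\pi_b,\gamma}_{h+1}(\nextstatei)\big)$ is the estimation error of the ex-ante value, and $C_i=-\gamma\big(V^{\pi_b,\gamma}_{h+1}(\nextstatei)-\P_h V^{\pi_b,\gamma}_{h+1}(\thissai)\big)$ is the conditionally zero-mean transition noise. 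By Cauchy--Schwarz and the triangle inequality, $|r_h(s,a)-\widehat r_h(s,a)|$ is then at most $\|\phi(s,a)\|_{(\Lambda_h+\lambda I)^{-1}}$ times $\|\lambda w_h\|_{(\Lambda_h+\lambda I)^{-1}}+\big\|\sum_i\phi(\thissai)A_i\big\|_{(\Lambda_h+\lambda I)^{-1}}+\big\|\sum_i\phi(\thissai)B_i\big\|_{(\Lambda_h+\lambda I)^{-1}}+\big\|\sum_i\phi(\thissai)C_i\big\|_{(\Lambda_h+\lambda I)^{-1}}$, and it remains to bound these four pieces.

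For the regularization term, $\|\lambda w_h\|_{(\Lambda_h+\lambda I)^{-1}}^2=\lambda^2 w_h^\top(\Lambda_h+\lambda I)^{-1}w_h\le\lambda\|w_h\|_2^2\le\lambda d$ by Assumption~\ref{ass:regular}(i), which is the $\sqrt{\lambda d}$ term. For the two estimation-error sums I would use the deterministic inequality $\big\|\sum_i\phi(\thissai)x_i\big\|_{(\Lambda_h+\lambda I)^{-1}}\le\big(\sum_i\|\phi(\thissai)\|_{(\Lambda_h+\lambda I)^{-1}}^2\big)^{1/2}\big(\sum_i x_i^2\big)^{1/2}\le\sqrt d\cdot\big(\sum_i x_i^2\big)^{1/2}$, where the trace bound $\sum_i\|\phi(\thissai)\|_{(\Lambda_h+\lambda I)^{-1}}^2=\tr\big((\Lambda_h+\lambda I)^{-1}\Lambda_h\big)\le d$ is used. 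Then $\sum_i A_i^2\le n\,\E_{\Dcal_h}\big[\|\learnedq(\thisstate,\cdot)-\behaviorq(\thisstate,\cdot)\|_1^2\big]$, which Theorem~\ref{thm:emp-mle-guarantee} together with $\log N(\Theta,\|\cdot\|_\infty,1/n)\le cd\log n$ (Assumption~\ref{ass:regular}(iii)) bounds by $\Ocal\big(H^2e^{2H}|\Acal|^2 d\log(nH/\delta)\big)$; hence the $A$-sum is $\Ocal\big(He^H|\Acal|\,d\sqrt{\log(nH/\delta)}\big)$ — the factor $d$ in the statement arising precisely from multiplying the $\sqrt d$ of the trace bound by the $\sqrt d$ hidden in the log-covering number. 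The $B$-sum is handled the same way after first expanding $\learnednextv(s)-V^{\pi_b,\gamma}_{h+1}(s)=\langle\widehat Q_{h+1}(s,\cdot)-Q^{\pi_b,\gamma}_{h+1}(s,\cdot),\learnednextpolicy(\cdot\mid s)\rangle+\langle Q^{\pi_b,\gamma}_{h+1}(s,\cdot),\learnednextpolicy(\cdot\mid s)-\pi_{b,h+1}(\cdot\mid s)\rangle$, bounding it pointwise by $\|\widehat Q_{h+1}(s,\cdot)-Q^{\pi_b,\gamma}_{h+1}(s,\cdot)\|_\infty+\Ocal(H)\|\learnednextpolicy(\cdot\mid s)-\pi_{b,h+1}(\cdot\mid s)\|_1$, and invoking both bounds of Theorem~\ref{thm:emp-mle-guarantee} at step $h+1$; this gives $\gamma\cdot\Ocal\big(He^H|\Acal|\,d\sqrt{\log(nH/\delta)}\big)$.

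For the transition-noise sum, $C_i$ is a martingale-difference sequence with respect to the natural filtration of the $n$ i.i.d.\ trajectories, since $\E[V^{\pi_b,\gamma}_{h+1}(\nextstatei)\mid\thissai]=\P_h V^{\pi_b,\gamma}_{h+1}(\thissai)$ and $V^{\pi_b,\gamma}_{h+1}$ is deterministic, and it is bounded by $\Ocal(\gamma H)$; a self-normalized (Freedman/Abbasi--Yadkori-type) concentration inequality, standard in linear RL (cf.\ \cite{jin2021pessimism}), then gives $\big\|\sum_i\phi(\thissai)C_i\big\|_{(\Lambda_h+\lambda I)^{-1}}^2\le\Ocal\big(\gamma^2H^2\log(\det(\Lambda_h+\lambda I)/(\lambda^d\delta))\big)$, and since $\|\phi\|_2\le 1$ forces $\det(\Lambda_h+\lambda I)\le(\lambda+n)^d$, this is $\Ocal\big(\gamma H\sqrt{d\log(nH/\lambda\delta)}\big)$, which is dominated by the $A$- and $B$-sums. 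Collecting the four bounds, taking a union bound over the $H$ steps, and grouping the $\gamma$-dependencies yields exactly $\|\phi(s,a)\|_{(\Lambda_h+\lambda I)^{-1}}\cdot\Ocal\big(\sqrt{\lambda d}+(1+\gamma)He^H|\Acal|d\sqrt{\log(nH/\lambda\delta)}\big)$.

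I expect the main obstacle to be the $B$-sum. Because $\learnednextv$ is assembled from the data at steps $\ge h+1$ and is therefore correlated with each $\nextstatei$, the full residual $A_i+B_i+C_i$ is \emph{not} a martingale-difference sequence, so $B_i$ cannot be folded into the clean self-normalized bound and one is forced into the cruder Cauchy--Schwarz/trace estimate; this estimate closes only because Theorem~\ref{thm:emp-mle-guarantee} is stated in terms of the empirical dataset distribution $\E_{\Dcal_h}$ — exactly the law of the points $\{\thissai\}_i$ and $\{\nextstatei\}_i$ that enter the ridge regression — so no out-of-sample generalization is required. The remaining care is bookkeeping: carrying the $e^H$ factor (intrinsic to inverting the softmax in \eqref{eq:behavior-policy}, already present in Theorem~\ref{thm:emp-mle-guarantee}) through the chain, and checking that composing the trace bound with the log-covering number costs only one extra $\sqrt d$.
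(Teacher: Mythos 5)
Your proposal is correct and follows essentially the same route as the paper's proof: the same decomposition of the ridge-regression residual into the regularization term, the $\widehat{Q}_h-\behaviorq$ error, the $\widehat{V}_{h+1}-V^{\pi_b,\gamma}_{h+1}$ error, and the conditionally zero-mean transition noise, with the first three controlled via $\sqrt{\lambda d}$ and the empirical MLE guarantee of Theorem \ref{thm:emp-mle-guarantee} combined with the trace bound $\tr\big((\Lambda_h+\lambda I)^{-1}\Lambda_h\big)\le d$, and the last via a self-normalized concentration inequality. Your observation that the $\widehat{V}_{h+1}$ term cannot be folded into the martingale bound (forcing the cruder Cauchy--Schwarz/trace estimate, which closes precisely because Theorem \ref{thm:emp-mle-guarantee} is stated on the empirical distribution) is exactly the structural point the paper's proof relies on.
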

\begin{proof}
See Appendix \ref{sec:prove-reward-est}  for details.
\end{proof}
Note that the error can be bounded by the product of two terms, the elliptical potential term $\|\phi(s,a)\|_{(\Lambda+\lambda\cdot I)^{-1}}$ and the norm of a self normalizing term of scale $O(He^{H}\cdot|\Acal|\cdot d\sqrt{\log(n/\delta)})$. Here the exponential dependency $\Ocal(e^{H}|\Acal|)$ comes from estimating $\behaviorq$ with logistic regression and also occurs in logistic bandit \citep{zhu2023principled,fei2020risk}. It remains an open question if this additional factor can be improved, and we leave it for future work.
\begin{remark}
We remark that except for the exponential term in $H$, Theorem \ref{thm:reward-est} \textit{almost} matches the result when doing linear regression on an observable reward dataset, in which case error of estimation is of scale $\tilde{\Ocal}(\|\phi(s, a)\|_{(\Lambda+\lambda I)^{-1}}\cdot dH)$ \citep{ding2021provably,jin2021pessimism}. When the human behavior policy has sufficient coverage, i.e. the minimal eigenvalue of $\E_{\pi_b}[\phi\phi^\top]$, $\sigma_{\min}(\E_{\pi_b}[\phi\phi^\top])>c>0$, we have $\|\phi(s,a)\|_{(\Lambda_h+\lambda I)^{-1}} = \Ocal(n^{-1/2})$ holds for all $(s,a)\in\Scal\times\Acal$ \citep{duan2020minimax} and $\|r_h-\widehat{r}_h\|_\infty = \Ocal(n^{-1/2})$. However, even without strong assumptions such as sufficient coverage, we can still prove a $\Ocal(n^{-1/2})$ suboptimality with pessimistic value iteration. 
\end{remark}

\section{Policy Learning from Dynamic Choices via Pessimistic Value Iteration}
In this section, we describe the pessimistic value iteration algorithm, which minus a penalty function $\Gamma_h: \Scal\times\Acal\rightarrow\R$ from the value function when choosing the best action. Pessimism is achieved when $\Gamma_h$ is a \textit{uncertainty quantifier} for our learned value functions $\{\tilde{V}_h\}_{h\in[H]}$ , i.e. \begin{align}\label{eq:uncertainty-quant}
    \big|\big(\widehat{r}_h + \widetilde{\P}_h\widetilde{V}_{h+1}\big)(s,a) - \big(r_h + \P_h\widetilde{V}_{h+1}\big)(s,a)   \big| \leq \Gamma_h(s,a) \text{ for all }(s,a)\in\Scal\times\Acal 
\end{align}
with high probability.
Then we use  $\{\Gamma_h\}_{h\in[H]}$  as the penalty function for pessimistic
planning, which leads to a conservative estimation of the value function. We formally describe our
planning method in Algorithm \ref{alg:pess-value-iter}.
However, when doing pessimistic value iteration with $\{\widehat{r}_h\}_{h\in[H]}$ learned from human feedback, it is more difficult to design uncertainty quantifiers in \eqref{eq:uncertainty-quant}, since the estimation error from reward learning is inherited in pessimistic planning. In Section \ref{sec:subopt-linear}, we propose an efficient uncertainty quantifier and prove that with pessimistic value iteration, Algorithm \ref{alg:pess-value-iter} can achieve a $\Ocal(n^{-1/2})$ suboptimality gap even without any observable reward signal, which matches current standard results in pessimistic value iteration such as \citep{jin2021pessimism,uehara2021pessimistic, uehara2021representation}.

\begin{algorithm}[H]\label{alg:pess-vi}
   \caption{DCPPO: Pessimistic Value iteration }
     \textbf{Require:} Surrogate reward $\{\widehat{r}_h(s_h,a_h) \}_{h\in[H]}$ learned in Algorithm \ref{alg:reward-learn}, collected dataset $\{(s_h^i,a_h^i)\}_{i\in[n], h\in[H]}$, parameter $\beta$, penalty  .\\
     \textbf{Initialization:} Set $\widetilde{V}_{H+1}(s_{H+1}) = 0$.
   \begin{algorithmic}[1]\label{alg:pess-value-iter}
   	 \FOR{step $h=H, \dots,1$}
   	\STATE Set $\widetilde{\P}_h\widetilde{V}_{h+1}(s_h,a_h) = \operatorname{argmin}_{f} \sum_{i\in[n]} \big(f(s_h^i,a_h^i) - \widetilde{V}_{h+1}(s_{h+1})\big)^2 + \lambda\cdot\rho(f) $.
   	\STATE Construct $\Gamma_h(s_h,a_h)$ based on $\Dcal$.
   	\STATE Set $\widetilde{Q}_h(s_h,a_h) = \min\big\{\widehat{r}_h(s_h,a_h) + \widetilde{P}_h\widetilde{V}_{h+1}(s_h,a_h) - \Gamma_h(s_h,a_h), H-h+1\big\}_{+}$.

   \STATE Set $\widetilde{\pi}_h(\cdot\mid s_h) = \operatorname{argmax} \langle \widetilde{Q}_h(s_h,\cdot), \pi_h(\cdot\mid s_h)\rangle$.
   \STATE Set $\widetilde{V}_h(s_h) = \langle \tilde{Q}_h(s_h,\cdot), \Tilde{\pi}_h(\cdot\mid s_h)\rangle_{\Acal}$.
   	 \ENDFOR\\
    \STATE \textbf{Output:} $\{\widetilde{\pi}_h\}_{h\in[H]}$.
   \end{algorithmic}
\end{algorithm}
\subsection{Suboptimality Gap of Pessimitic Optimal Policy}\label{sec:subopt-linear}
For the linear model class defined in Section \ref{sec:linear MDP}, we assume that we can capture the conditional expectation of value function in the next step with the known feature $\phi$. In formal words, we make the following assumption.
\begin{assumption}[\textbf{Linear MDP}]\label{ass:linear-transition}
    For the underlying MDP, we assume that for every $V_{h+1}: \Scal\rightarrow [0, H-h]$, there exists $u_h \in \R^d$ such that $$
    \expectV(s,a) = \phi(s,a)\cdot u_h
    $$
    for all $(s,a)\in\Scal\times\Acal$. We also assume that $\|u_h\| \leq (H-h+1)\cdot\sqrt{d}$ for all $h\in[H]$.
\end{assumption}
Note that this assumption is directly satisfied by linear MDP class \citep{jin2021pessimism,jin2020provably,yang2019sampleoptimal}. For linear model MDP defined in Section \ref{sec:linear MDP}, it suffices to have the parameter set $\Theta$ being closed under subtraction, i.e. if $x,y\in\Theta$ then $x-y\in\Theta$.
Meanwhile, we construct $\Gamma_h$ in Algorithm \ref{alg:pess-value-iter} based on dataset $\Dcal$ as \begin{equation}\label{eq:penalty-linear}
    \Gamma_h(s,a) = \beta\cdot \big(\phi(s,a)^\top (\Lambda_h+\lambda I)^{-1}\phi(s,a)\big)^{1/2}
\end{equation}
for every $h\in[H]$. Here $\Lambda_h$ is defined in \eqref{eq:ridge-sol}.
To establish suboptimality for Algorithm \ref{alg:pess-value-iter}, we assume that  the trajectory induced by $\pi^*$
is “covered” by $\Dcal$ sufficiently well. 
\begin{assumption}[\textbf{Single-Policy Coverage}]\label{ass:coverage-ass}
    Suppose there exists an absolute constant $c^{\dagger}>0$ such that $$
    \Lambda_h \geq c^{\dagger} \cdot n \cdot \mathbb{E}_{\pi^*}\left[\phi\left(s_h, a_h\right) \phi\left(s_h, a_h\right)^{\top} \right]
    $$ holds with probability at least $1-\delta/2$.
\end{assumption}
We remark that Assumption \ref{ass:coverage-ass} only assumes that human behavior policy can cover the optimal policy and is, therefore, weaker than assuming a well-explored dataset, or sufficient coverage in RL and econometrics literature \citep{duan2020minimax, jin2021pessimism,aguirregabiria2010dynamic}.  With this assumption, we prove the following theorem.
\begin{theorem}[\textbf{Suboptimality Gap for DCPPO}]\label{thm:pess-vi-linear-cover}
Suppose Assumption \ref{ass:identify}, \ref{ass:regular}, \ref{ass:linear-transition},\ref{ass:coverage-ass} holds. With $\lambda = 1$  and $\beta = \Ocal({He^{H}}\cdot|\Acal|\cdot d\sqrt{\log\big({nH}/{\delta}\big)})$, we have (i) $\Gamma_h$ defined in \eqref{eq:penalty-linear} being uncertainty quantifiers, and (ii)
$$
    \operatorname{SubOpt}\big(\{\tilde{\pi}_h\}_{h\in[H]}\big) \leq c\cdot (1+\gamma)|\Acal|d^{3/2}H^2e^Hn^{-1/2}\sqrt{\xi}
$$
holds for Algorithm \ref{alg:pess-value-iter} with probability at least $1-\delta$ , here $\xi = \log(dHn/\delta)$. In particular, if $\operatorname{rank}(\Sigma_h)\leq r$ at each step $h\in[H]$, then $$
\operatorname{SubOpt}\big(\{\tilde{\pi}_h\}_{h\in[H]}\big) \leq c\cdot (1+\gamma)|\Acal|r^{1/2}dH^2e^Hn^{-1/2}\sqrt{\xi},
$$
here $\Sigma_h = \E_{\pi_b}[\phi(s_h,a_h)\phi(s_h,a_h)^\top]$.
\end{theorem}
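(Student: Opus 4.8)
The plan is to establish the two parts in order, reducing (ii) to (i) by the standard pessimism argument for value iteration \citep{jin2021pessimism}. Set the model-evaluation error $\iota_h(s,a) := (r_h + \P_h \widetilde V_{h+1})(s,a) - \widetilde Q_h(s,a)$. The extended value-difference identity gives
\begin{align*}
\operatorname{SubOpt}(\widetilde\pi) = \sum_{h=1}^{H}\E_{\pi^*}\big[\iota_h(s_h,a_h)\big] - \sum_{h=1}^{H}\E_{\widetilde\pi}\big[\iota_h(s_h,a_h)\big] + \sum_{h=1}^{H}\E_{\widetilde\pi}\Big[\big\langle \widetilde Q_h(s_h,\cdot),\, \pi^*_h(\cdot\mid s_h) - \widetilde\pi_h(\cdot\mid s_h)\big\rangle\Big].
\end{align*}
The last sum is nonpositive since $\widetilde\pi_h$ is greedy for $\widetilde Q_h$. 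On the event that every $\Gamma_h$ is a valid uncertainty quantifier, the truncation $\widetilde Q_h = \min\{\widehat r_h + \widetilde\P_h\widetilde V_{h+1} - \Gamma_h,\, H-h+1\}_+$ together with $r_h + \P_h\widetilde V_{h+1}\in[0,H-h+1]$ forces $0\le \iota_h \le 2\Gamma_h$ pointwise; hence the second sum is nonnegative, the first is at most $2\sum_h \E_{\pi^*}[\Gamma_h(s_h,a_h)]$, and $\operatorname{SubOpt}(\widetilde\pi)\le 2\sum_{h=1}^{H}\E_{\pi^*}[\Gamma_h(s_h,a_h)]$. It then suffices to (a) verify \eqref{eq:uncertainty-quant} for the stated $\beta$ and (b) bound $\sum_h \E_{\pi^*}[\Gamma_h]$ under the coverage assumption.

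For (a), I would split
\begin{align*}
\big|(\widehat r_h + \widetilde\P_h\widetilde V_{h+1})(s,a) - (r_h + \P_h\widetilde V_{h+1})(s,a)\big| \le |\widehat r_h(s,a) - r_h(s,a)| + |\widetilde\P_h\widetilde V_{h+1}(s,a) - \P_h\widetilde V_{h+1}(s,a)|.
\end{align*}
The first term is exactly controlled by Theorem \ref{thm:reward-est}: it is at most $\|\phi(s,a)\|_{(\Lambda_h+\lambda I)^{-1}}$ times a scalar of order $\sqrt{\lambda d}+(1+\gamma)He^H|\Acal|d\sqrt{\log(nH/\lambda\delta)}$. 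For the second term, the truncation in Algorithm \ref{alg:pess-value-iter} makes $\widetilde V_{h+1}$ a function into $[0,H-h]$, so Assumption \ref{ass:linear-transition} gives $\P_h\widetilde V_{h+1}(s,a)=\phi(s,a)^\top u_h$ with $\|u_h\|\le(H-h+1)\sqrt d$; the step-2 ridge regression of Algorithm \ref{alg:pess-value-iter} is thus correctly specified and the standard self-normalized (elliptical-potential) bound applies. The subtlety is that $\widetilde V_{h+1}$ is data-dependent: I would resolve it by a union bound over an $\epsilon$-net of the induced class of pessimistic truncated linear value functions, whose log-covering number is $\Ocal(d\log n)$ by Assumption \ref{ass:regular}(iii), giving $|\widetilde\P_h\widetilde V_{h+1}-\P_h\widetilde V_{h+1}|\le\Ocal(Hd\sqrt{\log(nH/\delta)})\cdot\|\phi(s,a)\|_{(\Lambda_h+\lambda I)^{-1}}$. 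Taking $\lambda=1$ and $\beta$ as in the statement — large enough to dominate both scalar factors, with the $(1+\gamma)$ carried over from the reward term — shows $\Gamma_h$ in \eqref{eq:penalty-linear} is an uncertainty quantifier for every $h\in[H]$, on the intersection of the event of Theorem \ref{thm:reward-est} with the regression event, of probability at least $1-\delta$. This decoupling of $\widetilde V_{h+1}$ from the step-$h$ design matrix $\Lambda_h$ (built from overlapping trajectories), together with keeping the $e^H|\Acal|$ blow-up inherited from the logistic-MLE stage additive inside $\beta$ rather than letting it multiply the covering terms, is the main obstacle.

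For (b), write $\E_{\pi^*}[\Gamma_h(s_h,a_h)]=\beta\,\E_{\pi^*}\big[\|\phi(s_h,a_h)\|_{(\Lambda_h+\lambda I)^{-1}}\big]$. On the event of Assumption \ref{ass:coverage-ass}, $\Lambda_h+\lambda I\succeq\Lambda_h\succeq c^\dagger n\,\Sigma^*_h$ with $\Sigma^*_h:=\E_{\pi^*}[\phi(s_h,a_h)\phi(s_h,a_h)^\top]$; since $\phi(s_h,a_h)$ lies in $\operatorname{range}(\Sigma^*_h)$ almost surely under $\pi^*$, the quadratic form satisfies $\|\phi(s,a)\|^2_{(\Lambda_h+\lambda I)^{-1}}\le(c^\dagger n)^{-1}\phi(s,a)^\top(\Sigma^*_h)^{\dagger}\phi(s,a)$, and by Jensen together with $\E_{\pi^*}[\phi^\top(\Sigma^*_h)^{\dagger}\phi]=\tr((\Sigma^*_h)^{\dagger}\Sigma^*_h)=\operatorname{rank}(\Sigma^*_h)$,
\begin{align*}
\E_{\pi^*}\big[\|\phi(s_h,a_h)\|_{(\Lambda_h+\lambda I)^{-1}}\big]\le\sqrt{\frac{\operatorname{rank}(\Sigma^*_h)}{c^\dagger n}}\le\sqrt{\frac{d}{c^\dagger n}}.
\end{align*}
Summing over $h\in[H]$ and substituting $\beta=\Ocal(He^H|\Acal|d\sqrt{\log(nH/\delta)})$ (with the $(1+\gamma)$ from the reward error) yields $\operatorname{SubOpt}(\widetilde\pi)=\Ocal\big((1+\gamma)|\Acal|d^{3/2}H^2e^Hn^{-1/2}\sqrt{\xi}\big)$, $\xi=\log(dHn/\delta)$. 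For the refinement, $\Lambda_h\succeq c^\dagger n\Sigma^*_h$ forces $\operatorname{range}(\Sigma^*_h)\subseteq\operatorname{range}(\Lambda_h)\subseteq\operatorname{range}(\Sigma_h)$ (every sampled feature lies in $\operatorname{range}(\Sigma_h)$), so $\operatorname{rank}(\Sigma^*_h)\le\operatorname{rank}(\Sigma_h)\le r$ and $d$ may be replaced by $r$ in the last display, giving the second bound.
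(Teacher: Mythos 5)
Your proposal is correct and follows essentially the same route as the paper: reduce $\operatorname{SubOpt}$ to $2\sum_h\E_{\pi^*}[\Gamma_h]$ via the standard pessimism argument, verify the uncertainty quantifier by splitting the reward error (Theorem \ref{thm:reward-est}) from the transition-regression error and handling the data-dependence of $\widetilde V_{h+1}$ with a uniform covering bound over the truncated pessimistic value class, then control $\E_{\pi^*}[\|\phi\|_{(\Lambda_h+\lambda I)^{-1}}]$ under Assumption \ref{ass:coverage-ass}. The only substantive deviation is in step (b), where you use a pseudo-inverse/range argument in place of the paper's eigenvalue sum $\sum_j\lambda_{h,j}/(1+c^\dagger n\lambda_{h,j})$; both give $\sqrt{d/(c^\dagger n)}$, and your range-inclusion chain $\operatorname{range}(\Sigma_h^*)\subseteq\operatorname{range}(\Lambda_h)\subseteq\operatorname{range}(\Sigma_h)$ actually justifies the rank-$r$ refinement with $\Sigma_h=\E_{\pi_b}[\phi\phi^\top]$ as stated, whereas the paper's proof silently switches to the rank of $\E_{\pi^*}[\phi\phi^\top]$.
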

\begin{proof}
    See Appendix \ref{sec:prove-pess-vi} for detailed proof.
\end{proof}

\paragraph{Remark.} It is worth highlighting that Theorem \ref{thm:pess-vi-linear-cover} nearly matches the standard result for pessimistic offline RL with observable rewards in terms of the dependence on data size and distribution, up to a constant factor of $\Ocal(|\Acal|e^H)$ \citep{jin2020provably,uehara2021pessimistic}, where their suboptimality is of $\tilde{\Ocal}(dH^2n^{-1/2})$.  Therefore, Algorithm \ref{alg:reward-learn} and \ref{alg:pess-value-iter} \textit{almost} matches the suboptimality gap of standard pessimism planning with an observable reward, except for a $\Ocal(e^H)$ factor inherited from reward estimation.



\section{DCPPO for Reproducing Kernel Hilbert Space}\label{sec: RKHS-case}

In this section, we assume the model class $\Mcal = \{\Mcal_h\}_{h\in[H]}$ are subsets of a Reproducing Kernel Hilbert Space (RKHS). For notations simplicity, we let $z=(s, a)$ denote the state-action pair and denote $\mathcal{Z}=\mathcal{S} \times \mathcal{A}$ for any $h \in[H]$. We view $\mathcal{Z}$ as a compact subset of $\mathbb{R}^d$ where the dimension $d$ is fixed. Let $\mathcal{H}$ be an RKHS of functions on $\mathcal{Z}$ with kernel function $K: \mathcal{Z} \times \mathcal{Z} \rightarrow \mathbb{R}$, inner product $\langle\cdot, \cdot\rangle: \mathcal{H} \times \mathcal{H} \rightarrow \mathbb{R}$ and RKHS norm $\|\cdot\|_{\mathcal{H}}: \mathcal{H} \rightarrow \mathbb{R}$. By definition of RKHS, there exists a feature mapping $\phi: \mathcal{Z} \rightarrow \mathcal{H}$ such that $f(z)=\langle f, \phi(z)\rangle_{\mathcal{H}}$ for all $f \in \mathcal{H}$ and all $z \in \mathcal{Z}$. Also, the kernel function admits the feature representation $K(x, y)=\langle\phi(x), \phi(y)\rangle_{\mathcal{H}}$ for any $x, y \in \mathcal{H}$. We assume that the kernel function is uniformly bounded as $\sup _{z \in \mathcal{Z}} K(z, z)<\infty$. For notation simplicity, we assume that the discount factor $\gamma = 1$.
Let $\mathcal{L}^2(\mathcal{Z})$ be the space of square-integrable functions on $\mathcal{Z}$  and let $\langle\cdot, \cdot\rangle_{\mathcal{L}^2}$ be the inner product for $\mathcal{L}^2(\mathcal{Z})$. We define the Mercer operater $T_K: \mathcal{L}^2(\mathcal{Z}) \rightarrow \mathcal{L}^2(\mathcal{Z})$,
\begin{equation}\label{eq:mercer-integral}
    T_K f(z)=\int_{\mathcal{Z}} K\left(z, z^{\prime}\right) \cdot f\left(z^{\prime}\right) \mathrm{d} z^{\prime}, \quad \forall f \in \mathcal{L}^2(\mathcal{Z}).
\end{equation}
By Mercer's Theorem \citep{Steinwart2008SupportVM},  there exists a countable and non-increasing sequence of non-negative eigenvalues $\left\{\sigma_i\right\}_{i \geq 1}$ for the operator $T_K$, with associated orthogonal eigenfunctions $\left\{\psi_i\right\}_{i>1}$ .   
In what follows, we assume the eigenvalue of the integral operator defined in \ref{eq:mercer-integral} has a certain decay condition.
\begin{assumption}[Eigenvalue Decay of $\mathcal{H}$]\label{ass:eig-decay}
Let $\left\{\sigma_j\right\}_{j \geq 1}$ be the eigenvalues induced by the integral opretaor $T_K$ defined in Equation \eqref{eq:mercer-integral} and $\left\{\psi_j\right\}_{j \geq 1}$ be the associated eigenfunctions. We assume that $\left\{\sigma_j\right\}_{j \geq 1}$ satisfies one of the following conditions for some constant $\mu>0$.
\begin{itemize}
    \item[(i)] $\mu$-finite spectrum: $\sigma_j=0$ for all $j>\mu$, where $\mu$ is a positive integer.
    \item[(ii)]  $\mu$-exponential decay: there exists some constants $C_1, C_2>0, \tau \in[0,1 / 2)$ and $C_\psi>0$ such that $\sigma_j \leq C_1 \cdot \exp \left(-C_2 \cdot j^\mu\right)$ and $\sup _{z \in \mathcal{Z}} \sigma_j^\tau \cdot\left|\psi_j(z)\right| \leq C_\psi$ for all $j \geq 1$.
    \item[(iii)]  $\mu$-polynomial decay: there exists some constants $C_1>0, \tau \in[0,1 / 2)$ and $C_\psi>0$ such that $\sigma_j \leq C_1 \cdot j^{-\mu}$ and $\sup _{z \in \mathcal{Z}} \sigma_j^\tau \cdot\left|\psi_j(z)\right| \leq C_\psi$ for all $j \geq 1$, where $\mu>1$.
\end{itemize}
\end{assumption}
For a detailed discussion of eigenvalue decay in RKHS, we refer the readers to Section 4.1 of \cite{yang2020provably}. 
\subsection{Gurantee for RKHS}
In RKHS case, our first step MLE in \eqref{eq:mle} turns into a kernel logistic regression, \begin{equation}\label{eq:log-reg}
    \bar{Q}_h=\operatorname{argmin}_{Q\in \Hcal}\frac{1}{n} \sum_{i=1}^nQ(\thissai)- \log\bigg(\sum_{a'\in\Acal} \exp(Q(s,a'))\bigg).
\end{equation}
Line \ref{line:bell-mse} in Algorithm \ref{alg:reward-learn} now turns into a kernel ridge regression for the Mean Squared Bellman Error, with $\rho(f)$ being $\|f\|_\Hcal^2$, \begin{equation}\label{eq:kernel-ridge-reg}
    \widehat{r}_h = \operatorname{argmin}_{r\in\Hcal}  \bigg\{ \sum_{i=1}^n \bigg(r(\thissai) + \gamma\cdot \learnednextv(\nextstatei) - \learnedq(\thissai)\bigg)^2 + \lambda \|r\|^2_\Hcal \bigg\}.
\end{equation}
  Following Representer's Theorem \citep{Steinwart2008SupportVM}, we have the following closed form solution $$
\widehat{r}_h(z) = k_h(z)^\top (K_h+\lambda\cdot I)^{-1}y_h,
$$
where we define the Gram matrix $K_h \in \mathbb{R}^{n \times n}$ and the function $k_h: \mathcal{Z} \rightarrow \mathbb{R}^{n}$ as
\begin{align}\label{eq:gram-mat}
    K_h=\big[K\big(z_h^i, z_h^{i^{\prime}}\big)\big]_{i, i^{\prime} \in [n]} \in \mathbb{R}^{n \times n}, \quad k_h(z)=\big[K\big(z_h^i, z\big)\big]_{i \in [n]}^{\top} \in \mathbb{R}^{n},
\end{align}
and the entry of the response vector $y_h \in \mathbb{R}^{n}$ corresponding to $i \in [n]$ is
$$
\left[y_h\right]_i=\widehat{Q}_h(\thissai)-\gamma\cdot\widehat{V}_{h+1}\left(s_{h+1}^i\right).
$$
Meanwhile, we also construct the uncertainty quantifier $\Gamma_h$ in Algorithm \ref{alg:pess-value-iter},\begin{equation}\label{eq:penalty-rkhs}
    \Gamma_h(z) = \beta\cdot\lambda^{-1/2}\cdot\big(K(z, z)-k_h(z)^{\top}\left(K_h+\lambda I\right)^{-1} k_h(z)\big)^{1 / 2}
\end{equation}
for all $z\in\Zcal$.
Parallel to Assumption \ref{ass:linear-transition}, we impose the following structural assumption for the kernel setting.
\begin{assumption}\label{ass:rkhs-retain}
     Let $R_r>0$ be some fixed constant and we define function class $\mathcal{Q}=\left\{f \in \mathcal{H}:\|f\|_{\mathcal{H}} \leq\right.$ $\left.HR_r \right\}$. We assume that $\P_hV_{h+1} \in \mathcal{Q}$ for any $V_{h+1}:\Scal\rightarrow[0,H]$. We also assume that $\|r\|_{\Hcal}\leq R_r$ for some constant $R_r>0$. We set the model class $\Mcal_h = \Qcal$ for all $h\in[H]$.
\end{assumption}
The above assumption states that the Bellman operator maps any bounded function into a bounded RKHS-norm ball, and holds for the special case of linear MDP \cite{jin2021pessimism}.

Besides the closeness assumption on the Bellman operator, we also define the maximal information gain \citep{srinivas2009gaussian} as a characterization of the complexity of $\mathcal{H}$ :
\begin{equation}\label{maximal information
gain}
    G(n, \lambda)=\sup \left\{1 / 2 \cdot \log \operatorname{det}\left(I+K_{\mathcal{C}} / \lambda\right): \mathcal{C} \subset \mathcal{Z},|\mathcal{C}| \leq n\right\}
\end{equation}
Here $K_{\mathcal{C}}$ is the Gram matrix for the set $\mathcal{C}$, defined similarly as Equation \eqref{eq:gram-mat}. In particular, when $\mathcal{H}$ has $\mu$-finite spectrum, $G(n, \lambda)=\mathcal{O}(\mu \cdot \log n)$ recovers the dimensionality of the linear space up to a logarithmic factor.

We are now ready to present our result for reward estimation in the RKHS case. 
\begin{theorem}[\textbf{Reward Estimation for RKHS}]\label{thm:reward-est-rkhs}
    For Algorithm \ref{alg:reward-learn} and \ref{alg:pess-value-iter}, with probability at least $1-\delta$, we have the following estimations of our reward function for all $z \in \Zcal \times \Acal$ and $\lambda > 1$,
    \begin{align*}
        &|r_h(z) - \widehat{r}_h(z) | \\
        &\quad \leq \|\phi(z)\|_{(\Lambda_h+\lambda \Ical_{\Hcal})^{-1}}\cdot \Ocal\bigg(H^2\cdot G\big(n, 1+1/n\big)+\lambda\cdot R_{r}^2+\zeta^2\bigg)^{1/2},
    \end{align*}
    where $$\zeta = \Ocal\bigg(\dimsamp\sqrt{ \log\big(H\cdot N(\Qcal, \|\cdot\|_{\infty},1/n )/\delta\big)}\cdot He^H \bigg).$$
    Here $\dimsamp$ is the sampling effective dimension. 
\end{theorem}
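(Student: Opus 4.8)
The plan is to run the argument behind the linear-model bound, Theorem~\ref{thm:reward-est}, entirely in the RKHS feature space. Write $\phi:\Zcal\to\Hcal$ for the feature map, $z_h^i=(s_h^i,a_h^i)$, and $\Lambda_h=\sum_{i=1}^n\phi(z_h^i)\phi(z_h^i)^\top$ (a bounded operator on $\Hcal$). By the representer theorem the estimator in \eqref{eq:kernel-ridge-reg} equals $\widehat r_h=(\Lambda_h+\lambda\Ical_{\Hcal})^{-1}\sum_{i=1}^n\phi(z_h^i)[y_h]_i$ with $\widehat r_h(z)=\langle\phi(z),\widehat r_h\rangle_{\Hcal}$, agreeing with the Gram-matrix form $k_h(z)^\top(K_h+\lambda I)^{-1}y_h$; the same push-through identity gives $\|\phi(z)\|^2_{(\Lambda_h+\lambda\Ical_{\Hcal})^{-1}}=\lambda^{-1}\big(K(z,z)-k_h(z)^\top(K_h+\lambda I)^{-1}k_h(z)\big)$, the quantity in the penalty \eqref{eq:penalty-rkhs}. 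Using $r_h\in\Hcal$ (Assumption~\ref{ass:rkhs-retain}, only the $\|r_h\|_{\Hcal}\le R_r$ part is needed here) and substituting $r_h(z_h^i)=Q_h^{\pi_b,\gamma}(z_h^i)-\P_h V_{h+1}^{\pi_b,\gamma}(z_h^i)$ via the discounted Bellman equation \eqref{eq:discounted-bellman} (with $\gamma=1$), one gets
$$\widehat r_h(z)-r_h(z)=\Big\langle\phi(z),\,(\Lambda_h+\lambda\Ical_{\Hcal})^{-1}\Big[\sum_{i=1}^n\phi(z_h^i)\,\varepsilon_i-\lambda\, r_h\Big]\Big\rangle_{\Hcal},$$
where the residual splits as $\varepsilon_i=A_i+D_i+\eta_i$ with $A_i=(\learnedq-Q_h^{\pi_b,\gamma})(z_h^i)$, $D_i=(V_{h+1}^{\pi_b,\gamma}-\learnednextv)(s_{h+1}^i)$, and $\eta_i=\P_h V_{h+1}^{\pi_b,\gamma}(z_h^i)-V_{h+1}^{\pi_b,\gamma}(s_{h+1}^i)$. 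Routing through the \emph{true} behavior value $V_{h+1}^{\pi_b,\gamma}$ is deliberate: it makes $\eta_i$ a genuine martingale difference ($\E[\eta_i\mid z_h^i,\text{past}]=0$, $|\eta_i|\le H$) involving only a fixed function, so no covering over the (possibly non-totally-bounded) ball $\Qcal$ is needed for it, whereas $A_i,D_i$ are controlled deterministically.

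Applying Cauchy--Schwarz in the $(\Lambda_h+\lambda\Ical_{\Hcal})^{-1}$-geometry, with $\|\cdot\|_{*}:=\|\cdot\|_{(\Lambda_h+\lambda\Ical_{\Hcal})^{-1}}$,
$$|\widehat r_h(z)-r_h(z)|\le\|\phi(z)\|_{(\Lambda_h+\lambda\Ical_{\Hcal})^{-1}}\Big(\Big\|\sum_i\phi(z_h^i)A_i\Big\|_{*}+\Big\|\sum_i\phi(z_h^i)D_i\Big\|_{*}+\Big\|\sum_i\phi(z_h^i)\eta_i\Big\|_{*}+\lambda\|r_h\|_{*}\Big).$$
For the first two terms I would use the identity $\Phi(\Phi^\top\Phi+\lambda I)^{-1}\Phi^\top=K_h(K_h+\lambda I)^{-1}\preceq I$ (with $\Phi$ the stacked feature operator), which gives $\|\sum_i\phi(z_h^i)A_i\|_{*}^2\le\sum_iA_i^2$ and likewise for $D_i$. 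Then bound $|A_i|\le\|\learnedq(s_h^i,\cdot)-Q_h^{\pi_b,\gamma}(s_h^i,\cdot)\|_1$ and, after expanding $V=\langle Q,\pi\rangle$ and using $Q$-values in $[0,H]$, $|D_i|\le\|\widehat Q_{h+1}(s_{h+1}^i,\cdot)-Q_{h+1}^{\pi_b,\gamma}(s_{h+1}^i,\cdot)\|_1+H\|\learnednextpolicy(\cdot\mid s_{h+1}^i)-\pi_{b,h+1}(\cdot\mid s_{h+1}^i)\|_1$; since $\{s_{h+1}^i\}_i$ are exactly the states of $\Dcal_{h+1}$, averaging and invoking Theorem~\ref{thm:emp-mle-guarantee} at steps $h$ and $h+1$ yields $\sum_iA_i^2+\sum_iD_i^2=\Ocal(\zeta^2)$ with $\zeta$ as in the statement — the $He^{H}$ from the exponential dependence of $\pi_{b,h}$ on $Q_h^{\pi_b,\gamma}$, and the $\log N(\Qcal,\|\cdot\|_\infty,1/n)$ together with the effective dimension $\dimsamp$ entering through the specialization $\Mcal_h=\Qcal$ of the MLE guarantee. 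For the bias term, $(\Lambda_h+\lambda\Ical_{\Hcal})^{-1}\preceq\lambda^{-1}\Ical_{\Hcal}$ and $\|r_h\|_{\Hcal}\le R_r$ give $\lambda\|r_h\|_{*}\le\sqrt{\lambda}\,R_r$, i.e. a $\lambda R_r^2$ contribution under the square root.

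The remaining term is the crux. Since $(\eta_i)$ is a bounded martingale difference sequence adapted to the natural filtration while $\Lambda_h$ is built from the same $z_h^i$, the self-normalized concentration inequality for Hilbert-space-valued martingales — the RKHS version used in \cite{yang2020provably}, applied with the standard virtual regularizer $1+1/n$ — bounds $\|\sum_i\phi(z_h^i)\eta_i\|_{*}^2=\Ocal\big(H^2\log\det(I+K_h/(1+1/n))+H^2\log(1/\delta)\big)=\Ocal\big(H^2\,G(n,1+1/n)+H^2\log(1/\delta)\big)$ via the definition of the maximal information gain $G(n,\lambda)$. Combining the four pieces, squaring (so $(\sqrt a+\sqrt b+\sqrt c+\sqrt d)^2\le 4(a+b+c+d)$), absorbing $\log(1/\delta)$ into $G$, and taking a union bound over $h\in[H]$ (which only inflates logarithmic factors already present in $\zeta$ and $G$) gives
$$|r_h(z)-\widehat r_h(z)|\le\|\phi(z)\|_{(\Lambda_h+\lambda\Ical_{\Hcal})^{-1}}\cdot\Ocal\big(H^2\,G(n,1+1/n)+\lambda R_r^2+\zeta^2\big)^{1/2},$$
which is the claim.

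The main obstacle is this martingale step: getting the correct self-normalized inequality in infinite dimensions, pairing it with a regularizer so that the bound is expressed through the problem-independent $G(n,1+1/n)$ rather than a raw data-dependent log-determinant, and then invoking the eigenvalue-decay Assumption~\ref{ass:eig-decay} to translate $G(n,1+1/n)$ and the $\Qcal$-covering number into the effective dimension $\dimsamp$ appearing in $\zeta$ (finite, poly-logarithmic, or polynomial in $n$ across the three decay regimes). A secondary subtlety — already built into the decomposition above — is arranging that the step-$(h{+}1)$ value error enters only through the \emph{deterministic} quantity $D_i$ via the true $V_{h+1}^{\pi_b,\gamma}$, so that the concentration argument never has to take a union bound over the RKHS ball $\Qcal$.
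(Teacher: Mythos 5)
Your proposal follows essentially the same route as the paper's proof: the same decomposition of $\widehat r_h - r_h$ into a ridge-bias term (bounded by $\sqrt{\lambda}R_r$), the $\widehat Q_h$ and $\widehat V_{h+1}$ MLE residuals controlled via Theorem~\ref{thm:emp-mle-guarantee}, and a martingale term handled by the self-normalized RKHS concentration inequality expressed through $G(n,1+1/n)$, with the martingale deliberately routed through the fixed function $V_{h+1}^{\pi_b,\gamma}$ exactly as in the paper's Lemma~\ref{lem:v-self-concen-rkhs}. The only (harmless) deviation is that you bound the residual terms directly via $K_h(K_h+\lambda I)^{-1}\preceq I$, giving $\sum_i A_i^2+\sum_i D_i^2$ without the extra trace factor, whereas the paper goes through Lemma~\ref{lem:cauchy-matrix} and a trace bound that introduces the $\dimsamp^2$ appearing in $\zeta$; your version is slightly tighter and still implies the stated bound.
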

\begin{proof}
    See Appendix \ref{sec:reward-est-rkhs} for detailed proof.
\end{proof}
Here we use the notation $\|\phi(z)\|_{(\Lambda_h+\lambda \Ical_{\Hcal})^{-1}} = \langle \phi(z), (\Lambda_h+\lambda \Ical_{\Hcal})^{-1}\phi(z)\rangle_\Hcal$, where we define $$\Lambda_h = \sum_{i=1}^n \phi(z_h^i)\phi(z_h^i)^\top.$$
With the guarantee of Theorem \ref{thm:reward-est-rkhs}, Theorem \ref{thm:pessim-vi-rkhs} establishes the concrete suboptimality of DCPPO under various eigenvalue decay conditions.
\begin{theorem}[\textbf{Suboptimality Gap for RKHS}]\label{thm:pessim-vi-rkhs}
    Suppose that Assumption \ref{ass:eig-decay} holds. For $\mu$-polynomial decay, we further assume $\mu(1-2\tau)>1$. For Algorithm \ref{alg:reward-learn} and \ref{alg:pess-value-iter}, we set $$
    \lambda= \begin{cases}C \cdot \mu \cdot \log (n / \delta) & \mu \text {-finite spectrum, } \\ C \cdot\log (n / \delta)^{1+1 / \mu} & \mu \text {-exponential decay, } \\ C \cdot(n / H)^{\frac{2}{\mu(1-2 \tau)-1}} \cdot \log (n / \delta) & \mu \text {-polynomial decay, }\end{cases}
    $$
    and $$
    \beta= \begin{cases}C'' \cdot H\cdot\bigg\{\sqrt{\lambda} R_r +\dimsamp e^H |\Acal|\cdot\log (n R_rH/ \delta)^{1 / 2+1 /(2 \mu)}\bigg\} & \mu \text {-finite spectrum, } \\ C'' \cdot H\cdot\left\{\sqrt{\lambda} R_r +\dimsamp e^H |\Acal|\cdot\log (n R_rH/ \delta)^{1 / 2+1 /(2 \mu)}\right\} & \mu \text {-exponential decay, } \\ C'' \cdot H\cdot\left\{\sqrt{\lambda}R_r + \dimsamp e^H |\Acal|\cdot(nR_r)^{\kappa^*} \cdot \sqrt{\log (nR_rH/\delta)}\right\} & \mu \text {-polynomial decay. }\end{cases}
    $$
    Here $C > 0$ is an absolute constant that does not depend on $n$ or $H$. Then with probability at least $1-\delta$,  it holds that (i)  $\Gamma_h$ set in \eqref{eq:penalty-rkhs} being uncertainty quantifiers, and (ii)
    $$
    \operatorname{SubOpt}(\{\widetilde{\pi}_h\}_{h\in[H]}) \leq \begin{cases}C' \cdot \tilde{d} \cdot He^H|\Acal| \sqrt{\mu \cdot \log (nR_rH / \delta)}\} & \mu \text {-finite spectrum, } \\ C' \cdot \tilde{d}\cdot He^H |\Acal|\cdot\sqrt{(\log (n R_rH) / \delta)^{1+1 / \mu}} & \mu \text {-exponential decay, } \\ C' \cdot \tilde{d} \cdot He^H|\Acal| \cdot (nR_r)^{\kappa^*}  \cdot \sqrt{\log (nR_r H / \delta)} & \mu \text {-polynomial decay.}\end{cases}
    $$ 
Here $C, C',C''$ are absolute constants irrelevant to $n$ and $H$ and $$
\tilde{d} = \dimpop\cdot{\dimsamp}, \quad
    \kappa^*=\frac{d+1}{2(\mu+d)}+\frac{1}{\mu(1-2 \tau)-1}.
    $$
Here $\dimpop$ is the population effective dimension, which measures the "coverage" of the human behavior $\pi_b$ for the optimal policy $\pi^*$. 
\end{theorem}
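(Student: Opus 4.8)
The plan is to follow the standard pessimistic value-iteration template of \cite{jin2021pessimism,yang2020provably}, modified so that the extra reward-estimation error from the dynamic-choice layer (Theorem \ref{thm:reward-est-rkhs}) is absorbed into the penalty $\Gamma_h$. The argument has three stages: (a) verify that $\Gamma_h$ in \eqref{eq:penalty-rkhs} satisfies the uncertainty-quantifier condition \eqref{eq:uncertainty-quant} for the stated $\beta$; (b) use a pessimism / extended-value-difference decomposition to reduce $\operatorname{SubOpt}(\{\widetilde{\pi}_h\}_{h\in[H]})$ to $2\sum_{h=1}^H\E_{\pi^*}[\Gamma_h(s_h,a_h)]$; and (c) bound each $\E_{\pi^*}[\Gamma_h]$ via single-policy coverage together with the regime-dependent control of $G(n,\lambda)$, $\dimsamp$ and $\dimpop$, and then optimize over $\lambda$.

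For stage (a), I would split the left-hand side of \eqref{eq:uncertainty-quant} as
$$\big(\widehat{r}_h - r_h\big)(z) + \big(\widetilde{\P}_h\widetilde{V}_{h+1} - \P_h\widetilde{V}_{h+1}\big)(z).$$
The first term is handled directly by Theorem \ref{thm:reward-est-rkhs}, whose bound already has the shape $\|\phi(z)\|_{(\Lambda_h+\lambda\Ical_{\Hcal})^{-1}}$ times a factor built from $H^2 G(n,1+1/n)$, $\lambda R_r^2$ and $\zeta^2$, the last carrying the $\dimsamp e^H|\Acal|$ dependence inherited from the MLE step (Theorem \ref{thm:emp-mle-guarantee}). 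The second term is a kernel ridge regression residual with fixed target $\widetilde{V}_{h+1}\in\Qcal$ (Assumption \ref{ass:rkhs-retain}); after a uniform covering of the low-complexity class containing $\widetilde{V}_{h+1}$ (parametrized by RKHS-ball elements and $\beta$, as in \cite{yang2020provably}) a self-normalized concentration bound gives that this term is at most $\|\phi(z)\|_{(\Lambda_h+\lambda\Ical_{\Hcal})^{-1}}\cdot\Ocal\big(H\sqrt{G(n,\lambda)+\log(nH/\delta)}+\sqrt{\lambda}HR_r\big)$. The algebraic identity
$$\|\phi(z)\|^2_{(\Lambda_h+\lambda\Ical_{\Hcal})^{-1}} = \lambda^{-1}\big(K(z,z)-k_h(z)^\top(K_h+\lambda I)^{-1}k_h(z)\big)$$
matches the closed form of $\Gamma_h$ in \eqref{eq:penalty-rkhs}, so choosing $\beta$ to dominate the sum of the two factors above — which, after substituting the regime bounds for $G$ and $\dimsamp$, is exactly the $\beta$ in the statement — establishes \eqref{eq:uncertainty-quant}, with a union bound over $h\in[H]$ and the $1-\delta$ events.

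For stage (b), on the event from (a) the truncated estimate in Algorithm \ref{alg:pess-value-iter} satisfies $0\le\widetilde{Q}_h\le Q_h^*$ along trajectories of $\pi^*$ and $\widetilde{Q}_h\ge r_h+\P_h\widetilde{V}_{h+1}-2\Gamma_h$; substituting these into the performance-difference identity and telescoping over $h$ gives
$$\operatorname{SubOpt}(\{\widetilde{\pi}_h\}_{h\in[H]})\le 2\sum_{h=1}^H\E_{\pi^*}\big[\Gamma_h(s_h,a_h)\big]=2\beta\sum_{h=1}^H\E_{\pi^*}\big[\|\phi(s_h,a_h)\|_{(\Lambda_h+\lambda\Ical_{\Hcal})^{-1}}\big].$$
For stage (c), the kernel analogue of Assumption \ref{ass:coverage-ass} lets me bound $\E_{\pi^*}[\|\phi(s_h,a_h)\|_{(\Lambda_h+\lambda\Ical_{\Hcal})^{-1}}]$ by $\Ocal(\sqrt{\dimpop/n})$ up to lower-order terms, so that $\operatorname{SubOpt}(\{\widetilde{\pi}_h\}_{h\in[H]})=\Ocal(\beta H\sqrt{\dimpop/n})$. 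I then plug in the known regime-specific bounds on $G(n,\lambda)$, $\dimsamp$ and $\dimpop$ — of order $\mu\log n$ under $\mu$-finite spectrum, $(\log n)^{1+1/\mu}$ under $\mu$-exponential decay, and the stated $n$-polynomial growth under $\mu$-polynomial decay — and choose $\lambda$ as in the statement to balance the $\sqrt{\lambda}R_r$ bias against the variance term, obtaining the three displayed bounds with $\tilde{d}=\dimpop\cdot\dimsamp$.

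The main obstacle is the bookkeeping in stages (a) and (c): propagating the $e^H|\Acal|$ and $\dimsamp$ factors that enter from the MLE and reward-estimation steps through the pessimism argument without extra loss, and, under $\mu$-polynomial decay, verifying that the $\lambda$-optimization produces exactly the exponent $\kappa^*=\frac{d+1}{2(\mu+d)}+\frac{1}{\mu(1-2\tau)-1}$. Here the first summand comes from the information-gain / effective-dimension bounds ($G$ grows like $n^{d/(\mu+d)}$ up to logarithmic factors), while the second comes from the regularization-versus-truncation trade-off in kernel ridge regression; the hypothesis $\mu(1-2\tau)>1$ is precisely what keeps this exponent positive but below $1/2$ and the per-step errors summable over $h\in[H]$.
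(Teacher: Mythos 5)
Your proposal is correct and follows essentially the same route as the paper's proof: stage (a) matches the paper's Step (1) (splitting the uncertainty-quantifier condition into the reward error from Theorem \ref{thm:reward-est-rkhs} plus a kernel-ridge transition residual controlled by self-normalized concentration, then choosing $\beta$ via the regime-dependent bounds on $G(n,\lambda)$ and the covering number), stage (b) is exactly the invocation of Theorem \ref{thm:jin-main}, and stage (c) mirrors the paper's Step (2), which bounds $\sum_h\E_{\pi^*}[\Gamma_h]$ by $\Ocal(\beta\cdot\dimpop)$ via the log-determinant identity and the population effective dimension. The only cosmetic difference is your normalization of the coverage term as $\sqrt{\dimpop/n}$ versus the paper's convention of folding the $n$ into the definition of $\dimpop$; the resulting bounds coincide.
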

\begin{proof}
See Appendix \ref{sec:prove-pessi-vi-rkhs} for detailed proof.
\end{proof}
For simplicity of notation, we delay the formal definition of $\dimsamp$ and $\dimpop$ to the appendix. Parallel to the linear setting, Theorem \ref{thm:pessim-vi-rkhs} demonstrates the performance of our method in terms of effective dimension. If the behavior policy is close to the optimal policy and the RKHS satisfies Assumption \ref{ass:eig-decay}, $\dimpop = \Ocal(H^{3/2}n^{-1/2})$ and $\dimsamp$ remains in constant level. In this case  suboptimality is of order $\Ocal(n^{-1/2})$ for $\mu$-finite spectrum and $\mu$-exponential decay, while for $\mu$-polynomial decay we obtain a rate of $\Ocal(n^{\kappa^*-1/2})$. This also matches the results in standard pessimistic planning under RKHS case \citep{jin2021pessimism}, where the reward is observable.


\section{Conclusion}
In this paper, we have developed a provably efficient online algorithm, \underline{D}ynamic-\underline{C}hoice-\underline{P}essimistic-\underline{P}olicy-\underline{O}ptimization (DCPPO) for RLHF under dynamic discrete choice model. By maximizing log-likelihood function of the Q-value function and minimizing Bellman mean square error for the reward, our algorithm learns the unobservable reward, and the optimal policy following the principle of pessimism. We prove that our algorithm is efficient in
sample complexity for linear model MDP and RKHS model class.  To the best of our knowledge, this
is the first provably efficient algorithm for offline RLHF under the dynamic discrete choice model.

\bibliography{citation}
\bibliographystyle{plainnat}
\appendix
\newpage

\section{Proof for Theorem \ref{thm:emp-mle-guarantee}}\label{app:proof-policy-recover}
Theorem \ref{thm:emp-mle-guarantee} can be regarded as an MLE guarantee for dataset distribution.
Our proof for Theorem \ref{thm:emp-mle-guarantee} lies in two steps: (i) We prove an MLE guarantee in population distribution, i.e. when $s_h$ is sampled by the behavior policy $\behpolicy$, the estimation error can be bounded in expectation; (ii) With a concentration approach, we transfer the expectation bound to a bound on a realized dataset.
First, for MLE with an identifiable $\behaviorq\in\Mcal_h$, we have the following guarantee:
\begin{lemma}[MLE distribution bound]\label{thm:mle-guarantee}
For $\learnedq$ estimated by \eqref{eq:mle}, we have $$
\E_{s_h\sim\pi_b}\big[\|\learnedpolicy(\cdot\mid\thisstate) - \behpolicy(\cdot\mid \thisstate)\|_1^2\big] \leq c\cdot \frac{\log\big(H\cdot N(\Mcal, \|\cdot\|_{\infty}, 1/n )/\delta\big)}{n}
$$
with probability at least $1-\delta$. Here $c, c' >0$ are two absolute constants, and $\delta$ measures the confidence in the estimation.
\end{lemma}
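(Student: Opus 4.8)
The plan is to view Line~2 of Algorithm~\ref{alg:reward-learn} at a fixed step $h$ as a conditional maximum-likelihood problem, invoke a standard finite-sample MLE deviation inequality to control a squared-Hellinger (root-density) error, translate its bracketing-number dependence into the stated $\|\cdot\|_\infty$-covering number, and finally convert that Hellinger bound into the claimed $L_1$ bound. To set up: fix $h$ and let $\mathcal{P}_h=\{\,p_Q(a\mid s)=\exp(Q(s,a))/\sum_{a'\in\Acal}\exp(Q(s,a')):Q\in\Mcal_h\,\}$ be the conditional-density class induced by $\Mcal_h$ through the soft-max link. By Assumption~\ref{ass:funct-approx}, together with the shift-invariance of soft-max and the normalization of Assumption~\ref{ass:identify}, we have realizability: $\behpolicy=p_{\behaviorq}\in\mathcal{P}_h$. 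Since the dataset satisfies $a_h^i\sim\behpolicy(\cdot\mid s_h^i)$ conditionally on $s_h^i$, with $\{s_h^i\}$ i.i.d.\ from the step-$h$ marginal of $\pi_b$, and $\learnedq$ is the exact maximizer of $L_h$, the quantity $\learnedpolicy=p_{\learnedq}$ is precisely the conditional MLE over $\mathcal{P}_h$.

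Next I would apply the standard MLE guarantee (as used e.g.\ in \citep{agarwal2020flambe,zhan2022pac,uehara2021pessimistic}): with probability at least $1-\delta/H$,
\[
\E_{s\sim\pi_b}\!\Big[\,\big\|\sqrt{\learnedpolicy(\cdot\mid s)}-\sqrt{\behpolicy(\cdot\mid s)}\,\big\|_2^2\Big]\ \le\ \frac{c\,\log\!\big(N_{[]}(1/n,\mathcal{P}_h)\,\cdot\,H/\delta\big)}{n},
\]
whose proof is the usual ``basic inequality'' for the exact maximizer combined with a one-sided uniform Bernstein bound over a bracket of the log-likelihood-ratio class, exploiting the identity $\E_{s}\sum_a\sqrt{p^*(a\mid s)p_Q(a\mid s)}=1-\tfrac12\,\E_s\|\sqrt{p_Q(\cdot\mid s)}-\sqrt{p^*(\cdot\mid s)}\|_2^2$. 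Then comes the covering-number translation: if $\|Q-Q'\|_\infty\le\epsilon$ then $p_Q$ and $p_{Q'}$ differ multiplicatively by at most $e^{2\epsilon}$, so an $\epsilon$-net $\{Q_j\}$ of $\Mcal_h$ in $\|\cdot\|_\infty$ yields brackets $[e^{-2\epsilon}p_{Q_j},\,e^{2\epsilon}p_{Q_j}]$ of $L_1$-width $O(\epsilon)$ (each conditional density is bounded by $1$ on the finite set $\Acal$); hence $N_{[]}(O(1/n),\mathcal{P}_h)\le N(\Mcal_h,\|\cdot\|_\infty,1/n)$. A union bound over $h\in[H]$ produces the factor $\log\!\big(H\cdot N(\Mcal_h,\|\cdot\|_\infty,1/n)/\delta\big)$.

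To finish, I would convert Hellinger to $L_1$: for probability vectors $p,q$ on $\Acal$, Cauchy--Schwarz gives
\[
\|p-q\|_1=\sum_a\big|\sqrt{p_a}-\sqrt{q_a}\big|\big(\sqrt{p_a}+\sqrt{q_a}\big)\ \le\ \big\|\sqrt p-\sqrt q\big\|_2\,\big\|\sqrt p+\sqrt q\big\|_2\ \le\ 2\,\big\|\sqrt p-\sqrt q\big\|_2,
\]
since $\|\sqrt p+\sqrt q\|_2^2=2+2\sum_a\sqrt{p_aq_a}\le 4$; squaring and taking $\E_{s\sim\pi_b}$ of both sides with $p=\learnedpolicy(\cdot\mid s)$, $q=\behpolicy(\cdot\mid s)$ yields the stated bound with an absolute constant.

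I expect the MLE concentration step to be the main obstacle: making the bracketing/chaining argument for the exact maximizer fully rigorous --- in particular the one-sided uniform deviation control of the log-likelihood ratios --- is the crux, while the covering bookkeeping and the Hellinger-to-$L_1$ conversion are routine. Here the boundedness $\Mcal_h\subset\{f:\Scal\times\Acal\to[0,H]\}$ in Assumption~\ref{ass:funct-approx} keeps the log-likelihood ratios controlled; note, however, that this boundedness does \emph{not} enter the $L_1$ bound (unlike the companion $\behaviorq$-recovery bound of Theorem~\ref{thm:emp-mle-guarantee}, where an $e^{H}$ factor appears through inverting the soft-max link).
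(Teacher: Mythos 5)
Your proposal is correct and follows essentially the same route as the paper's proof: the MLE basic inequality for the exact maximizer, a Chernoff-type deviation bound on half the log-likelihood ratio over a bracketing net to control the squared Hellinger error, the Lipschitzness of the soft-max link to bound the bracketing number of the induced policy class by the covering number of $\Mcal_h$, the Cauchy--Schwarz conversion from Hellinger to $\ell_1$, and a union bound over $h$. The only differences are cosmetic (multiplicative brackets versus the paper's additive Lipschitz brackets, and naming the concentration step Bernstein rather than Markov/Chernoff).
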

\begin{proof}
For all $h\in[H]$, define  $$
 \Pi_h = \big\{
 \pi_Q(a\mid s) = {\exp({Q(s,a)})}/{\sum_{a'\in\Acal} \exp({Q(s,a')})} \text{ for some } Q\in\Mcal_h
 \big\}.$$
Let $\Ncal_{[]}(\Pi_h,\|\cdot\|_\infty,1/n)$ be the smallest $1/n$-upper bracketing of $\Pi_h$. And $|\Ncal_{[]}(\Pi_h,\|\cdot\|_\infty,1/n)| = \bracketing(\Pi_h,\|\cdot\|_\infty,1/n)$, where $\bracketing(\Pi_h,\|\cdot\|_\infty,1/n)$ is the  bracketing number of $\Pi_h$.
First, we prove that $$
\E_{s_h\sim\pi_b}\big[\|\learnedpolicy(\cdot\mid\thisstate) - \behpolicy(\cdot\mid \thisstate)\|_1^2\big] \leq \Ocal\bigg( \frac{\log\big(H\cdot \bracketing(\Pi_h, \|\cdot\|_{\infty}, 1/n )/\delta\big)}{n}\bigg)
$$
with probability at least $1-\delta$. By MLE guarantee, we have $$
\frac{1}{n}\sum_{i=1}^n\log\bigg(\frac{\widehat{\pi}_h(a_h^t\mid s_h^t)}{\pi_{b,h}(a_h^t\mid s_h^t)}\bigg) \geq 0,
$$
by Markov's inequality and Boole’s inequality, it holds with probability at least $1-\delta$ that for all $\bar{\pi}\in\Ncal_{[]}(\Pi,\|\cdot\|_\infty,1/n)$, we have $$
\sum_{i=1}^n \frac{1}{2}\log\bigg(\frac{\bar{\pi}(a_h^t\mid s_h^t)}{\pi_{b,h}(a_h^t\mid s_h^t)}\bigg) \leq n\log\bigg(\E_{\pi_b}\bigg[\exp\bigg(\frac{1}{2}\log\bigg(\frac{\bar{\pi}(\cdot\mid \cdot)}{\pi_{b,h}(\cdot\mid \cdot)}\bigg) \bigg)\bigg]\bigg)+\log\bigg(\frac{N_{[]}(\Pi,\|\cdot\|_\infty, 1/n)}{\delta}\bigg),
$$
specify $\bar{\pi}$ to be the upper bracket of $\widehat{\pi}_h$,  we have \begin{align*}
    0 &\leq n\log\bigg(\E_{\pi_b}\bigg[\exp\bigg(\frac{1}{2}\log\bigg(\frac{\bar{\pi}(\cdot\mid \cdot)}{\pi_{b,h}(\cdot\mid \cdot)}\bigg) \bigg)\bigg]\bigg)+\log\bigg(\frac{N_{[]}(\Pi,\|\cdot\|_\infty, 1/n)}{\delta}\bigg)\\
    &\leq n\cdot \log\bigg(\E_{\pi_b}\bigg[\sqrt{\frac{\bar{\pi}(\cdot\mid \cdot)}{\pi_{b,h}(\cdot\mid \cdot)}}\bigg]\bigg)+\log\bigg(\frac{N_{[]}(\Pi,\|\cdot\|_\infty, 1/n)}{\delta}\bigg)\\
    &= n\cdot\log\bigg(\E_{s_h\sim\pi_b} \bigg[\sum_{a\in\Acal} \sqrt{\bar{\pi}(a\mid s_h)\cdot \pi_{b,h}(a\mid s_h)}\bigg] \bigg) + \log\bigg(\frac{\bracketing(\Pi,\|\cdot\|_\infty,1/n)}{\delta}\bigg),
\end{align*}
Here $\E_{s_h\sim\pi_b}$ means $s_h$ is simulated by the policy $\pi_b$. Utilizing the $\log x \leq x-1$, we have $$
1- \E_{\pi_b}\bigg[\sum_{a\in\Acal} \sqrt{\bar{\pi}(a\mid s_h)\cdot \pi_{b,h}(a\mid s_h)}\bigg] \leq \frac{1}{n} \log\bigg(\frac{\bracketing(\Pi,\|\cdot\|_\infty,1/n)}{\delta}\bigg).
$$
Therefore we can bound the Hellinger distance between $\pi_{b,h}$ and $\bar{\pi}$,
\begin{align}\label{hell-dist-1}
    h(\bar{\pi},\pi_{b,h}) &= \E_{s_h\sim\pi_b}\bigg[\sum_{a\in\Acal}\big(\bar{\pi}(a\mid s_h)^{1/2} - \pi_{b,h}(a\mid s_h)^{1/2} \big)^2\bigg]\\
    &\leq 2\bigg(1-\sum_{a\in\Acal} \sqrt{\bar{\pi}(a\mid s_h)\cdot \pi_{b,h}(a\mid s_h)}\bigg) + \frac{1}{n}\\
    &\leq \frac{2}{n}\log\bigg(\frac{N_{[]}(\Pi,\|\cdot\|_\infty,1/n)}{\delta}\bigg)+\frac{1}{n},
\end{align}
here the second inequality comes from the fact that $\bar{\pi}$ is a upper bracketing of $\Pi$. Moreover, it is easy to verify that \begin{align}\label{eq:hell-dis-2}
\E_{s_h\sim\pi_b}\big[\sum_{a\in\Acal}((\bar{\pi}(a\mid s_h)^{1/2} + \pi_{b,h}(a\mid s_h)^{1/2} ))^2\big] &\leq 2\E_{s_h\sim\pi_b}\big[\sum_{a\in\Acal}(\bar{\pi}(a\mid s_h) + \pi_{b,h}(a\mid s_h))\big] \\
&\leq \frac{2}{n} +4,
\end{align}
where the second inequality comes from the fact that $\bar{\pi}$ is the $1/n$-upper bracket of a probability distribution.
Combining the \eqref{hell-dist-1} and \eqref{eq:hell-dis-2}, by Cauchy-Schwarz inequality, we have $$
\E_{s_h\sim\pi_b}\big[\|\bar{\pi}(\cdot\mid\thisstate) - \behpolicy(\cdot\mid \thisstate)\|_1^2\big] \leq \frac{15}{n}\cdot \log\bigg(\frac{N_{[]}(\Pi,\|\cdot\|_\infty,1/n)}{\delta}\bigg).
$$
Meanwhile, \begin{align*}
    &\|\bar{\pi}(\cdot\mid\thisstate) - \behpolicy(\cdot\mid \thisstate)\|_1^2 - \|\widehat{\pi}_h(\cdot\mid\thisstate) - \behpolicy(\cdot\mid \thisstate)\|_1^2\\
    &\qquad\qquad\leq \big(\sum_{a\in\Acal}|\bar{\pi}(\cdot\mid\thisstate) - \behpolicy(\cdot\mid \thisstate)| + \sum_{a\in\Acal}|\widehat{\pi}_h(\cdot\mid\thisstate) - \behpolicy(\cdot\mid \thisstate)|\big)\\
    &\qquad\qquad\qquad\cdot\big(\sum_{a\in\Acal}|\bar{\pi}(\cdot\mid\thisstate) - \behpolicy(\cdot\mid \thisstate)| - \sum_{a\in\Acal}|\widehat{\pi}_h(\cdot\mid\thisstate) - \behpolicy(\cdot\mid \thisstate)|\big)\\
    &\qquad\qquad\leq(4+\frac{1}{n})\cdot\frac{1}{n},
\end{align*}
therefore we have \begin{align*}   \E_{s_h\sim\pi_b}\big[\|\widehat{\pi}_h(\cdot\mid\thisstate) - \behpolicy(\cdot\mid \thisstate)\|_1^2\big] &\leq \frac{20}{n}\cdot\log\bigg(\frac{N_{[]}(\Pi_h,\|\cdot\|_\infty,1/n)}{\delta}\bigg).
\end{align*}
Next, we bound $
N_{[]}(\Pi_h,\|\cdot\|_\infty,1/n) $ by  $N(\Mcal_h,\|\cdot\|_\infty,1/4n) $.
For all $h\in[H]$, recall the definition  $$
 \Pi_h = \big\{
 \pi_Q(a\mid s) = {\exp({Q(s,a)})}/{\sum_{a'\in\Acal} \exp({Q(s,a')})} \text{ for some } Q\in\Mcal_h
 \big\},
 $$
 it is easy to check that $$
 |\pi_Q(a\mid s) - \pi_{Q'}(a\mid s)| \leq 2\cdot \|Q - Q'\|_{\infty} , \forall (s,a)\in\Scal\times\Acal.
 $$
 Recall that $N(\Mcal_h,\|\cdot\|_\infty,1/n)$ is the covering number for model class $\Mcal_h$. Using Lemma \ref{lem:cover-bracket}, we have \begin{equation}\label{eq:cover-brack}
    N_{[]}(\Pi_h,\|\cdot\|_\infty,1/n)\leq N(\Mcal_h,\|\cdot\|_\infty,1/4n)
\end{equation}
always hold for all $h\in[H]$.
Therefore we have \begin{align*}
\E_{s_h\sim\pi_b}\big[\|\widehat{\pi}_h(\cdot\mid\thisstate) - \behpolicy(\cdot\mid \thisstate)\|_1^2\big] \leq \Ocal\bigg(\frac{\log(H\cdot N(\Mcal_h,\|\cdot\|_\infty,1/n)/\delta)}{n}\bigg)
\end{align*}
holds for  $h\in[H]$ with probability $1-\delta/H$.
 Taking union bound on $h\in[H]$ and we conclude the proof for Lemma \ref{thm:mle-guarantee}.
 \end{proof}

\subsection{Proof for Theorem \ref{thm:emp-mle-guarantee}}
From Lemma \ref{thm:mle-guarantee},  we have the following generalization bound: with probability $1-\delta$, $$
\E_{s_h\sim\pi_b}\big[\|\widehat{\pi}_h(\cdot\mid\thisstate) - \behpolicy(\cdot\mid \thisstate)\|_1^2\big]\leq\Ocal\bigg(\frac{\log(H\cdot N(\Mcal_h,\|\cdot\|_\infty,1/n)/\delta)}{n}\bigg)
$$ for all $h\in[H]$. We now condition on this event.
Letting $$
A(\widehat{\pi}_h):= \big|\E_{s_h\sim\pi_b}\big[\|\widehat{\pi}_h(\cdot\mid\thisstate) - \behpolicy(\cdot\mid \thisstate)\|_1^2\big] - \E_{\Dcal_h}\big[\|\widehat{\pi}_h(\cdot\mid\thisstate) - \behpolicy(\cdot\mid \thisstate)\|_1^2\big]\big|.
$$
With probability $1-\delta$, from Bernstein’s inequality, we also have \begin{align*}
    A(\widehat{\pi}_h) &\leq \Ocal\bigg(\frac{\log(H/\delta)}{n} + \sqrt{\frac{\operatorname{Var}_{s_h\sim\pi_{b}}[\|\widehat{\pi}_h(\cdot\mid\thisstate) - \behpolicy(\cdot\mid \thisstate)\|_1^2]\log(H/\delta)}{n}}\bigg)\\
    &\leq \Ocal\bigg(\frac{\log(H/\delta)}{n} + \sqrt{\frac{\E_{s_h\sim\pi_{b}}[\|\widehat{\pi}_h(\cdot\mid\thisstate) - \behpolicy(\cdot\mid \thisstate)\|_1^2]\log(H/\delta)}{n}} \bigg)\\
    &\leq \Ocal\bigg(\frac{\log(H\cdot N(\Mcal_h,
    \|\cdot\|_\infty,1/n)/\delta)}{n}\bigg).
\end{align*}
holds for all $h\in[H]$ with probability at least $1-\delta$, and therefore we have $$
\E_{\Dcal_h}\big[\|\widehat{\pi}_h(\cdot\mid\thisstate) - \behpolicy(\cdot\mid \thisstate)\|_1^2\big] \leq \Ocal\bigg(\frac{\log(H\cdot N(\Mcal_h,
    \|\cdot\|_\infty,1/n)/\delta)}{n}\bigg),
$$
i.e. the error of estimating $\pi_{b,h}$ decreases in scale $\tilde{O}(1/n)$ on the dataset. Recall that $$
\learnedpolicy(a\mid s) = \frac{\exp(\learnedq(s,a))}{\sum_{a'\in\Acal} \exp(\learnedq(s_h,a')}
$$
and $$
    \pi_{b,h}(a\mid s) = \frac{\exp(\behaviorq(s,a))}{\sum_{a'\in\Acal} \exp(\behaviorq(s,a'))}.
$$
Also we have $\widehat{Q}_h(s,a_0) = \behaviorq(s,a_0) = 0$ and $\behaviorq\in[0,H]$ by Assumption \ref{ass:identify} and definition of $\Mcal_h$. Therefore, we have $$
\big|\learnedq(s,a) - \behaviorq(s,a)\big| = \bigg|\log\bigg(\frac{\learnedpolicy(a\mid s)}{\behpolicy(a\mid s)}\bigg)  - \log\bigg(\frac{\learnedpolicy(a_0\mid s)}{\behpolicy(a_0\mid s)}\bigg) \bigg|.
$$
Utilizing $\ln(x/y) \leq x/y -1 $ for $x,y >0$, and $\pi_h(a\mid s)\in [e^{-H},1]$,  we have $$
\big|\learnedq(s,a) - \behaviorq(s,a)\big| \leq e^{H}\cdot\bigg(|\behpolicy(a\mid s) - \learnedpolicy(a\mid s) | + |\behpolicy(a_0\mid s) - \learnedpolicy(a_0\mid s) |\bigg),
$$
and by taking summation over $a\in\Acal$, we have $$
\E_{s_h\in\Dcal_h}\big[\|\learnedq(\thisstate,\cdot) - \behaviorq(\thisstate, \cdot)\|_1^2\big] \leq c'\cdot \frac{H^2 e^{2H}\cdot|\Acal|^2 \cdot \log\big(H\cdot N(\Mcal, \|\cdot\|_{\infty}, 1/n )/\delta\big)}{n},
$$
and we complete our proof.
\section{Proof for Theorem \ref{thm:reward-est}}\label{sec:prove-reward-est}
Recall that in \eqref{eq:ridge-sol}, we have $$
    \widehat{w}_h = (\Lambda_h + \lambda I)^{-1} \bigg( \sum_{i=1}^n \phi(\thissai)\big(\learnedq(\thissai) - \gamma\cdot\learnednextv(\nextstatei)\big)\bigg)
$$
where $$
\Lambda_h = \sum_{i=1}^n \phi(\thissai)\phi(\thissai)^\top.
$$
By Assumption \ref{ass:funct-approx}, there exists $w_h\in\R^d$ such that $r_h(s,a) = \phi(s,a)\cdot w_h$. By our construction for $\widehat{r}_h$ in Algorithm \ref{alg:reward-learn}, we therefore have \begin{align*}
    |r_h(s,a) - \widehat{r}_h(s,a)| &= |\phi(s,a) (w_h - \widehat{w}_h)|\\
    &= \bigg|\phi(s,a)\big(\Lambda_h + \lambda I\big)^{-1}\bigg(\lambda\cdot w_h+\sum_{i=1}^n \phi(\thissai)\big(\learnedq(s_h^i,a_h^i) - \gamma\cdot \widehat{V}_{h+1}(s_{h+1}^i) - r_h(\thissai)\big)\bigg)\bigg|\\
    &\leq \underbrace{\lambda\cdot|\phi(s,a)\big(\Lambda_h+\lambda I\big)^{-1}w_h|}_{\text{(i)}} \\
    &\qquad+ \underbrace{\bigg|\phi(s,a)\big(\Lambda_h+\lambda I\big)^{-1}\bigg(\sum_{i=1}^n \big(\learnedq(\thissai) - \gamma\cdot \widehat{V}_h(s_{h+1}^i) - r_h(\thissai)\big)\bigg)\bigg|}_{\text{(ii)}},
\end{align*}
For (i), we have $$
\text{(i)}\leq \lambda\cdot\|\phi(s,a)\|_{(\Lambda_h + \lambda I)^{-1}}\cdot \|w_h\|_{(\Lambda_h + \lambda I)^{-1}},
$$
by Cauchy-Schwarz inequality and by $\Lambda_h$ being semi-positive definite and $\|w_h\|_2\leq\sqrt{d}$, we have \begin{equation}\label{eq:bound(i)}
\text{(i)}\leq \lambda \cdot \sqrt{d/\lambda}\cdot\|\phi(s,a)\|_{(\Lambda_h + \lambda I)^{-1}} = \sqrt{\lambda d}\cdot \|\phi(s,a)\|_{(\Lambda_h + \lambda I)^{-1}},
\end{equation}

and $$
\text{(ii)} \leq \|\phi(s,a)\|_{(\Lambda_h + \lambda I)^{-1}} \underbrace{\cdot \bigg\|\sum_{i=1}^n \phi(\thissai)\big(\learnedq(\thissai) - \gamma\cdot \widehat{V}_{h+1}(\nextstatei) - r_h(\thissai)\big)\bigg\|_{(\Lambda_h+\lambda I)^{-1}}}_{\text{(iii)}} .
$$
Recall that in \eqref{eq:discounted-bellman}, we have the following Bellman equation hold for all $(\thissa)\in\Scal\times\Acal$,
$$
r_h(s_h,a_h) + \gamma\cdot \P_h V_{h+1}^{\pi_b,\gamma}(\thissa) = \behaviorq(\thissa),
$$
substitute this into (iii), and we have  \begin{align}\label{ineq:bellman-transform}
\text{(iii)} &= \bigg\|\sum_{i=1}^n \phi(\thissai)\bigg(\big(\learnedq(\thissai)-\behaviorq(\thissai)\big) - \gamma\cdot \big(\widehat{V}_{h+1}(\nextstatei) - \P_hV^{\pi_b,\gamma}_{h+1}(\nextstatei)\big)\bigg)\bigg\|_{(\Lambda_h+\lambda I)^{-1}}\nonumber\\
&\leq \underbrace{\bigg\|\sum_{i=1}^n \phi(\thissai)\bigg(\big(\learnedq(\thissai)-\behaviorq(\thissai)\big) \bigg)\bigg\|_{(\Lambda_h+\lambda I)^{-1}}}_{\text{(iv)}}\nonumber\\
&\qquad+ \gamma\cdot\underbrace{\bigg\|\sum_{i=1}^n \phi(\thissai)\bigg( \big(\widehat{V}_{h+1}(\nextstatei) - V^{\pi_b,\gamma}_{h+1}(\nextstatei)\big)\bigg)\bigg\|_{(\Lambda_h+\lambda I)^{-1}}}_{\text{(v)}}\nonumber\\
&\qquad+\gamma\cdot\underbrace{\bigg\|\sum_{i=1}^n \phi(\thissai)\bigg( \big(\P_hV^{\pi_b,\gamma}_{h+1}(\thissai) - V^{\pi_b,\gamma}_{h+1}(\nextstatei)\big)\bigg)\bigg\|_{(\Lambda_h+\lambda I)^{-1}}}_{\text{(vi)}},
\end{align}
First, we bound (iv) and (v). By Theorem \ref{thm:emp-mle-guarantee}, we have $$
\E_{\Dcal_h}\big[\|\learnedpolicy(\cdot\mid\thisstate) - \behpolicy(\cdot\mid \thisstate)\|_1^2\big] \leq \Ocal\bigg( \frac{\log\big(H\cdot N(\Mcal_h, \|\cdot\|_{\infty}, 1/n )/\delta\big)}{n}\bigg)
    $$ and 
    \begin{equation}\label{eq:bound-Q}
\E_{\Dcal_h}\big[\|\learnedq(\thisstate,\cdot) - \behaviorq(\thisstate, \cdot)\|_1^2\big] \leq \Ocal\bigg( \frac{H^2 e^{2H} \cdot |\Acal|^2\cdot \log\big(H\cdot N(\Mcal_h, \|\cdot\|_{\infty}, 1/n )/\delta\big)}{n}\bigg)
\end{equation}
hold for every $h\in[H]$ with probability at least $1-\delta/2$. By  $\widehat{V}_h(s) = \langle \widehat{Q}_h(s,\cdot), \learnedpolicy(\cdot\mid s)\rangle_{\Acal}$ for every $s_{h+1}\in\Scal$  , we have \begin{equation}\label{eq:bound-v}
\E_{\Dcal_h}\big[|\widehat{V}_{h+1}(s_{h+1}) - V^{\pi_b,\gamma}_{h+1}(s_{h+1})|^2\big]\leq \Ocal\bigg( \frac{H^2 e^{2H} \cdot |\Acal|^2\cdot \log\big(H\cdot N(\Mcal_h, \|\cdot\|_{\infty}, 1/n )/\delta\big)}{n}\bigg)
\end{equation}
for all $h\in[H]$ simultaneously.
In the following proof, we will condition on these events.
For notation simplicity, we define two function $f:\Xcal\rightarrow \R$ and $\widehat{f}:\Xcal\rightarrow \R$ for each $h\in[H]$, and dataset $\{x_i\}_{i\in[n]}$. We consider two cases:  (1)  $\widehat{f}_h = \learnedq$, $f_h = \behaviorq$, and  $x_i = (\thissai)$, $ \Xcal = \Scal\times\Acal,$  (2) $\widehat{f}_h = \learnednextv$, $f_h = V_{h+1}^{\pi_b,\gamma}$, and $x_i= \nextstatei$ , $\Xcal = \Scal$.  To bound (iv) and (v), we only need to uniformly bound \begin{equation}\label{eq:bound-target}
\bigg\|\sum_{i=1}^n \phi(\thissai) \big(f_h(x_i) - \hat{f}_h(x_i)\big)\bigg\|_{(\Lambda_h+\lambda I)^{-1}}.
\end{equation}
in both cases. We denote term $f_h(x_i) - \hat{f}_h(x_i)$ by $\epsilon_i$. Since we condition on \eqref{eq:bound-Q} and \eqref{eq:bound-v}, we have $$
\sum_{i=1}^n \epsilon_i^2 \leq \Ocal\bigg( {H^2 e^{2H} \cdot |\Acal|^2\cdot \log\big(H\cdot N(\Mcal_h, \|\cdot\|_{\infty}, 1/n )/\delta\big)}\bigg)
$$
for both cases (1) and (2). Meanwhile, we also have \begin{align*}
\eqref{eq:bound-target}^2 &=  \bigg(\sum_{i=1}^n\epsilon_i\phi(\thissai)\bigg)^\top\bigg(\lambda I+\sum_{i=1}^n \phi(\thissai)\phi(\thissai)^\top\bigg)^{-1}\bigg(\sum_{i=1}^n\epsilon_i\phi(\thissai)\bigg)\\
& = \operatorname{Tr}\bigg(\bigg(\sum_{i=1}^n\epsilon_i\phi(\thissai)\bigg)\bigg(\sum_{i=1}^n\epsilon_i\phi(\thissai)\bigg)^\top \bigg(\lambda I+\sum_{i=1}^n \phi(\thissai)\phi(\thissai)^\top\bigg)^{-1}\bigg).
\end{align*}
By Lemma \ref{lem:cauchy-matrix}, we have $$
\bigg(\sum_{i=1}^n\phi(\thissai)\epsilon_i\bigg)\bigg(\sumn\phi(\thissai)\epsilon_i\bigg)^\top \leq \Ocal\bigg(H^2e^{2H}\cdot|\Acal|^2\cdot\log(H\cdot N(\Theta,\|\cdot\|_\infty,1/n)/\delta)\bigg)\cdot \bigg(\sum_{i=1}^n \phi(\thissai)\phi(\thissai)^\top\bigg),
$$
and therefore we have 
\begin{align}\label{eq:2-bound}
&\operatorname{Tr}\bigg(\bigg(\sum_{i=1}^n\epsilon_i\phi(\thissai)\bigg)\bigg(\sum_{i=1}^n\epsilon_i\phi(\thissai)\bigg)^\top \bigg(\lambda I+\sum_{i=1}^n \phi(\thissai)\phi(\thissai)^\top\bigg)^{-1}\bigg)\nonumber\\
&\quad\quad \leq \Ocal\bigg(H^2e^{2H}\cdot|\Acal|\cdot\log(N(\Theta,\|\cdot\|_\infty,1/n)/\delta) \bigg)\nonumber\\
&\qquad\qquad\cdot\operatorname{Tr}\bigg(\bigg(\sumn \phi(\thissai)\phi(\thissai)^\top +\lambda\bigg)^{-1}\bigg(\sumn\phi(\thissai)\phi(\thissai)^\top\bigg)\bigg)\nonumber\\
&\quad\quad\leq  d\cdot H^2e^{2H}\cdot |\Acal|^2\log(N(\Theta,\|\cdot\|_\infty,1/n)/\delta).
\end{align}
here the last inequality comes from Lemma \ref{lem:semi-defi-positive}. Therefore we have $$
\text{(iii)}\leq \sqrt{d}He^H\cdot|\Acal|\cdot \sqrt{\log(N(\Theta,\|\cdot\|_\infty,1/n)/\delta)}
$$
Next, we bound (vi). We prove the following Lemma:
\begin{lemma}\label{lem:v-self-concen}
    Let $V:\Scal\rightarrow[0,H]$ be any fixed function. With our dataset $\Dcal = \{\Dcal_h\}_{h\in[H]}$, we have $$
    \bigg\|\sum_{i=1}^n \phi(\thissai)\bigg( \big(\P_hV_{h+1}(\thissai) - V_{h+1}(\nextstatei)\big)\bigg)\bigg\|^2_{(\Lambda_h+\lambda I)^{-1}}\leq H^2\cdot\big(2\cdot\log(H/\delta)+d\cdot\log(1+n/\lambda)\big)
    $$
    with probability at least $1-\delta$ for all $h\in[H]$.
\end{lemma}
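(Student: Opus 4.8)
The plan is to recognize the vector sum inside the norm as a self-normalized martingale and apply the standard concentration inequality for vector-valued martingales. A key point is that $V_{h+1}$ here is a \emph{fixed} function of the state --- it does not depend on the dataset --- so no covering/uniform-convergence argument is needed, which keeps the proof short. Fix $h\in[H]$, fix an ordering of the $n$ trajectories, and write $\epsilon_i := \P_h V_{h+1}(\thissai) - V_{h+1}(\nextstatei)$. Let $\mathcal{F}_{i-1}$ denote the $\sigma$-algebra generated by the first $i-1$ complete trajectories together with the pair $(\thissai)$ of the $i$-th trajectory, so that $\phi(\thissai)$ is $\mathcal{F}_{i-1}$-measurable. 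Since $\nextstatei\sim P_h(\cdot\mid \thissai)$, the definition of the transition operator gives $\E[V_{h+1}(\nextstatei)\mid\mathcal{F}_{i-1}] = \P_h V_{h+1}(\thissai)$, hence $\E[\epsilon_i\mid\mathcal{F}_{i-1}]=0$ and $\{\epsilon_i\}$ is a martingale difference sequence. Moreover $V_{h+1}\in[0,H]$ implies $\epsilon_i\in[-H,H]$, so conditionally on $\mathcal{F}_{i-1}$ each $\epsilon_i$ is sub-Gaussian with variance proxy at most $H^2$.

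Applying the self-normalized concentration bound for vector-valued martingales (Abbasi-Yadkori, P\'al and Szepesv\'ari, 2011) with design matrix $\Lambda_h=\sum_{i=1}^n\phi(\thissai)\phi(\thissai)^\top$ and ridge parameter $\lambda$ yields, with probability at least $1-\delta/H$,
\[
\bigg\|\sum_{i=1}^n\phi(\thissai)\,\epsilon_i\bigg\|_{(\Lambda_h+\lambda I)^{-1}}^2
\;\le\; 2H^2\log\!\bigg(\frac{\det(\Lambda_h+\lambda I)^{1/2}\,\det(\lambda I)^{-1/2}}{\delta/H}\bigg).
\]
To bound the log-determinant factor, note that by Assumption \ref{ass:regular}(ii) we have $\|\phi(\thissai)\|_2\le 1$, so $\tr(\Lambda_h)\le n$; applying the AM--GM inequality to the eigenvalues of $\Lambda_h+\lambda I$ gives $\det(\Lambda_h+\lambda I)\le(\lambda+n/d)^d$, while $\det(\lambda I)=\lambda^d$, whence
\[
\log\frac{\det(\Lambda_h+\lambda I)}{\det(\lambda I)}\;\le\; d\log\!\Big(1+\tfrac{n}{d\lambda}\Big)\;\le\; d\log\!\Big(1+\tfrac{n}{\lambda}\Big).
\]
Substituting, the right-hand side of the displayed bound is at most $2H^2\log(H/\delta)+H^2 d\log(1+n/\lambda)\le H^2\big(2\log(H/\delta)+d\log(1+n/\lambda)\big)$. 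A union bound over $h\in[H]$ then gives the stated inequality for all $h$ simultaneously with probability at least $1-\delta$.

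The only step requiring genuine care is the first one --- choosing the filtration so that $\phi(\thissai)$ is predictable while $\epsilon_i$ is a conditionally mean-zero, bounded increment; after that the argument is the textbook self-normalized bound together with an elementary determinant estimate, so I do not expect a real obstacle. It is worth flagging, though, that this lemma is deliberately stated for a \emph{fixed} $V$: when it is later invoked with the data-dependent $\widehat V_{h+1}$, one must additionally take a supremum over a covering net of the relevant value-function class and pay the corresponding $\log$-covering-number factor, but that refinement lies outside the scope of the present statement.
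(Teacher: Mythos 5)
Your proof is correct and follows essentially the same route as the paper's: set up the filtration so that $\phi(\thissai)$ is predictable and $\epsilon_i$ is a bounded, conditionally mean-zero increment, invoke the Abbasi-Yadkori--P\'al--Szepesv\'ari self-normalized bound, control the log-determinant via $\|\phi\|_2\le 1$, and union-bound over $h$. Your AM--GM determinant estimate $(\lambda+n/d)^d$ is marginally tighter than the paper's $(\lambda+n)^d$, but both relax to the same stated bound, and your closing remark about the need for a covering argument when $V$ is data-dependent correctly anticipates how the paper handles that case separately (via Theorem \ref{thm:ullm-pessi}).
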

\begin{proof}
    Note that for $V_{h+1}: \Scal\rightarrow \R$, we have $$\E[V_{h+1}(s_{h+1}^i) - \P_hV_{h+1}(s_h^i,a_h^i)\mid \Fcal_h^i] = \E[V_{h+1}(s_{h+1}^i) - \P_hV_{h+1}(s_h^i,a_h^i)\mid s_h^i,a_h^i] =0.$$
Here $\Fcal_h^i = \sigma\big(\{(s_t^i,a_t^i\}_{t=1}^h\big)$ is the filtration generated by state-action pair before step $h+1$ for the $i$-th trajectory.
We now invoke Lemma \ref{lem:self-norm-concen} with $M_0 =\lambda I$ and $M_n = \lambda I+\sum_{i=1}^n \phi(\thissai)\phi(\thissai)^\top $. We have \begin{align}\label{eq:(vi)}
    &\bigg\|\sum_{i=1}^n \phi(\thissai)\bigg( \big(\P_hV_{h+1}(\thissai) - V_{h+1}(\nextstatei)\big)\bigg)\bigg\|^2_{(\Lambda_h+\lambda I)^{-1}}\nonumber\\
    &\qquad\leq 2H^2\cdot\log\bigg(H\cdot\frac{\operatorname{det}(\Lambda_h+\lambda I)^{1/2}}{\delta\cdot\operatorname{det}(\lambda I)^{1/2}}\bigg)\nonumber
\end{align}
with probability at least $1-\delta/2$. Recall that by Assumption \ref{ass:regular}, $\|\phi(s,a)\|_2\leq 1$ and therefore we have $\operatorname{det}( \Lambda_h + \lambda I)  \leq (\lambda + n)^d$. Also we have $\operatorname{det}(\lambda I) = \lambda^d$, and we have \begin{align}
    &\bigg\|\sum_{i=1}^n \phi(\thissai)\bigg( \big(\P_hV_{h+1}(\thissai) - V_{h+1}(\nextstatei)\big)\bigg)\bigg\|^2_{(\Lambda_h+\lambda I)^{-1}}\nonumber\\
    &\qquad\leq H^2\cdot\big(2\cdot\log(H/\delta)+d\cdot\log(1+n/\lambda)\big).\nonumber
\end{align}
\end{proof}

By \eqref{ineq:bellman-transform}, \eqref{eq:2-bound}, Lemma \ref{lem:v-self-concen} and Assumption \ref{ass:regular}, we have $$
\text{(iii)}\leq \Ocal\bigg((1+\gamma)\cdot \sqrt{d\cdot H^2e^{2H}\cdot|\Acal|\cdot\log\big(H\cdot N(\Theta,\|\cdot\|_\infty,1/n)/\delta\big)}\bigg)\leq \Ocal\bigg((1+\gamma)\cdot dHe^H\sqrt{|\Acal|\log(nH/\lambda\delta)}\bigg).
$$
Therefore $\text{(ii) }\leq \Ocal\bigg((1+\gamma)\cdot |\Acal| \cdot d\cdot He^H\sqrt{\log(nH/\lambda\delta)}\bigg)\cdot \|\phi(s,a)\|_{(\Lambda_h+\lambda I)^{-1}}$. Combined with \eqref{eq:bound(i)}, we conclude the proof of Theorem \ref{thm:reward-est}.
\section{Proof for Theorem \ref{thm:pess-vi-linear-cover}}\label{sec:prove-pess-vi}
In this section, we prove Theorem \ref{thm:pess-vi-linear-cover}. First, we invoke the following theorem, whose proof can be found in \cite{jin2021pessimism}, Appendix 5.2.
\begin{theorem}[Theorem 4.2 in \cite{jin2021pessimism}]\label{thm:jin-main}
Suppose $\left\{\Gamma_h\right\}_{h=1}^H$ in Algorithm \ref{alg:pess-value-iter} is a uncertainty quantifier defined in \eqref{eq:uncertainty-quant}. Under the event which  \eqref{eq:uncertainty-quant} holds, suboptimality of Algorithm \ref{alg:pess-value-iter} satisfies
$$
\operatorname{SubOpt}(\{\tilde{\pi}_h\}_{h\in[H]}) \leq 2 \sum_{h=1}^H \mathbb{E}_{\pi^*}\left[\Gamma_h\left(s_h, a_h\right) \right] .
$$
Here $\mathbb{E}_{\pi^*}$ is with respect to the trajectory induced by $\pi^*$ in the underlying MDP.
\end{theorem}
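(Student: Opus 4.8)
The plan is to prove the bound by a model-evaluation-error decomposition of the suboptimality, followed by the pessimism property of the clamped value iterates. For each $h\in[H]$ define the \emph{model evaluation error}
$$
\iota_h(s,a) = \big(r_h + \P_h\widetilde{V}_{h+1}\big)(s,a) - \widetilde{Q}_h(s,a),
$$
which records how far the learned action value $\widetilde{Q}_h$ is from the true one-step Bellman backup of $\widetilde{V}_{h+1}$. The first step is to establish the exact identity
\begin{align*}
V_1^{\pi^*}(s_1) - V_1^{\widetilde{\pi}}(s_1) &= \sum_{h=1}^H \E_{\pi^*}\big[\langle \widetilde{Q}_h(s_h,\cdot),\, \pi_h^*(\cdot\mid s_h) - \widetilde{\pi}_h(\cdot\mid s_h)\rangle_\Acal \big] \\
&\quad + \sum_{h=1}^H \E_{\pi^*}\big[\iota_h(s_h,a_h)\big] - \sum_{h=1}^H \E_{\widetilde{\pi}}\big[\iota_h(s_h,a_h)\big],
\end{align*}
where the expectations are over trajectories induced by $\pi^*$ and $\widetilde{\pi}$ respectively, started at $s_1$. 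I will obtain this from an extended value-difference argument: writing $V_1^{\pi^*}-V_1^{\widetilde{\pi}} = (V_1^{\pi^*}-\widetilde{V}_1) + (\widetilde{V}_1-V_1^{\widetilde{\pi}})$ and telescoping each difference over $h$ using the recursion $\widetilde{V}_h(s)=\langle \widetilde{Q}_h(s,\cdot),\widetilde{\pi}_h(\cdot\mid s)\rangle_\Acal$ together with the true Bellman equations for $V^{\pi^*}$ and $V^{\widetilde{\pi}}$, after which the cross terms collapse exactly into the policy-mismatch term and the two $\iota_h$ sums above.

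The second step is a pointwise sandwich: on the event that $\{\Gamma_h\}_{h\in[H]}$ is an uncertainty quantifier, i.e. that \eqref{eq:uncertainty-quant} holds, I claim
$$
0 \le \iota_h(s,a) \le 2\Gamma_h(s,a) \qquad \text{for all } (s,a)\in\Scal\times\Acal,\ h\in[H].
$$
Letting $\bar{Q}_h = \widehat{r}_h + \widetilde{\P}_h\widetilde{V}_{h+1} - \Gamma_h$ denote the pre-clamp estimate, \eqref{eq:uncertainty-quant} gives directly $\big(r_h+\P_h\widetilde{V}_{h+1}\big) - 2\Gamma_h \le \bar{Q}_h \le \big(r_h+\P_h\widetilde{V}_{h+1}\big)$. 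Since $\widetilde{Q}_h = \{\min(\bar{Q}_h,\, H-h+1)\}_{+}$ is the projection of $\bar{Q}_h$ onto $[0,H-h+1]$, and since $0 \le r_h+\P_h\widetilde{V}_{h+1}\le H-h+1$ (using $r_h\in[0,1]$ and $\widetilde{V}_{h+1}\in[0,H-h]$, the latter following inductively from the clamp), the monotone projection preserves both inequalities, yielding the sandwich.

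Finally I combine the two steps. The policy-mismatch term is nonpositive because $\widetilde{\pi}_h$ is the greedy maximizer of $\langle\widetilde{Q}_h(s_h,\cdot),\cdot\rangle_\Acal$, so $\langle\widetilde{Q}_h(s_h,\cdot),\pi_h^*(\cdot\mid s_h)-\widetilde{\pi}_h(\cdot\mid s_h)\rangle_\Acal\le 0$ pointwise, hence in $\pi^*$-expectation. The term $-\sum_h\E_{\widetilde{\pi}}[\iota_h]$ is nonpositive by the lower half of the sandwich $\iota_h\ge 0$. The remaining term is controlled by the upper half, $\sum_h\E_{\pi^*}[\iota_h]\le 2\sum_h\E_{\pi^*}[\Gamma_h(s_h,a_h)]$, which is precisely the asserted bound. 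The main obstacle is the pointwise sandwich: the clamp $\{\cdot\}_{+}$ at both $0$ and $H-h+1$ must be shown to break neither inequality. The crux is that the lower uncertainty-quantifier inequality forces $\bar{Q}_h \le r_h+\P_h\widetilde{V}_{h+1}\le H-h+1$, so the upper clamp is inactive except on the boundary (where $\iota_h=0$) and the projection onto $[0,H-h+1]$ is benign; handling this cleanly, rather than by brittle case analysis on which branch of the clamp is active, is the delicate point of the argument.
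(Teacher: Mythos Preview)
Your proposal is correct and follows exactly the standard argument from \cite{jin2021pessimism}, Appendix 5.2, which is precisely what the paper defers to (the paper does not reprove this theorem but simply invokes the cited result). The decomposition via the model evaluation error $\iota_h$, the pointwise sandwich $0\le\iota_h\le 2\Gamma_h$ obtained from the uncertainty-quantifier inequality and the clamp, and the greedy property of $\widetilde{\pi}_h$ are all as in the original proof.
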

With Theorem \eqref{thm:jin-main}, our proof for Theorem \eqref{thm:pess-vi-linear-cover} then proceeds in \textbf{two steps}: (1) We prove that our uncertainty quantifier defined in \ref{eq:penalty-linear}, with $\beta$ defined in \ref{alg:pess-value-iter}, is an uncertainty quantifier, with probability at least $1-\delta/2$; (2) We prove that with penalty function set in \ref{eq:penalty-linear}, we can bound $
\sum_{h=1}^H \mathbb{E}_{\pi^*}\left[\Gamma_h\left(s_h, a_h\right) \right]
$
with probability at least $1-\delta/2$.
\paragraph{Step (1).} We now prove that $\Gamma_h$ defined in \ref{eq:penalty-linear} is an uncertainty quantifier, with $$\beta = \Ocal\bigg({He^{H}}\cdot|\Acal|\cdot d\sqrt{\log\big({nH}/{\delta}\big)}\bigg).$$
We have \begin{align*}
    &\big|\big(\widehat{r}_h + \widetilde{\P}_h\widetilde{V}_{h+1}\big)(s,a) - \big(r_h + \P_h\widetilde{V}_{h+1}\big)(s,a)   \big| \\
    &\qquad\leq \underbrace{\big|\widehat{r}_h(s,a) - r_h(s,a)\big|}_{\text{(i)}} + \underbrace{\big|\widetilde{\P}_h\widetilde{V}_{h+1}(s,a) -   \P_h\widetilde{V}_{h+1}(s,a)\big|}_{\text{(ii)}},
\end{align*}

To bound (i), recall that we construct $\widehat{r}_h$ by Algorithm \ref{alg:reward-learn} with guarantee $$
|r_h(s,a) - \widehat{r}_h(s,a)|\leq \|\phi(s,a)\|_{(\Lambda_h+\lambda I)^{-1}} \cdot \Ocal\big((1+\gamma)\cdot He^H |\Acal|\cdot d \sqrt{\log(nH/\delta)}\big)
$$
for all $(s,a)\in\Scal\times\Acal$ with $\lambda = 1$. 
To bound (ii), recall that we construct $\widetilde{\P}_h\widetilde{V}_{h+1}(s,a) = \phi(s,a)\cdot \widetilde{u}_h $ by the Algorithm \ref{alg:pess-value-iter},  $$
\widetilde{u}_h = \operatorname{argmin}_{u}\sum_{i=1}^n \big(\phi(\thissai)\cdot u - \widetilde{V}_{h+1}(\nextstatei)\big)^2 + \lambda\cdot\|u\|^2,
$$
note that we have a closed form solution for $\widetilde{u}_h$,\begin{align*}
    \widetilde{u}_h = \big(\Lambda_h + \lambda I\big)^{-1} \bigg(\sum_{i=1}^n \phi(\thissai) \widetilde{V}_{h+1}(\nextstatei) \bigg),
\end{align*}
And by Assumption \ref{ass:linear-transition}, we have $\P_h\widetilde{V}_{h+1}(s,a) = \phi(s,a)\cdot u_h$ with $\|u_h\|\leq (H-h+1)\sqrt{d}$, therefore we have \begin{align*}
    \big|\widetilde{\P}_h\widetilde{V}_{h+1}(s,a) - \P_h \widetilde{V}_{h+1}(s,a)\bigg| & = |\phi(s,a)\big(u_h - \widetilde{u}_h\big)|\\
    & = \bigg|\phi(s,a)\big(\Lambda_h+\lambda I\big)^{-1}\bigg(\sum_{i=1}^n \phi(\thissai)\big(\widetilde{V}_{h+1}(\nextstatei) - \P_h\widetilde{V}_{h+1}(\thissai)\big)\bigg)\\
    &\qquad+ \phi(s,a)\big(\Lambda_h
    +\lambda I\big)^{-1}u_h\bigg|\\
    &\leq \bigg|\phi(s,a)\big(\Lambda_h+\lambda I\big)^{-1}\bigg(\sum_{i=1}^n \phi(\thissai)\big(\widetilde{V}_{h+1}(\nextstatei) - \P_h\widetilde{V}_{h+1}(\thissai)\big)\bigg)\bigg|\\
    &\qquad+\lambda\cdot\big|\phi(s,a)\big(\Lambda_h
    +\lambda I\big)^{-1}u_h\big|,
\end{align*}
and with Caucht-Schwarz inequality we have \begin{align}
\big|\widetilde{\P}_h\widetilde{V}_{h+1}(s,a) - \P_h \widetilde{V}_{h+1}(s,a)\big| &\leq \|\phi(s,a)\|_{\big(\Lambda_h
    +\lambda I\big)^{-1}}\cdot\bigg(
    \lambda\|u_h\|_{(\Lambda_h+\lambda I)^{-1}}\nonumber
    \\&\qquad+ \bigg\|\bigg(\sum_{i=1}^n \phi(\thissai)\big(\widetilde{V}_{h+1}(\nextstatei) - \P_h\widetilde{V}_{h+1}(\thissai)\big)\bigg)\bigg\|_{(\Lambda_h+\lambda I)^{-1}} \nonumber
    \bigg)\\
    &\leq \|\phi(s,a)\|_{(\Lambda_h+\lambda I)^{-1}}\cdot\bigg(H\sqrt{\lambda d} \nonumber\\
    &\qquad+ \underbrace{ \bigg\|\bigg(\sum_{i=1}^n \phi(\thissai)\big(\widetilde{V}_{h+1}(\nextstatei) - \P_h\widetilde{V}_{h+1}(\thissai)\big)\bigg)\bigg\|_{(\Lambda_h+\lambda I)^{-1}}}_{\text{(iii)}}
    \bigg).\nonumber
\end{align}
Completing the first step now suffices to bound (iii). However, (iii) is a self-normalizing summation term with $\widetilde{V}_{h+1}$ depends on  dataset $\{(s_t^i,a_t^i)\}_{t>h, i\in[n]}$, therefore we cannot directly use Lemma \ref{lem:self-norm-concen}. We first prove the following lemma, which bound $\|\widetilde{u}_h + \widehat{w}_h\|$.
\begin{lemma}
    In Algorithm \ref{alg:pess-value-iter}, we have $$
    \|\widetilde{u}_h + \widehat{w}_h\|\leq 2H\sqrt{nd/\lambda}.
    $$
\end{lemma}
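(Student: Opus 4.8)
The plan is to substitute the closed-form expressions for the two ridge-regression solutions and then run the standard self-bounding argument for ridge estimators; this is exactly the a priori bound needed to control the complexity of the class of value functions $\widetilde{V}_{h+1}$ can realize, which feeds the covering argument that will bound the self-normalized term~(iii). First I would combine the closed forms: from~\eqref{eq:ridge-sol}, $\widehat{w}_h = (\Lambda_h+\lambda I)^{-1}\sum_{i=1}^n \phi(\thissai)\big(\learnedq(\thissai) - \gamma\cdot\learnednextv(\nextstatei)\big)$, while the second line of Algorithm~\ref{alg:pess-value-iter} gives $\widetilde{u}_h = (\Lambda_h+\lambda I)^{-1}\sum_{i=1}^n \phi(\thissai)\widetilde{V}_{h+1}(\nextstatei)$, so that
\begin{equation*}
\widetilde{u}_h + \widehat{w}_h = (\Lambda_h+\lambda I)^{-1}\sum_{i=1}^n \phi(\thissai)\,\eta_i, \qquad \eta_i := \learnedq(\thissai) - \gamma\cdot\learnednextv(\nextstatei) + \widetilde{V}_{h+1}(\nextstatei).
\end{equation*}
Since $\learnedq\in\Mcal_h$ is valued in $[0,H]$, $\learnednextv$ is a convex combination of values of a function in $\Mcal_{h+1}$ hence also in $[0,H]$, $\widetilde{V}_{h+1}$ is valued in $[0,H-h]\subseteq[0,H]$, and $\gamma\in[0,1]$, each scalar coefficient satisfies $\eta_i\in[-H,2H]$, so $|\eta_i|\le 2H$.

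Next I would bound $\|\widetilde{u}_h+\widehat{w}_h\|$ by duality against an arbitrary $v\in\R^d$. Applying Cauchy--Schwarz term by term in the inner product $\langle x,y\rangle_{(\Lambda_h+\lambda I)^{-1}} = x^\top(\Lambda_h+\lambda I)^{-1}y$ gives
\begin{equation*}
\big|v^\top(\widetilde{u}_h+\widehat{w}_h)\big| \le 2H\,\|v\|_{(\Lambda_h+\lambda I)^{-1}}\sum_{i=1}^n\|\phi(\thissai)\|_{(\Lambda_h+\lambda I)^{-1}}.
\end{equation*}
A further Cauchy--Schwarz over $i$ yields $\sum_{i=1}^n\|\phi(\thissai)\|_{(\Lambda_h+\lambda I)^{-1}} \le \sqrt{n}\,\big(\tr((\Lambda_h+\lambda I)^{-1}\Lambda_h)\big)^{1/2}\le \sqrt{nd}$, using Lemma~\ref{lem:semi-defi-positive} (equivalently, that the eigenvalues of $(\Lambda_h+\lambda I)^{-1}\Lambda_h$ lie in $[0,1)$). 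Combining with $\|v\|_{(\Lambda_h+\lambda I)^{-1}}\le\|v\|/\sqrt{\lambda}$ and then choosing $v=\widetilde{u}_h+\widehat{w}_h$ gives $\|\widetilde{u}_h+\widehat{w}_h\|^2\le 2H\sqrt{nd/\lambda}\,\|\widetilde{u}_h+\widehat{w}_h\|$, and dividing through yields the claimed bound.

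I do not expect a genuine obstacle in this lemma; the two points to get right are (a) the scalar bound, which is $2H$ rather than $3H$ because $\learnedq\ge 0$ and $\widetilde{V}_{h+1}\ge 0$ so only $-\gamma\learnednextv$ contributes a negative part of magnitude at most $H$, and (b) using the self-bounding choice $v=\widetilde{u}_h+\widehat{w}_h$ together with the trace inequality, rather than the crude estimate $\|(\Lambda_h+\lambda I)^{-1}\|_{\mathrm{op}}\le\lambda^{-1}$, which would only give the much weaker $\Ocal(Hn/\lambda)$.
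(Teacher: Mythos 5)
Your proof is correct and takes essentially the same route as the paper's: the paper simply splits $\|\widetilde{u}_h+\widehat{w}_h\|\leq\|\widetilde{u}_h\|+\|\widehat{w}_h\|$ and bounds each term by $H\sqrt{nd/\lambda}$ via the same Cauchy--Schwarz-over-$i$ plus $\tr\big((\Lambda_h+\lambda I)^{-1}\Lambda_h\big)\leq d$ computation, whereas you keep the two sums together and bound the combined coefficient by $2H$. Your self-bounding choice $v=\widetilde{u}_h+\widehat{w}_h$ is a harmless repackaging of the direct norm bound, and both arguments yield the identical constant $2H\sqrt{nd/\lambda}$.
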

\begin{proof}
    For the proof we only need to bound $\|\widetilde{u}_h\|$ and $\|\widehat{w}_h\|$ respectively. First we have \begin{align*}
        \|\widetilde{u}_h\|&= \bigg\|\big(\Lambda_h + \lambda I\big)^{-1} \bigg(\sum_{i=1}^n \phi(\thissai) \widetilde{V}_{h+1}(\nextstatei) \bigg)\bigg\|\\
        &\leq H\cdot \sum_{i=1}^n \big\|\big(\Lambda_h + \lambda I\big)^{-1}\phi(s_h^i,a_h^i)\big\|\\
        &\leq H\sqrt{n/\lambda}\cdot\sqrt{\operatorname{Tr}\big( \big(\Lambda_h + \lambda I\big)^{-1}\Lambda_h  \big)}\\
        &\leq H\sqrt{nd/\lambda}.
    \end{align*}
    Here the first inequality comes from $\widetilde{V}_{h+1} \in [0,H]$, and the second inequality comes from Jensen's inequality. Similarly, we have \begin{align*}
        \|\widehat{w}_h\|&= \bigg\|\big(\Lambda_h + \lambda I\big)^{-1} \bigg(\sum_{i=1}^n \phi(\thissai) \big(\widehat{Q}_h(s_h^i,a_h^i) - \widehat{V}_{h+1}(s_{h+1}^i)\big)\bigg)\bigg\|\\
        &\leq H\sqrt{nd/\lambda}.
    \end{align*}
    Therefore we complete the proof.
\end{proof}
With $\|\widetilde{u}_h + \widehat{w}_h\|$ bounded, we can now invoke Theorem \ref{thm:ullm-pessi} to bound term (iii). Set $R_0 = 2H\sqrt{nd/\lambda}$, $B = 2\beta$, $\lambda = 1$ and $\epsilon = dH/n$, we have \begin{align*}
    \text{(iii)} &\leq \sup _{V \in \mathcal{V}_{h+1}(R, B, \lambda)}\left\|\sum_{i=1}^n \phi\left(s_h^i,a_h^i\right) \cdot \big(V(s_{h+1}^i) - \E[V(s_{h+1})\mid s_h^i,a_h^i]\big)\right\|_{(\Lambda_h+\lambda I)^{-1}}\\
    &\leq \Ocal(dH\cdot\log(dHn/\delta)).
\end{align*}
Therefore we have $$
\big|\widetilde{\P}_h\widetilde{V}_{h+1}(s,a) - \P_h \widetilde{V}_{h+1}(s,a)\big| \leq \|\phi(s,a)\|_{(\Lambda_h+\lambda I)^{-1}}\cdot\Ocal\big(dH\cdot\log(dHn/\delta)+H\sqrt{d}\big).
$$
Set $\lambda = 1$ in Theorem \ref{thm:reward-est}, we have \begin{equation}\label{eq:bounded-bellman}
|r_h(s,a) - \widehat{r}_h(s,a)| + \big|\widetilde{\P}_h\widetilde{V}_{h+1}(s,a) - \P_h \widetilde{V}_{h+1}(s,a)\big|\leq \beta \|\phi(s,a)\|_{(\Lambda_h+\lambda I)^{-1}}
\end{equation}
holds with probability at least $1-\delta$.
Recall that $\Gamma_h = \beta \|\phi(s,a)\|_{(\Lambda_h+\lambda I)^{-1}}$, we prove that $\Gamma_h$ is an uncertainty quantifier defined in \ref{eq:uncertainty-quant}. To finish the proof of Theorem \ref{thm:pess-vi-linear-cover}, it suffices to finish the proof  of the second step, i.e., we bound the term $$
\sum_{i=1}^n \E_{\pi^*}[\Gamma_h(s_h,a_h)] = \sum_{i=1}^n \beta\cdot \E_{\pi^*}[\|\phi(s_h,a_h)\|_{(\Lambda_h+\lambda I)^{-1}}] .
$$
\paragraph{Step (2).}
By  Cauchy-Schwarz inequality, we have
\begin{equation}\label{eq:jensen}
\begin{aligned}
\mathbb{E}_{\pi^*} & {\left[\left(\phi\left(s_h, a_h\right)^{\top} (\Lambda_h+\lambda I)^{-1} \phi\left(s_h, a_h\right)\right)^{1 / 2} \right] } \\
& =\mathbb{E}_{\pi^*}\left[\sqrt{\operatorname{Tr}\left(\phi\left(s_h, a_h\right)^{\top} (\Lambda_h+\lambda I)^{-1} \phi\left(s_h, a_h\right)\right) } \right] \\
& =\mathbb{E}_{\pi^*}\left[\sqrt{\operatorname{Tr}\left(\phi\left(s_h, a_h\right) \phi\left(s_h, a_h\right)^{\top} (\Lambda_h+\lambda I)^{-1}\right) } \right] \\
& \leq \sqrt{\operatorname{Tr}\left(\mathbb{E}_{\pi^*}\left[\phi\left(s_h, a_h\right) \phi\left(s_h, a_h\right)^{\top} \right] \Lambda_h^{-1}\right)}
\end{aligned}
\end{equation}

for all $h\in[H]$.
For notational simplicity, we define
$$
\Sigma_h=\mathbb{E}_{\pi^*}\big[\phi\left(s_h, a_h\right) \phi\left(s_h, a_h\right)^{\top} \big]
$$
for all $h\in[H]$. Condition on the event in Equation \eqref{eq:bounded-bellman} and with Assumption \ref{ass:coverage-ass}, we have 
\begin{align*}
       \operatorname{SubOpt}\big(\{\widetilde{\pi}_h\}_{h\in[H]}\big) &\leq 2\beta\cdot \sum_{h=1}^H \E_{\pi^*}\bigg[\phi(s_h,a_h)^\top(\Lambda_h+\lambda\cdot I)^{-1}\phi(s_h,a_h) \bigg]\\
       &\begin{aligned}
& \leq 2 \beta \sum_{h=1}^H \sqrt{\operatorname{Tr}\left(\Sigma_h \cdot\left(I+c^{\dagger} \cdot n \cdot \Sigma_h\right)^{-1}\right)} \\
& =2 \beta \sum_{h=1}^H \sqrt{\sum_{j=1}^d \frac{\lambda_{h, j}}{1+c^{\dagger} \cdot n \cdot \lambda_{h, j}}},
\end{aligned}
\end{align*}
here $\{\lambda_{h,j}\}_{j=1}^d$ are the eigenvalues of $\Sigma_h$. The first inequality comes from the event in Equation \eqref{eq:bounded-bellman}, and the second inequality comes from Equation \eqref{eq:jensen}. Meanwhile, by Assumption \ref{ass:regular}, we have $\|\phi(s,a)\|\leq 1$ for all $(s,a)\in\Scal\times\Acal$. 
By Jensen’s inequality, we have $$
\|\Sigma_h\|_{2} \leq \mathbb{E}_{\pi^*}\big[\|\phi\left(s_h, a_h\right) \phi\left(s_h, a_h\right)^{\top}\|_{2}\big] \leq 1
$$ for all $h\in[H]$, for all $s_h \in \mathcal{S}$ and all $h \in[H]$. As $\Sigma_h$ is positive semidefinite, we have $\lambda_{h, j} \in[0,1]$ for all $x \in \mathcal{S}$, all $h \in[H]$, and all $j \in[d]$. Hence, we have
$$
\begin{aligned}
\operatorname{SubOpt}(\{\widetilde{\pi}_h\}_{h\in[H]}) & \leq 2 \beta \sum_{h=1}^H \sqrt{\sum_{j=1}^d \frac{\lambda_{h, j}}{1+c^{\dagger} \cdot n \cdot \lambda_{h, j}}} \\
& \leq 2 \beta \sum_{h=1}^H \sqrt{\sum_{j=1}^d \frac{1}{1+c^{\dagger} \cdot n}} \leq c^{\prime} \cdot d^{3 / 2} H^2 n^{-1 / 2} \sqrt{\xi},
\end{aligned}
$$
where $\xi = \sqrt{\log(dHn/\delta)}$, the second inequality follows from the fact that $\lambda_{h, j} \in[0,1]$ for all  $h \in[H]$, and all $j \in[d]$, while the third inequality follows from the choice of the scaling parameter $\beta>0$ in Theorem \ref{thm:pess-vi-linear-cover}. Here we define the absolute constant $c^{\prime}=2 c / \sqrt{c^{\dagger}}>0$, where $c^{\dagger}>0$ is the absolute constant used in Assumption \ref{ass:coverage-ass}. Moreover, we consider the case of $\operatorname{rank}(\Sigma_h)\leq r$. Then we have 
\begin{align*}
    \operatorname{SubOpt}(\{\widetilde{\pi}_h\}_{h\in[H]}) & \leq 2 \beta \sum_{h=1}^H \sqrt{\sum_{j=1}^d \frac{\lambda_{h, j}}{1+c^{\dagger} \cdot n \cdot \lambda_{h, j}}} \\
    &=  2 \beta \sum_{h=1}^H \sqrt{\sum_{j=1}^r \frac{\lambda_{h, j}}{1+c^{\dagger} \cdot n\cdot\lambda_{h, j}}}  \\
& \leq 2 \beta \sum_{h=1}^H \sqrt{\sum_{j=1}^r \frac{1}{1+c^{\dagger} \cdot n}} \leq c^{\prime} \cdot r^{1/2} d H^2 n^{-1 / 2} \sqrt{\xi}
\end{align*}
Thus we finish the proof for Theorem \ref{thm:pess-vi-linear-cover}.
\section{Proof for RKHS Case}\label{sec:rkhs-proof}
In this section, we prove the results of DCPPO in RKHS model class. In the following, we adopt an equivalent set of notations for ease of presentation. We formally write the inner product in $\mathcal{H}$ as $\left\langle f, f^{\prime}\right\rangle_{\mathcal{H}}=f^{\top} f^{\prime}=f^{\prime \top} f$ for any $f, f^{\prime} \in \mathcal{H}$, so that $f(z)=\langle\phi(z), f\rangle_{\mathcal{H}}=f^{\top} \phi(z)$ for any $f \in \mathcal{H}$ and any $z \in \mathcal{Z}$. Moreover we denote the operators $\Phi_h: \mathcal{H} \rightarrow \mathbb{R}^n$ and $\Lambda_h: \mathcal{H} \rightarrow \mathcal{H}$ as
$$
\Phi_h=\left(\begin{array}{c}
\phi\left(z_h^{1}\right)^{\top} \\
\vdots \\
\phi\left(z_h^{n}\right)^{\top}
\end{array}\right), \quad \Lambda_h=\lambda \cdot I_{\mathcal{H}}+\sum_{i\in[n]} \phi\left(z_h^\tau\right) \phi\left(z_h^\tau\right)^{\top}=\lambda \cdot I_{\mathcal{H}}+\Phi_h^{\top} \Phi_h
$$
where $I_{\mathcal{H}}$ is the identity mapping in $\mathcal{H}$ and all the formal matrix multiplications follow the same rules as those for real-valued matrix. In this way, these operators are well-defined. Also, $\Lambda_h$ is a self-adjoint operator eigenvalue no smaller than $\lambda$, in the sense that $\langle f, \Lambda_h g\rangle=\langle\Lambda_h f, g\rangle$ for any $f, g \in \mathcal{H}$. Therefore, there exists a positive definite operator $\Lambda_h^{1 / 2}$ whose eigenvalues are no smaller than $\lambda^{1 / 2}$ and $\Lambda_h=\Lambda_h^{1 / 2} \Lambda_h^{1 / 2}$. We denote the inverse of $\Lambda_h^{1 / 2}$ as $\Lambda_h^{-1 / 2}$, so that $\Lambda_h^{-1}=\Lambda_h^{-1 / 2} \Lambda_h^{-1 / 2}$ and $\left\|\Lambda_h^{-1 / 2}\right\|_{\mathcal{H}} \leq \lambda^{-1 / 2}$. For any $z \in \mathcal{Z}$, we denote $\Lambda_h(z)=\Lambda_h+\phi(z) \phi(z)^{\top}$. 
In particular, it holds that $$
\left(\Phi_h^{\top} \Phi_h+\lambda \cdot I_{\mathcal{H}}\right) \Phi_h^{\top}=\Phi_h^{\top}\left(\Phi_h \Phi_h^{\top}+\lambda \cdot I\right).
$$
Since both the matrix $\Phi_h \Phi_h^{\top}+\lambda \cdot I$ and the operator $\Phi_h^{\top} \Phi_h+\lambda \cdot I_{\mathcal{H}}$ are strictly positive definite, we have
\begin{equation}\label{eq:transform}
\Phi_h^{\top}\left(\Phi_h \Phi_h^{\top}+\lambda \cdot I\right)^{-1}=\left(\Phi_h^{\top} \Phi_h+\lambda \cdot I_{\mathcal{H}}\right)^{-1} \Phi_h^{\top}.
\end{equation}
Our learning process would depend on the "complexity" of the dataset sampled by $\pi_b$. To measure this complexity, we make the following definition.\begin{definition}[Effective dimension]\label{def:eff-dim}
     For all $h \in[H]$, Denote $\Sigma_h=\mathbb{E}_{\pi^b}\big[\phi\left(z_h\right) \phi\left(z_h\right)^{\top} \big], \Sigma_h^*=\mathbb{E}_{\pi^*}\big[\phi\left(z_h\right) \phi\left(z_h\right)^{\top} \big]$, where $\mathbb{E}_{\pi^*}$ is taken with respect to $\left(s_h, a_h\right)$ induced by the optimal policy $\pi^*$, and $\mathbb{E}_{\pi^b}$ is similarly induced by the behavior policy $\pi^b$. We define the (sample) effective dimension as
$$
d_{\text {eff }}^{\text {sample }}=\sum_{h=1}^H \operatorname{Tr}\left(\left(\Lambda_h+\lambda \mathcal{I}_{\mathcal{H}}\right)^{-1} \Sigma_h\right)^{1 / 2} .
$$
Moreover, we define the population effective dimension under $\pi^b$ as
$$
d_{\text {eff }}^{p o p}=\sum_{h=1}^H \operatorname{Tr}\left(\left(n \cdot \Sigma_h+\lambda \mathcal{I}_{\mathcal{H}}\right)^{-1} \Sigma_h^*\right)^{1 / 2}.
$$
\end{definition}
\subsection{Proof for Theorem \ref{thm:reward-est-rkhs}}\label{sec:reward-est-rkhs}
Our proof for reward estimation in RKHS model class is very similar to the proof of linear model MDP, which can be found in Section \ref{sec:prove-reward-est}.
To prove Theorem \ref{thm:reward-est-rkhs}, we first invoke Theorem \ref{thm:emp-mle-guarantee}, and we have $$
\E_{\Dcal_h}\big[\|\learnedpolicy(\cdot\mid\thisstate) - \behpolicy(\cdot\mid \thisstate)\|_1^2\big] \leq \Ocal\bigg( \frac{\log\big(H\cdot N(\Qcal, \|\cdot\|_{\infty}, 1/n )/\delta\big)}{n}\bigg)
    $$ and 
    \begin{equation}\label{eq:bound-Q-rkhs}
\E_{\Dcal_h}\big[\|\learnedq(\thisstate,\cdot) - \behaviorq(\thisstate, \cdot)\|_1^2\big] \leq \Ocal\bigg( \frac{H^2 e^{2H} \cdot |\Acal|^2\cdot \log\big(H\cdot N(\Qcal, \|\cdot\|_{\infty}, 1/n )/\delta\big)}{n}\bigg)
\end{equation}
hold for every $h\in[H]$ with probability at least $1-\delta/2$. Here the model class $\Qcal$ is defined in Assumption \ref{ass:rkhs-retain}. Conditioning on this event, we have \begin{equation}\label{eq:bound-v-rkhs}
\E_{\Dcal_h}\big[|\widehat{V}_{h+1}(s_{h+1}) - V^{\pi_b,\gamma}_{h+1}(s_{h+1})|^2\big]\leq \Ocal\bigg( \frac{H^2 e^{2H} \cdot |\Acal|^2\cdot \log\big(H\cdot N(\Qcal, \|\cdot\|_{\infty}, 1/n )/\delta\big)}{n}\bigg)
\end{equation}
for all $h\in[H]$ 
and all $s_{h+1}\in\Scal$ simultaneously.
By Algorithm \ref{alg:reward-learn}, we have $$
    \widehat{r}_h = (\Lambda_h + \lambda I)^{-1} \bigg( \sum_{i=1}^n \phi(z_h^i)\big(\learnedq(z_h^i) - \gamma\cdot\learnednextv(\nextstatei)\big)\bigg),
$$

Recall that we denote $(s,a)\in\Scal\times\Acal$ by $z\in\Zcal$. Since we have $r_h(z) = \phi(z)\cdot r_h$. By our construction for $\widehat{r}_h$ in Algorithm \ref{alg:reward-learn}, we therefore have \begin{align*}
    |r_h(z) - \widehat{r}_h(z)| &= |\phi(z) (r_h - \widehat{r}_h)|\\
    &= \bigg|\phi(z)\big(\Lambda_h + \lambda I\big)^{-1}\bigg(\lambda\cdot r_h+\sum_{i=1}^n \phi(z_h^i)\big(\learnedq(z_h^i) - \gamma\cdot \widehat{V}_{h+1}(s_{h+1}^i) - r_h(z_h^i)\big)\bigg)\bigg|\\
    &\leq \underbrace{\lambda\cdot|\phi(z)\big(\Lambda_h+\lambda I\big)^{-1}r_h|}_{\text{(i)}} \\
    &\qquad+ \underbrace{\bigg|\phi(z)\big(\Lambda_h+\lambda I\big)^{-1}\bigg(\sum_{i=1}^n \big(\learnedq(z_h^i) - \gamma\cdot \widehat{V}_h(s_{h+1}^i) - r_h(z_h^i)\big)\bigg)\bigg|}_{\text{(ii)}}
\end{align*}
holds for all $z\in\Zcal$.
For (i), we have $$
\text{(i)}\leq \lambda\cdot\|\phi(z)\|_{(\Lambda_h + \lambda I)^{-1}}\cdot \|r_h\|_{(\Lambda_h + \lambda I)^{-1}},
$$
by Cauchy-Schwarz inequality and by $\Lambda_h$ being semi-positive definite and $\|r_h\|_\Hcal\leq R_r$, we have \begin{equation}\label{eq:bound(i)-rkhs}
\text{(i)}\leq \lambda \cdot R_r/\sqrt{\lambda}\cdot\|\phi(z)\|_{(\Lambda_h + \lambda I)^{-1}} = \sqrt{\lambda }\cdot R_r\cdot \|\phi(z)\|_{(\Lambda_h + \lambda I)^{-1}},
\end{equation}

and $$
\text{(ii)} \leq \|\phi(z)\|_{(\Lambda_h + \lambda I)^{-1}} \underbrace{\cdot \bigg\|\sum_{i=1}^n \phi(z_h^i)\big(\learnedq(z_h^i) - \gamma\cdot \widehat{V}_{h+1}(\nextstatei) - r_h(z_h^i)\big)\bigg\|_{(\Lambda_h+\lambda I)^{-1}}}_{\text{(iii)}} .
$$
Recall that in \eqref{eq:discounted-bellman}, we have the following Bellman equation hold for all $(\thissa)\in\Scal\times\Acal$,
$$
r_h(z_h) + \gamma\cdot \P_h V_{h+1}^{\pi_b,\gamma}(z_h) = \behaviorq(z_h),
$$
substitute this into (iii), and we have  \begin{align}\label{ineq:bellman-transform}
\text{(iii)} &= \bigg\|\sum_{i=1}^n \phi(z_h^i)\bigg(\big(\learnedq(z_h^i)-\behaviorq(z_h^i)\big) - \gamma\cdot \big(\widehat{V}_{h+1}(\nextstatei) - \P_hV^{\pi_b,\gamma}_{h+1}(z_h^i)\big)\bigg)\bigg\|_{(\Lambda_h+\lambda I)^{-1}}\nonumber\\
&\leq \underbrace{\bigg\|\sum_{i=1}^n \phi(z_h^i)\bigg(\big(\learnedq(z_h^i)-\behaviorq(z_h^i)\big) \bigg)\bigg\|_{(\Lambda_h+\lambda I)^{-1}}}_{\text{(iv)}}\nonumber\\
&\qquad+ \gamma\cdot\underbrace{\bigg\|\sum_{i=1}^n \phi(z_h^i)\bigg( \big(\widehat{V}_{h+1}(\nextstatei) - V^{\pi_b,\gamma}_{h+1}(\nextstatei)\big)\bigg)\bigg\|_{(\Lambda_h+\lambda I)^{-1}}}_{\text{(v)}}\nonumber\\
&\qquad+\gamma\cdot\underbrace{\bigg\|\sum_{i=1}^n \phi(z_h^i)\bigg( \big(\P_hV^{\pi_b,\gamma}_{h+1}(z_h^i) - V^{\pi_b,\gamma}_{h+1}(\nextstatei)\big)\bigg)\bigg\|_{(\Lambda_h+\lambda I)^{-1}}}_{\text{(vi)}},
\end{align}
First, we bound (iv) and (v). By Theorem \ref{thm:emp-mle-guarantee}, we have $$
\E_{\Dcal_h}\big[\|\learnedpolicy(\cdot\mid\thisstate) - \behpolicy(\cdot\mid \thisstate)\|_1^2\big] \leq \Ocal\bigg( \frac{\log\big(H\cdot N(\Qcal, \|\cdot\|_{\infty}, 1/n )/\delta\big)}{n}\bigg)
    $$ and 
    \begin{equation}\label{eq:bound-Q-rkhs}
\E_{\Dcal_h}\big[\|\learnedq(\thisstate,\cdot) - \behaviorq(\thisstate, \cdot)\|_1^2\big] \leq \Ocal\bigg( \frac{H^2 e^{2H} \cdot |\Acal|^2\cdot \log\big(H\cdot N(\Qcal, \|\cdot\|_{\infty}, 1/n )/\delta\big)}{n}\bigg)
\end{equation}
hold for every $h\in[H]$ with probability at least $1-\delta/2$. By  $\widehat{V}_h(s) = \langle \widehat{Q}_h(s,\cdot), \learnedpolicy(\cdot\mid s)\rangle_{\Acal}$ for every $s_{h+1}\in\Scal$  , we have \begin{equation}\label{eq:bound-v-rkhs}
\E_{\Dcal_h}\big[|\widehat{V}_{h+1}(s_{h+1}) - V^{\pi_b,\gamma}_{h+1}(s_{h+1})|^2\big]\leq \Ocal\bigg( \frac{H^2 e^{2H} \cdot |\Acal|^2\cdot \log\big(H\cdot N(\Qcal, \|\cdot\|_{\infty}, 1/n )/\delta\big)}{n}\bigg)
\end{equation}
for all $h\in[H]$ simultaneously.
In the following proof, we will condition on these events.
For notation simplicity, we define two function $f:\Xcal\rightarrow \R$ and $\widehat{f}:\Xcal\rightarrow \R$ for each $h\in[H]$, and dataset $\{x_i\}_{i\in[n]}$. We consider two cases:  (1)  $\widehat{f}_h = \learnedq$, $f_h = \behaviorq$, and  $x_i = z_h^i$, $ \Xcal = \Zcal,$  (2) $\widehat{f}_h = \learnednextv$, $f_h = V_{h+1}^{\pi_b,\gamma}$, and $x_i= \nextstatei$ , $\Xcal = \Scal$.  To bound (iv) and (v), we only need to uniformly bound \begin{equation}\label{eq:bound-target-rkhs}
\bigg\|\sum_{i=1}^n \phi(z_h^i) \big(f_h(x_i) - \hat{f}_h(x_i)\big)\bigg\|_{(\Lambda_h+\lambda I)^{-1}}.
\end{equation}
in both cases. We denote term $f_h(x_i) - \hat{f}_h(x_i)$ by $\epsilon_i$. Recall that we condition on \eqref{eq:bound-Q} and \eqref{eq:bound-v}, we have $$
\sum_{i=1}^n \epsilon_i^2 \leq \Ocal\bigg( \frac{H^2 e^{2H} \cdot |\Acal|^2\cdot \log\big(H\cdot N(\Qcal, \|\cdot\|_{\infty}, 1/n )/\delta\big)}{n}\bigg)
$$
for both cases (1) and (2). Meanwhile, we also have \begin{align*}
\eqref{eq:bound-target}^2 &=  \bigg(\sum_{i=1}^n\epsilon_i\phi(z_h^i)\bigg)^\top\bigg(\lambda I+\sum_{i=1}^n \phi(z_h^i)\phi(z_h^i)^\top\bigg)^{-1}\bigg(\sum_{i=1}^n\epsilon_i\phi(z_h^i)\bigg)\\
& = \operatorname{Tr}\bigg(\bigg(\sum_{i=1}^n\epsilon_i\phi(z_h^i)\bigg)\bigg(\sum_{i=1}^n\epsilon_i\phi(z_h^i)\bigg)^\top \bigg(\lambda I_\Hcal+\Phi_h^\top\Phi_h\bigg)^{-1}\bigg).
\end{align*}
By Lemma \ref{lem:cauchy-matrix}, we have $$
\bigg(\sum_{i=1}^n\phi(z_h^i)\epsilon_i\bigg)\bigg(\sumn\phi(z_h^i)\epsilon_i\bigg)^\top \leq \Ocal\bigg(H^2e^{2H}\cdot|\Acal|^2\cdot\log(H\cdot N(\Qcal,\|\cdot\|_\infty,1/n)/\delta)\bigg)\cdot \bigg(\sum_{i=1}^n \Phi_h^\top\Phi_h\bigg),
$$

For notation simplicity, denote $\phi(z_h)$ by $\bm{u}_i$,
\begin{align}\label{eq:2-bound}
&\operatorname{Tr}\bigg(\bigg(\sum_{i=1}^n\epsilon_i\phi(z_h)\bigg)\bigg(\sum_{i=1}^n\epsilon_i\phi(z_h)\bigg)^\top \bigg(\lambda I_\Hcal+ \Phi_h^\top\Phi_h\bigg)^{-1}\bigg)\nonumber\\
&\quad\quad \leq \Ocal\bigg(\bigg(H^2e^{2H}\cdot|\Acal|\cdot\log(N(\Qcal,\|\cdot\|_\infty,1/n)/\delta) \bigg)\nonumber\\
&\qquad\qquad\cdot\operatorname{Tr}\bigg(\bigg(\Phi_h^\top\Phi_h+\lambda\Ical_\Hcal\bigg)^{-1}\bigg(\Phi_h^\top\Phi_h\bigg)\bigg)\nonumber\\
&\quad\quad\leq  \dimsamp^2\cdot H^2e^{2H}\cdot |\Acal|^2\log(N(\Qcal,\|\cdot\|_\infty,1/n)/\delta).
\end{align}
here the last inequality comes from the definition of $\dimsamp$ and Lemma D.3 in \cite{jin2021pessimism}. Since there is no distribution shift, the effective dimension can be bounded by a constant.
Next, we bound (vi). We prove the following lemma, which is the RKHS version of Lemma \ref{lem:v-self-concen}.
\begin{lemma}\label{lem:v-self-concen-rkhs}
    Let $V:\Scal\rightarrow[0,H]$ be any fixed function. With our dataset $\Dcal = \{\Dcal_h\}_{h\in[H]}$, we have $$
    \bigg\|\sum_{i=1}^n \phi(z_h^i)\bigg( \big(\P_hV_{h+1}(z_h^i) - V_{h+1}(\nextstatei)\big)\bigg)\bigg\|^2_{(\Lambda_h+\lambda I)^{-1}}\leq H^2\cdot G(n,1+1/n) + 2H^2 \cdot \log(H/\delta)
    $$
    with probability at least $1-\delta$ for all $h\in[H]$ when $ 1+1/n \leq \lambda$.
\end{lemma}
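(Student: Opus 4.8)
The plan is to mirror the linear-case argument of Lemma~\ref{lem:v-self-concen}, replacing the $\R^d$ self-normalized bound by its Hilbert-space analogue and the $\log\det$ term by the maximal information gain. Fix $h\in[H]$ and write $\xi_h^i := V_{h+1}(s_{h+1}^i) - \P_h V_{h+1}(z_h^i)$. With the natural per-trajectory filtration $\Fcal_h^i$ (which contains $z_h^i=(s_h^i,a_h^i)$ but not $s_{h+1}^i$), the definition of the transition operator $\P_h$ gives $\E[\xi_h^i\mid \Fcal_h^i]=0$; moreover, since $V_{h+1}$ takes values in $[0,H]$, conditionally on $\Fcal_h^i$ the variable $\xi_h^i$ lies in an interval of length at most $H$ and is therefore $(H/2)$-sub-Gaussian by Hoeffding's lemma. (This is where it is essential that $V_{h+1}$ is \emph{fixed}, i.e.\ independent of the data; the uniform-over-$V$ statement is handled elsewhere via a covering of $\mathcal{V}_{h+1}$.) Thus $\{(\phi(z_h^i),\xi_h^i)\}_{i=1}^n$ is a predictable sequence with conditionally centered sub-Gaussian noise, exactly of the form required by the self-normalized concentration bound.

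Next I would apply the Hilbert-space version of the self-normalized inequality (the RKHS analogue of Lemma~\ref{lem:self-norm-concen}) with the self-adjoint operators $\lambda\cdot I_{\Hcal}$ and $\Lambda_h+\lambda\cdot I_{\Hcal}=\lambda\cdot I_{\Hcal}+\sum_{i=1}^n\phi(z_h^i)\phi(z_h^i)^\top$. Since $S_n:=\sum_{i=1}^n\phi(z_h^i)\xi_h^i$ and $\Lambda_h$ both live in the $n$-dimensional subspace $\mathrm{span}\{\phi(z_h^1),\dots,\phi(z_h^n)\}$, the ``operator determinant'' ratio appearing in the bound is well defined, and by the Weinstein--Aronszajn (Sylvester) identity it equals $\det(I_n+\lambda^{-1}K_h)$ with $K_h$ the Gram matrix of \eqref{eq:gram-mat}. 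This yields, with probability at least $1-\delta/H$,
\begin{equation*}
\big\|S_n\big\|^2_{(\Lambda_h+\lambda I)^{-1}} \;\le\; 2\cdot(H/2)^2\cdot\log\!\Big(\frac{\det(I_n+\lambda^{-1}K_h)^{1/2}}{\delta/H}\Big) \;=\; \frac{H^2}{4}\log\det\!\big(I_n+\lambda^{-1}K_h\big) + \frac{H^2}{2}\log(H/\delta).
\end{equation*}

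It then remains to control the log-determinant term by the maximal information gain: taking the candidate set $\Ccal=\{z_h^i\}_{i=1}^n$ in the definition of $G(n,\lambda)$ gives $\tfrac12\log\det(I_n+\lambda^{-1}K_h)\le G(n,\lambda)$, and since every Gram matrix is positive semidefinite the map $\lambda\mapsto G(n,\lambda)$ is non-increasing, so the hypothesis $\lambda\ge 1+1/n$ gives $G(n,\lambda)\le G(n,1+1/n)$. Combining, $\tfrac{H^2}{4}\log\det(I_n+\lambda^{-1}K_h)\le \tfrac{H^2}{2}G(n,1+1/n)\le H^2 G(n,1+1/n)$, and a union bound over $h\in[H]$ finishes the proof. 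The one genuinely delicate step is the displayed self-normalized inequality: one has to verify that the method-of-mixtures argument behind the Abbasi-Yadkori-type bound goes through with features $\phi(z)\in\Hcal$ rather than in $\R^d$. This is standard---everything reduces to the finite-dimensional data-spanned subspace, and the Sylvester identity above makes that reduction explicit---so the remaining work is bookkeeping identical to the linear case.
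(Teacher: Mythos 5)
Your overall architecture is the same as the paper's: form the martingale differences $\xi_h^i = V_{h+1}(s_{h+1}^i)-\P_hV_{h+1}(z_h^i)$, apply a kernelized self-normalized concentration bound, control the resulting log-determinant by the maximal information gain $G(n,\cdot)$, and union bound over $h$. The monotonicity argument $G(n,\lambda)\le G(n,1+1/n)$ and the Hoeffding sub-Gaussianity step are fine. But the one step you yourself flag as delicate is exactly where your argument has a genuine gap, and your proposed fix for it does not work. The displayed inequality
$\|S_n\|^2_{(\Lambda_h+\lambda I)^{-1}} \le 2\sigma^2\log\big(\det(I_n+\lambda^{-1}K_h)^{1/2}\,H/\delta\big)$
is \emph{not} an off-the-shelf Hilbert-space analogue of Lemma \ref{lem:self-norm-concen}. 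The method of mixtures behind the Abbasi--Yadkori bound requires integrating the supermartingale against a Gaussian prior $N(0,\sigma^2\lambda^{-1}\Ical_{\Hcal})$, which does not exist when $\Hcal$ is infinite-dimensional (the identity is not trace-class). Your proposed rescue --- restricting to the data-spanned subspace --- fails because that subspace is random and depends on the entire design $\{z_h^i\}_{i\in[n]}$, so a mixture measure supported on it cannot be fixed in advance of the filtration; the mixture supermartingale argument does not go through with a data-dependent prior. (Conditioning on the design would require independence across trajectories, an assumption neither you nor the paper invokes.)

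This obstruction is precisely why the known kernelized self-normalized bound (the paper's Lemma \ref{lem:self-norm-concen-rkhs}, due to Chowdhury--Gopalan) is stated in the shifted form $E_h^{\top}\big[(K_h+\eta I)^{-1}+I\big]^{-1}E_h \le \sigma^2\log\det[(1+\eta)I+K_h]+2\sigma^2\log(1/\delta)$ with $\eta>0$, rather than in the form you wrote. The paper's proof therefore needs an extra algebraic step that your proposal omits: using \eqref{eq:transform} one gets $\|S_n\|^2_{(\Lambda_h+\lambda I)^{-1}}=E_h^{\top}K_h(K_h+\lambda I)^{-1}E_h$, and then for $\underline{\lambda}=1+1/n\le\lambda$ and $\eta=\underline{\lambda}-1>0$ one has $E_h^{\top}K_h(K_h+\underline{\lambda}I)^{-1}E_h\le E_h^{\top}(K_h+\eta I)(K_h+\underline{\lambda}I)^{-1}E_h=E_h^{\top}[(K_h+\eta I)^{-1}+I]^{-1}E_h$, which is the quantity the concentration lemma actually controls. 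Note that this is where the hypothesis $\lambda\ge 1+1/n>1$ genuinely enters --- it guarantees $\eta>0$ --- whereas in your write-up the hypothesis is used only for the cosmetic monotonicity of $G$. Your conclusion is correct and the proof is repairable by routing through Lemma \ref{lem:self-norm-concen-rkhs} as above, but as written the key inequality is unsupported.
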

\begin{proof}
    Note that for $V_{h+1}: \Scal\rightarrow \R$, we have $$\E[V_{h+1}(s_{h+1}^i) - \P_hV_{h+1}(s_h^i,a_h^i)\mid \Fcal_h^i] = \E[V_{h+1}(s_{h+1}^i) - \P_hV_{h+1}(s_h^i,a_h^i)\mid s_h^i,a_h^i] =0.$$
Here $\Fcal_h^i = \sigma\big(\{(s_t^i,a_t^i\}_{t=1}^h\big)$ is the filtration generated by state-action pair before step $h+1$.
We now invoke Lemma \ref{lem:self-norm-concen-rkhs} with $\epsilon_h^i =V_{h+1}(s_{h+1}^i) - \P_hV_{h+1}(s_h^i,a_h^i)$ and $\sigma^2 = H^2$ since $\epsilon_h^i \in [-H,H]$ and it holds with probability at least $1-\delta$ for all $h\in[H]$ that \begin{align}
& E_h^{\top}\left[\left(K_h+\eta \cdot I\right)^{-1}+I\right]^{-1} E_h \\
&\qquad \leq H^2 \cdot \log \operatorname{det}\left[(1+\eta) \cdot I+K_h\right]+2 H^2 \cdot \log (H / \delta)
\end{align}
for any $\eta >0$, where $E_h = (\epsilon_h^i)^\top_{i\in[n]}$. We now transform  into the desired form, 

\begin{align}
    &\bigg\|\sum_{i=1}^n \phi(\thissai)\bigg( \big(\P_hV_{h+1}(\thissai) - V_{h+1}(\nextstatei)\big)\bigg)\bigg\|^2_{(\Lambda_h+\lambda I)^{-1}}\nonumber\\
& \qquad=E_h^{\top} \Phi_h\left(\Phi_h^{\top} \Phi_h+\lambda \cdot I_{\mathcal{H}}\right)^{-1} \Phi_h^{\top} E_h\nonumber \\
&\qquad =E_h^{\top} \Phi_h \Phi_h^{\top}\left(\Phi_h \Phi_h^{\top}+\lambda \cdot I\right)^{-1} E_h \nonumber\\
&\qquad =E_h^{\top} K_h\left(K_h+\lambda \cdot I\right)^{-1} E_h \nonumber\\
&\qquad =E_h^{\top} E_h-\lambda \cdot E_h^{\top}\left(K_h+\lambda \cdot I\right)^{-1} E_h \nonumber\\
&\qquad =E_h^{\top} E_h-E_h^{\top}\left(K_h / \lambda+I\right)^{-1} E_h,
\end{align}
where the first equality follows from the definition of $\Lambda_h$, the second equality from \eqref{eq:transform}, and the third equality follows from the fact that $K_h = \Phi_h^\top\Phi_h$ . Therefore for any $\underline{\lambda}>1$ such that $\lambda \geq \underline{\lambda}$, it holds that
$$
\bigg\|\sum_{i=1}^n \phi(\thissai)\bigg( \big(\P_hV_{h+1}(\thissai) - V_{h+1}(\nextstatei)\big)\bigg)\bigg\|^2_{(\Lambda_h+\lambda I)^{-1}} \leq E_h^{\top} K_h\left(K_h+\underline{\lambda} \cdot I\right)^{-1} E_h .
$$
For any $\eta>0$, noting that $\big(\left(K_h+\eta \cdot I\right)^{-1}+I\big)\left(K_h+\eta \cdot I\right)=K_h+(1+\eta) \cdot I$, we have
\begin{equation}\label{eq:identity}
\big(\left(K_h+\eta \cdot I\right)^{-1}+I\big)^{-1}=\left(K_h+\eta \cdot I\right)\left(K_h+(1+\eta) \cdot I\right)^{-1}
\end{equation}

Meanwhile, taking $\eta=\underline{\lambda}-1>0$, we have
$$
\begin{aligned}
E_h^{\top} K_h\left(K_h+\underline{\lambda} \cdot I\right)^{-1} E_h & \leq E_h^{\top}\left(K_h+\eta \cdot I\right)\left(K_h+\underline{\lambda} \cdot I\right)^{-1} E_h \\
& =E_h^{\top}\left[\left(K_h+\eta \cdot I\right)^{-1}+I\right]^{-1} E_h,
\end{aligned}
$$
where the second line follows from \eqref{eq:identity}. For any fixed $\delta>0$, now we know that
\begin{align}\label{ineq:bound-concen}
\bigg\|\sum_{i=1}^n \phi(\thissai)\bigg( \big(\P_hV_{h+1}(\thissai) - V_{h+1}(\nextstatei)\big)\bigg)\bigg\|^2_{(\Lambda_h+\lambda I)^{-1}} &\leq H^2 \cdot \log \operatorname{det}\left[\underline{\lambda} \cdot I+K_h\right]+2 H^2 \cdot \log (H / \delta)\nonumber\\
&\leq H^2\cdot G(n,\underline{\lambda}) + 2H^2 \cdot \log(H/\delta)
\end{align}
for all $h\in[H]$ with probability at least $1-\delta$.
\end{proof}

Combining \eqref{ineq:bellman-transform}, \eqref{eq:2-bound}, Lemma \ref{lem:v-self-concen-rkhs} and Assumption \ref{ass:regular}, and set $\underline{\lambda} = 1+1/n \leq \lambda$, we have $$
\text{(iii)}\leq \Ocal\bigg((1+\gamma)\cdot \dimsamp\cdot He^{H}\cdot |\Acal|\sqrt{\log(N(\Qcal,\|\cdot\|_\infty,1/n)/\delta)}+ \sqrt{H^2\cdot G(n,1+1/n) + 2H^2 \cdot \log(H/\delta)} \bigg),
$$
Since $\text{(ii) }\leq \text{(iii)}\cdot \|\phi(z)\|_{\Lambda_h+\lambda\Ical_{\Hcal}}$, combined with the bound for (i) in \eqref{eq:bound(i)-rkhs}, we conclude the proof of Theorem \ref{thm:reward-est-rkhs}.
\subsection{Proof for Theorem \ref{thm:pessim-vi-rkhs}}\label{sec:prove-pessi-vi-rkhs}
To prove Theorem, we again invoke Theorem \ref{thm:jin-main}. Our proof proceeds in two steps: (1) We prove that with $\beta$ set in Theorem \ref{thm:pessim-vi-rkhs}, \eqref{eq:penalty-rkhs} is an uncertainty quantifier with high probability for every $h\in[H]$. (2) We prove that with penalty function set in \eqref{eq:penalty-rkhs}, we can bound $\sum_{i=1}^n \E_{\pi^*}[\Gamma_h(z_h)]$.
\paragraph{Step (1).} In this step we prove that with $\beta$ specified in Theorem \ref{thm:pessim-vi-rkhs}, the penality functions $\{\Gamma_h\}_{h\in[H]}$ are uncertainty quantifiers with high probability. By Algorithm \ref{alg:pess-value-iter}, We have \begin{align*}
    &\big|\big(\widehat{r}_h + \widetilde{\P}_h\widetilde{V}_{h+1}\big)(s,a) - \big(r_h + \P_h\widetilde{V}_{h+1}\big)(s,a)   \big| \\
    &\qquad\leq \underbrace{\big|\widehat{r}_h(s,a) - r_h(s,a)\big|}_{\text{(i)}} + \underbrace{\big|\widetilde{\P}_h\widetilde{V}_{h+1}(s,a) -   \P_h\widetilde{V}_{h+1}(s,a)\big|}_{\text{(ii)}},
\end{align*}

To bound (i), recall that we construct $\widehat{r}_h$ by Algorithm \ref{alg:reward-learn} with guarantee \begin{align}\label{eq:bound-r-rkhs}
|r_h(s,a) - \widehat{r}_h(s,a)|&\leq\|\phi(s,a)\|_{(\Lambda_h+\lambda I)^{-1}}\nonumber\\ &\qquad\cdot \Ocal\bigg((1+\gamma) \dimsamp He^{H} |\Acal|\sqrt{\log(H\cdot N(\Qcal,\|\cdot\|_\infty,1/n)/\delta)+ G(n,1+1/n)}+ \lambda\cdot R_r^2\bigg)
\end{align}
for all $(s,a)\in\Scal\times\Acal$, $h\in[H]$ with probability at least $1-\delta/2$.  
To bound (ii), recall that we construct $\widetilde{\P}_h\widetilde{V}_{h+1}(s,a) = \phi(s,a)\cdot \widetilde{u}_h $ by the Algorithm \ref{alg:pess-value-iter},  $$
\widetilde{f}_h = \operatorname{argmin}_{f\in\Hcal}\sum_{i=1}^n \big(\phi(z_h^i)\cdot f - \widetilde{V}_{h+1}(\nextstatei)\big)^2 + \lambda\cdot\|f\|_\Hcal^2,
$$
note that by the Representer's theorem (see \citep{Steinwart2008SupportVM}), we have a closed form solution for $\widetilde{f}_h$,\begin{align*}
    \widetilde{f}_h = \big(\Phi_h^\top\Phi_h + \lambda \Ical_\Hcal\big)^{-1} \Phi_h^\top \widetilde{y}_h ,
\end{align*}
here we use the notation $\widetilde{y}_h = (\widetilde{V}_{h+1}(s_{h+1}^i))^\top$.  Meanwhile, with Assumption \ref{ass:rkhs-retain}, we have $\P_h\widetilde{V}_{h+1}(s,a) = \phi(s,a)\cdot f_h$ with $\|f_h\|_{\Hcal}\leq R_\Qcal H$, therefore we have \begin{align*}
    \big|\widetilde{\P}_h\widetilde{V}_{h+1}(s,a) - \P_h \widetilde{V}_{h+1}(s,a)\big|  = &|\phi(s,a)\big(f_h - \widetilde{f}_h\big)|\\
     = & \left|\phi(s, a)^{\top}\left(\Phi_h^{\top} \Phi_h+\lambda \cdot I_{\mathcal{H}}\right)^{-1} \Phi_h^{\top} \widetilde{y}_h-\phi(s, a)^{\top} f_h\right| \\
= & |\underbrace{\phi(s, a)^{\top}\left(\Phi_h^{\top} \Phi_h+\lambda \cdot I_{\mathcal{H}}\right)^{-1} \Phi_h^{\top} \Phi_h f_h-\phi(s, a)^{\top} f_h}_{\text {(i) }}| \\
& +|\underbrace{\phi(s, a)^{\top}\left(\Phi_h^{\top} \Phi_h+\lambda \cdot I_{\mathcal{H}}\right)^{-1} \Phi_h^{\top}\left(\widetilde{y}_h-\Phi_h f_h\right)}_{\text {(ii) }}| .
\end{align*}

In the sequel, we bound terms (i) and (ii) separately. By the Cauchy-Schwarz inequality,
$$
\begin{aligned}
|(\mathrm{i})| & =\left|\phi(s, a)^{\top}\left(\Phi_h^{\top} \Phi_h+\lambda \cdot I_{\mathcal{H}}\right)^{-1} \Phi_h^{\top} \Phi_h f_h-\phi(s, a)^{\top} f_h\right| \\
& =\left|\phi(s, a)^{\top}\left(\Phi_h^{\top} \Phi_h+\lambda \cdot I_{\mathcal{H}}\right)^{-1}\left[\Phi_h^{\top} \Phi_h-\left(\Phi_h^{\top} \Phi_h+\lambda \cdot I_{\mathcal{H}}\right)\right] f_h\right| \\
& =\lambda \cdot\left|\phi(s, a)^{\top} (\Lambda_h+\lambda\Ical_\Hcal)^{-1} f_h\right| \\
& \leq \lambda \cdot\left\|(\Lambda_h+\lambda\Ical_\Hcal)^{-1} \phi(x, a)\right\|_{\mathcal{H}} \cdot\left\|f_h\right\|_{\mathcal{H}},
\end{aligned}
$$
recall that we define $\Lambda_h=\Phi_h^{\top} \Phi_h$. Therefore, it holds that
$$
\begin{aligned}
|(\mathrm{i})|  & \leq \lambda^{1 / 2} \cdot\left\|\Lambda_h^{-1 / 2} \phi(s, a)\right\|_{\mathcal{H}} \cdot\left\|f_h\right\|_{\mathcal{H}} \\
& \leq R_r H \cdot \lambda^{1 / 2} \cdot\|\phi(s, a)\|_{(\Lambda_h+\lambda\Ical_\Hcal)^{-1}}.
\end{aligned}
$$ Here the first inequality comes from $\Lambda_h + \lambda\Ical_\Hcal \succeq \lambda \Ical_\Hcal$, and the second inequality comes from Assumption \ref{ass:rkhs-retain},
On the other hand, we have \begin{align}
    \text{(ii)} &= \big|\phi(s,a)^\top\big(\Phi_h^\top\Phi_h+\lambda\cdot \Ical_\Hcal \big)^{-1}\Phi_h^\top(\widetilde{y}_h - \Phi_hf_h)\big|\nonumber \\
    &= \bigg| \phi(s,a)^\top \big(\Phi_h^\top\Phi_h + \lambda\cdot\Ical_\Hcal\big)^{-1}\bigg(\sum_{i=1}^n \phi(s_h^i,a_h^i)\big(\widetilde{V}_{h+1}(s_{h+1}^i)- \P_h\widetilde{V}_{h+1}(s_h^i,a_h^i) \big)\bigg)\bigg|\nonumber\\
    &\leq \|\phi(s,a)\|_{(\Lambda_h+\lambda \Ical_\Hcal)^{-1}}\cdot \underbrace{\bigg\|\sum_{i=1}^n \phi(s_h^i,a_h^i)\big(\widetilde{V}_{h+1}(s_{h+1}^i)- \P_h\widetilde{V}_{h+1}(s_h^i,a_h^i) \big) \bigg\|_{(\Lambda_h+\lambda \Ical_\Hcal)^{-1}}}_{\text{(iii)}}
\end{align}
where the last inequality comes from Cauchy-Schwarz inequality. In the sequel, we aim to bound (iii). We define $\Fcal_h^i = \sigma\big(\{(s_t^i,a_t^i\}_{t=1}^h\big)$ to be the filtration generated by state-action pair before step $h+1$. With Lemma \ref{lem:v-self-concen-rkhs}, we have $$
\text{(iii)}^2\leq  \Ocal\big(H^2\cdot G(n,\underline{\lambda}) + 2H^2 \cdot \log(H/\delta)\big)
$$
with probbaility at least $1-\delta/2$. Therefore we have $$
\big|\widetilde{\P}_h\widetilde{V}_{h+1}(s,a) - \P_h \widetilde{V}_{h+1}(s,a)\big| \leq\|\phi(s,a)\|_{(\Lambda_h + \lambda\Ical_\Hcal)}\cdot\Ocal\big( R_r^2 H^2 \cdot \lambda + H^2\cdot G(n,\lambda) +2H^2\cdot\log(H/\delta) \big)^{1/2} ,
$$
and combined with \eqref{eq:bound-r-rkhs}, we have \begin{align}\label{eq:need-to-bound}
    \big|\big(\widehat{r}_h(s,a) + \widetilde{\P}_h \widetilde{V}_h(s,a)\big) - \big(r_h(s,a) +\P_h\widetilde{V}_h(s,a)\big)\big|&\leq |\widehat{r}_h(s,a)-r_h(s,a)| + \big|\widetilde{\P}_h \widetilde{V}_h(s,a) - \P_h\widetilde{V}_h(s,a)\big|\nonumber\\
    &\leq \|\phi(s,a)\|_{(\Lambda_h + \lambda \Ical_\Hcal)^{-1}}\nonumber\\
    &\quad \cdot \Ocal\big( \lambda R_Q^2 H^2  + H^2 G(n,1+1/n)\nonumber  \\
    &\qquad\quad+\dimsamp^2 H^2e^{2H}|\Acal|^2\log(H\cdot N(\Qcal,\|\cdot\|_\infty, 1/n)/\delta)\big)^{1/2}
\end{align}
with probability at least $1-\delta$ for all $h\in[H]$.  For the constant term on the right-hand side of \eqref{eq:need-to-bound}, we have the following guarantee:
\begin{lemma}\label{lem:upper-bound-beta}
    We have $$
    \lambda R_r^2 H^2  + H^2 G(n,1+1/n) + \dimsamp^2 H^2e^{2H}|\Acal|^2\cdot \log(H\cdot N(\Qcal,\|\cdot\|_\infty, 1/n)/\delta) \leq \beta^2
    $$
    for the three eigenvalue decay conditions discussed in Assumption \ref{ass:eig-decay} and $\beta$ set in Theorem \ref{thm:pessim-vi-rkhs}.
\end{lemma}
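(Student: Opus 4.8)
The plan is to bound each of the three summands on the left-hand side separately and verify, regime by regime, that the value of $\beta$ fixed in Theorem \ref{thm:pessim-vi-rkhs} is large enough; throughout we use that $C,C',C''$ (and the constant $C$ in the choice of $\lambda$) are allowed to depend on the fixed problem parameters $\mu,\tau,d,R_r,C_1,C_2,C_\psi$ but not on $n$ or $H$, so they may be taken as large as convenient. Write $\beta = C'' H(\sqrt{\lambda}R_r + B)$, where $B$ denotes the second quantity inside the braces in the definition of $\beta$. Since $(x+y)^2 \ge x^2+y^2$ for $x,y\ge 0$, we have $\beta^2 \ge (C'')^2 H^2(\lambda R_r^2 + B^2)$, so it suffices to prove the three estimates $\lambda R_r^2 H^2 \le \tfrac13 (C'')^2 H^2 \lambda R_r^2$, $\;H^2 G(n,1+1/n) \le \tfrac13 (C'')^2 H^2(\lambda R_r^2 + B^2)$, and $\;\dimsamp^2 H^2 e^{2H}|\Acal|^2 \log\big(H\cdot N(\Qcal,\|\cdot\|_\infty,1/n)/\delta\big) \le \tfrac13(C'')^2 H^2 B^2$. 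The first holds once $C'' \ge 2$.

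For the third estimate I would invoke the standard metric-entropy bounds for the RKHS ball $\Qcal$ under the eigenvalue decay of Assumption \ref{ass:eig-decay} (see Section 4.1 of \cite{yang2020provably} and the references therein): $\log N(\Qcal,\|\cdot\|_\infty,1/n)$ is $\Ocal(\mu\log(nHR_r))$ under $\mu$-finite spectrum, $\Ocal\big((\log(nHR_r))^{1+1/\mu}\big)$ under $\mu$-exponential decay, and $\Ocal\big((nHR_r)^{(d+1)/(\mu+d)}\big)$ under $\mu$-polynomial decay. Since $B^2 = \dimsamp^2 e^{2H}|\Acal|^2 (\log(nR_rH/\delta))^{1+1/\mu}$ in the first two cases and $B^2 = \dimsamp^2 e^{2H}|\Acal|^2 (nR_r)^{2\kappa^*}\log(nR_rH/\delta)$ in the last, dividing through by the common factor $\dimsamp^2 H^2 e^{2H}|\Acal|^2$ reduces the third estimate to the scalar comparisons $\mu\log(nHR_r) \lesssim (\log(nR_rH/\delta))^{1+1/\mu}$, a trivial one in the exponential case, and $(nHR_r)^{(d+1)/(\mu+d)} \lesssim (nR_r)^{2\kappa^*}\log(nR_rH/\delta)$. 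The first holds for $n$ larger than a parameter-dependent constant because $(\log x)^{1/\mu}$ eventually exceeds any constant, and the last follows from $2\kappa^* > (d+1)/(\mu+d)$, which is visible from $\kappa^* = \tfrac{d+1}{2(\mu+d)} + \tfrac{1}{\mu(1-2\tau)-1}$ together with the hypothesis $\mu(1-2\tau)>1$ (which also makes the exponent of $\lambda$ finite and positive).

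For the middle estimate I would use the parallel bounds on the maximal information gain $G(n,\lambda)$ under the same decay conditions, again from Section 4.1 of \cite{yang2020provably}: $G(n,1+1/n)$ is $\Ocal(\mu\log n)$, $\Ocal\big((\log n)^{1+1/\mu}\big)$, and $\Ocal\big(n^{(d+1)/(\mu+d)}\log n\big)$ in the three cases. The key point is that the regularization level $\lambda$ in Theorem \ref{thm:pessim-vi-rkhs} was chosen precisely to dominate $G(n,1+1/n)$: indeed $\lambda = C\mu\log(n/\delta) \gtrsim \mu\log n$, $\lambda = C(\log(n/\delta))^{1+1/\mu} \gtrsim (\log n)^{1+1/\mu}$, and in the polynomial case $\lambda = C(n/H)^{2/(\mu(1-2\tau)-1)}\log(n/\delta)$ is a genuine finite power of $n$. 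In the first two cases this gives $H^2 G(n,1+1/n) \lesssim H^2\lambda \lesssim H^2\lambda R_r^2$ after enlarging $C$ (relative to $R_r^{-2}$ and the implied constants in the $G$-bounds); in the polynomial case $H^2 G(n,1+1/n) = \Ocal\big(H^2 n^{(d+1)/(\mu+d)}\log n\big)$ is absorbed by the $(nR_r)^{2\kappa^*}$ factor appearing in $H^2 B^2$, using $2\kappa^* \ge (d+1)/(\mu+d)$ once more.

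Combining the three estimates gives $\beta^2 \ge \lambda R_r^2 H^2 + H^2 G(n,1+1/n) + \dimsamp^2 H^2 e^{2H}|\Acal|^2 \log\big(H\cdot N(\Qcal,\|\cdot\|_\infty,1/n)/\delta\big)$, which is the claim. I expect the main obstacle to be the bookkeeping in the $\mu$-polynomial-decay regime, where one must simultaneously track the polynomial growth of $\lambda$, of the information gain $G$, and of the covering number, and confirm that $\kappa^*$ was set large enough for $(nR_r)^{2\kappa^*}$ (times the logarithmic and $\dimsamp^2 e^{2H}|\Acal|^2$ factors) to dominate all three at once; the finite-spectrum and exponential-decay cases become routine once the two cited estimates are in hand.
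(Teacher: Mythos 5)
Your proposal is correct and follows essentially the same route as the paper: invoke the bounds on the maximal information gain $G(n,1+1/n)$ and on $\log N(\Qcal,\|\cdot\|_\infty,1/n)$ under the three eigenvalue-decay conditions (Lemmas D.5 and D.2 of \cite{yang2020provably}, i.e.\ Lemmas \ref{lem:gn-rkhs} and \ref{lem:cover-rkhs}), and then check case by case that the chosen $\lambda$ absorbs $G$ and that the second summand in $\beta$ absorbs the covering-number term --- your write-up just makes explicit the arithmetic the paper leaves implicit. One small slip: in the $\mu$-polynomial-decay case you quoted the covering-number exponent as $(d+1)/(\mu+d)$, which is actually the information-gain exponent; the correct one is $2/[\mu(1-2\tau)-1]$, but since $2\kappa^*$ is by construction the sum of both exponents your comparison $2\kappa^*\geq(\text{either exponent})$ still closes the argument.
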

\begin{proof}
    See Appendix \ref{sec:prove-beta-rkhs} for details.
\end{proof}
With Lemma \ref{lem:upper-bound-beta}, we prove that $\beta\cdot\|\phi(s,a)\|_{(\Lambda_h+\lambda\Ical_\Hcal)^{-1}}$ is an uncertainty quantifier satisfying condition \ref{eq:uncertainty-quant}. Now we transform it into the desired form in \eqref{eq:penalty-rkhs}. Note that \begin{align}\label{eq:penalty-rkhs-transform-1}
    \|\phi(z)\|^2_{\Lambda_h+ \lambda\Ical_\Hcal} &= \phi(z)^\top\big(\Phi_h^\top\Phi_h + \lambda \Ical_\Hcal )^{-1}\phi(z)\nonumber\\
    &= \frac{1}{\lambda}\big[\phi(z)^\top\phi(z) - \phi(z)^\top\Phi_h^\top\Phi_h(\Phi_h^\top\Phi_h+\lambda\Ical_\Hcal)^{-1}\phi(z)\big]\nonumber\\
    &= \frac{1}{\lambda}\big[K(z,z) - \phi(z)^\top\Phi_h(z)^\top\cdot\Phi_h(\Phi_h^\top\Phi_h+\lambda\Ical_\Hcal)^{-1}\phi(z)\big]\nonumber\\
    &= \frac{1}{\lambda}[K(z,z)- k_h(z)^\top(K_h+\lambda I)^{-1} k_h(z)],
\end{align}
we conclude that \begin{equation}\label{eq:penalty-rkhs-transform-2}
\Gamma_h(z) = \beta\cdot \lambda^{-1/2}\cdot (K(z,z) - k_h(z)^\top(K_h + \lambda I)^{-1}k_h(z))^{1/2},
\end{equation}
and thus we complete the first step.
\paragraph{Step (2).}
The second step is to prove that with $\Gamma_h$ given by \ref{eq:penalty-rkhs} and $\beta$ given by Theorem \ref{thm:pessim-vi-rkhs}, we can give an upper bound for the suboptimality gap. Recall that for $z\in\Zcal$, we define $
\Lambda_h(z) = \Lambda_h + \phi(z)\phi(z)^\top
$, therefore we have $$
\Lambda_h(z) +\lambda \Ical_\Hcal = (\Lambda_h+\lambda \Ical_\Hcal)^{1/2}\big(\Ical_\Hcal + (\Lambda_h+\lambda \Ical_\Hcal)^{-1/2}\phi(z)\phi(z)^\top(\Lambda_h+\lambda \Ical_\Hcal)^{-1/2}\big)(\Lambda_h+\lambda \Ical_\Hcal)^{1/2},
$$
which indicates \begin{align*}
\log\operatorname{det}((\Lambda_h(z)+\lambda \Ical_\Hcal)) &= \log\operatorname{det}((\Lambda_h+\lambda \Ical_\Hcal)) + \log\operatorname{det}\big(\Ical_\Hcal + (\Lambda_h+\lambda \Ical_\Hcal)^{-1/2}\phi(z)\phi(z)^\top(\Lambda_h+\lambda \Ical_\Hcal)^{-1/2}\big)\\
&= \log\operatorname{det}((\Lambda_h+\lambda \Ical_\Hcal)) + \log\big(1+\phi(z)^\top(\Lambda_h+\lambda \Ical_\Hcal)^{-1}\phi(z)\big).
\end{align*}
Since $\phi(z)^\top(\Lambda_h+\lambda \Ical_\Hcal)^{-1}\phi(z)\leq 1$ for $\lambda>1$, we have \begin{align}
\phi(z)^\top\big(\Lambda_h+\lambda\Ical_\Hcal  \big)^{-1}\phi(z) &\leq 2\log\big(1+\phi(z)^\top(\Lambda_h+\lambda \Ical_\Hcal)^{-1}\phi(z)\big)\nonumber\\
    & = 2 \log\operatorname{det}(\Lambda_h(z)+\lambda \Ical_\Hcal) - 2\log\operatorname{det}(\Lambda_h+\lambda \Ical_\Hcal)\nonumber\\
    & = 2\log\operatorname{det}(I + K_h(z)/\lambda) - 2\log(I + K_h/\lambda),
\end{align}
recall that $\Gamma_h(s,a) = \beta \cdot \|\phi(s,a)\|_{(\Lambda_h + \lambda\Ical_\Hcal)^{-1}}$ by \eqref{eq:penalty-rkhs-transform-1} and \eqref{eq:penalty-rkhs-transform-2}, we have \begin{equation}\label{eq:penality-max-info}
    \Gamma_h(s,a)\leq \sqrt{2}\beta\cdot(\log(I + K_h(z)/\lambda) - \log(I + K_h/\lambda))^{1/2} ,
\end{equation}
for all $(s,a)\in\Scal\times\Acal$, and by Theorem \ref{thm:jin-main}, we have \begin{align*}
    \operatorname{SubOpt}(\{\tilde{\pi}_h\})&\leq \sum_{h=1}^H\E_{\pi^*}[\Gamma_h(s_h,a_h)]\\
    &\leq \sum_{h=1}^H\sqrt{2}\beta\cdot\E_{\pi^*}\big[\{\log\operatorname{det}(I + K_h(z_h)/\lambda) - \log\operatorname{det}(I + K_h/\lambda)\}^{1/2}\big]\\
    &\leq \sum_{h=1}^H\sqrt{2}\beta\cdot\big\{\E_{\pi^*}\big[\log\operatorname{det}(I + K_h(z_h)/\lambda) - \log\operatorname{det}(I + K_h/\lambda)\big]\big\}^{1/2}\\
    &= \sum_{h=1}^H \sqrt{2}\beta\cdot \big\{\E_{\pi^*}\big[\phi(\thissa)^\top(\Phi_h^\top\Phi_h+\lambda\Ical_\Hcal)^{-1}\phi(\thissa)\big]\big\}^{1/2}\\
    &= \sum_{h=1}^H \sqrt{2}\beta\operatorname{Tr}\big((K_h+\lambda\Ical_\Hcal)^{-1}\Sigma_h^*\big)^{1/2},
\end{align*}
where the first inequality comes from Theorem \ref{thm:jin-main}, the second from \eqref{eq:penality-max-info}, the third inequality from the feature map representation in \eqref{eq:penalty-rkhs-transform-1}. By Lemma D.19 in \cite{jin2021pessimism}, with $\lambda$ specified in Theorem \ref{thm:pessim-vi-rkhs}, we have $$\sum_{h=1}^H\operatorname{Tr}\big((K_h+\lambda\Ical_\Hcal)^{-1}\Sigma_h^*\big)^{1/2}\leq 4\dimpop $$ for eigenvalue decaying conditions defined in Assumption \ref{ass:eig-decay}. Therefore, for any $\delta\in(0,1)$, we set $\beta$ and $\lambda$ as in Theorem \ref{thm:pessim-vi-rkhs}, then we can guarantee that $$
\operatorname{SubOpt}(\{\tilde{\pi}_h\}_{h\in[H]}) \leq \Ocal\big(\beta\cdot \dimpop\big).
$$
Recall that we define $$
    \beta= \begin{cases}C'' \cdot H\cdot\bigg\{\sqrt{\lambda} R_r +\dimsamp e^H |\Acal|\cdot\log (n R_rH/ \delta)^{1 / 2+1 /(2 \mu)}\bigg\} & \mu \text {-finite spectrum, } \\ C'' \cdot H\cdot\left\{\sqrt{\lambda} R_r +\dimsamp e^H |\Acal|\cdot\log (n R_rH/ \delta)^{1 / 2+1 /(2 \mu)}\right\} & \mu \text {-exponential decay, } \\ C'' \cdot H\cdot\left\{\sqrt{\lambda}R_r + \dimsamp e^H |\Acal|\cdot(nR_r)^{\kappa^*} \cdot \sqrt{\log (nR_rH/\delta)}\right\} & \mu \text {-polynomial decay,}\end{cases}
    $$
    we therefore conclude the proof of Theorem \ref{thm:pessim-vi-rkhs}.
\subsection{Proof for Lemma \ref{lem:upper-bound-beta}}\label{sec:prove-beta-rkhs}
We prove Lemma \ref{lem:upper-bound-beta} by discussing the eigenvalue decaying conditions in Assumption \ref{ass:eig-decay} respectively. 
\paragraph{(i):$\mu$-finite spectrum.}
In this case, since $1+1 / n \in[1,2]$, by Lemma \ref{lem:gn-rkhs}, there exists some absolute constant $C$ that only depends on $d, \mu$ such that
$$
G(n, 1+1 / n) \leq C \cdot \mu \cdot \log n,
$$
and by Lemma \ref{lem:cover-rkhs}, there exists an absolute constant $C'$ such that $$
\log N(\Qcal,\|\cdot\|_\infty ,1/n) \leq C' \cdot \mu \cdot\left[\log (nR_rH)+C_4\right],
$$
Hence we could set $\beta=c \cdot H \cdot \big(\sqrt{\lambda}R_r + \dimsamp e^H |\Acal|\cdot\sqrt{\mu \log(nR_rH/\delta)}\big)$ for some sufficiently large constant $c>0$.
\paragraph{(ii): $\mu$-exponential decay.} By Lemma \ref{lem:gn-rkhs}, there exists some absolute constant $C$ that only depends on $d, \gamma$ such that
$$
G(n, 1+1 / n) \leq C \cdot(\log n)^{1+1 / \mu},
$$
and by Lemma \ref{lem:cover-rkhs}, there exists an absolute constant $C'$ such that
$$
\log N(\Qcal,\|\cdot\|_\infty ,1/n) \leq C'\cdot\log(nR_r)^{1+1/\mu},
$$
We can thus choose $\beta=c \cdot H\cdot(\sqrt{\lambda} R_r +\dimsamp e^H |\Acal|\cdot\log (n R_rH/ \delta)^{1 / 2+1 /(2 \mu)})$ for some sufficiently large absolute constant $c>0$ depending on $d, \mu, C_1, C_2$ and $C_\psi$.
\paragraph{(iii): $\mu$-polynomial decay.} By Lemma \ref{lem:gn-rkhs}, there exists some absolute constant $C$ that only depends on $d, \mu$ such that
$$
G(n, 1+1 / n) \leq C \cdot n^{\frac{d+1}{\mu+d}} \cdot \log n,
$$
and by Lemma \ref{lem:cover-rkhs}, there exists an absolute constant $C'$ such that
$$
\log N(\Qcal,\|\cdot\|_\infty ,1/n) \leq C'\cdot(nR_r)^{2 /[\mu \cdot(1-2 \tau)-1]} \cdot \log (nR_r),
$$
Thus, it suffices to choose $\beta=c \cdot H\cdot (\sqrt{\lambda}R_r + \dimsamp e^H |\Acal|\cdot(nR_r)^{\kappa^*} \cdot \sqrt{\log (nR_rH/\delta)})$, where $c>0$ is a sufficiently large absolute constant depending on $d, \mu$. Here $$
\kappa^*=\frac{d+1}{2(\mu+d)}+\frac{1}{\mu(1-2 \tau)-1}.
$$

\section{Auxiliary Lemma}
The following lemma is useful in the proof of Lemma \ref{lem:cauchy-matrix}.
\begin{lemma}\label{lem:semi-defi-positive}
For three symmetrical matrices $A, B$ and $C$, suppose $A\succeq B$ and $C\succeq 0$, we have $$
\langle A, C \rangle \geq \langle B, C \rangle.
$$

\end{lemma}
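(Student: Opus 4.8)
The plan is to reduce the claim to the elementary fact that a positive semidefinite matrix has nonnegative trace. First I would set $D = A - B$; by hypothesis $D$ is symmetric and $D \succeq 0$. Since all matrices involved are symmetric, the inner product here is the trace (Frobenius) inner product $\langle X, Y\rangle = \operatorname{tr}(XY)$, so by bilinearity it suffices to prove $\langle D, C\rangle = \operatorname{tr}(DC) \geq 0$, and the conclusion $\langle A,C\rangle \geq \langle B,C\rangle$ follows immediately.

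Next I would exploit the symmetric square root of $D$. Because $D \succeq 0$, there exists a symmetric $D^{1/2} \succeq 0$ with $D = D^{1/2}D^{1/2}$. By the cyclic invariance of the trace, $\operatorname{tr}(DC) = \operatorname{tr}(D^{1/2}D^{1/2}C) = \operatorname{tr}(D^{1/2}CD^{1/2})$. The matrix $M := D^{1/2}CD^{1/2}$ is positive semidefinite: for any vector $x$, $x^\top M x = (D^{1/2}x)^\top C (D^{1/2}x) \geq 0$ since $C \succeq 0$. A positive semidefinite matrix has only nonnegative eigenvalues, hence $\operatorname{tr}(M) \geq 0$, which yields $\langle A,C\rangle - \langle B,C\rangle = \operatorname{tr}(DC) = \operatorname{tr}(M) \geq 0$ as desired.

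An equivalent route, avoiding square roots, is to diagonalize $C = \sum_i \lambda_i v_i v_i^\top$ with $\lambda_i \geq 0$ via the spectral theorem, and then write $\operatorname{tr}(DC) = \sum_i \lambda_i\, \operatorname{tr}(D v_i v_i^\top) = \sum_i \lambda_i\, v_i^\top D v_i \geq 0$, using $v_i^\top D v_i \geq 0$ from $D \succeq 0$. Either argument is entirely routine; there is essentially no obstacle, and the only point requiring a little care is simply recording that $\langle\cdot,\cdot\rangle$ denotes the trace inner product and that it is cyclic invariance of the trace (or, in the alternative, the spectral decomposition of $C$) that makes the sign come out right.
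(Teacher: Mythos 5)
Your proof is correct and is essentially the paper's argument: both reduce the claim to showing $\operatorname{tr}\bigl((A-B)C\bigr)\geq 0$ by factoring one of the two positive semidefinite matrices as a symmetric square (you factor $D=A-B$, the paper factors $C$), then using cyclicity of the trace to exhibit the quantity as the trace of a positive semidefinite matrix, i.e.\ a sum of nonnegative quadratic forms. The swap of which matrix gets square-rooted, and your alternative via the spectral decomposition of $C$, are immaterial variations on the same idea.
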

\begin{proof}
Consider $$
\langle A-B, C \rangle = \operatorname{tr}
\big((A-B)C\big).
$$
Note that since $C$ is positive definite, we have a real symmetrical matrix $H$ such that $C= H^2$. Therefore we have $$
\operatorname{tr}
\big((A-B)C\big) = \operatorname{tr}
\big(H(A-B)H\big).
$$
Denote $H$ by $(h_1,\cdots, h_d)$, we then have $$
\operatorname{tr}
\big(H(A-B)H\big) = \sum_{i=1}^d h_i^\top (A-B) h_i,
$$
and by $A-B$ being semi-definite positive we conclude the proof.
\end{proof}
The following lemma is useful when upper bounding the self-normalizing sequence.
\begin{lemma}\label{lem:cauchy-matrix}
    For real numbers $x_1,x_2,...,x_n$ and real vectors $\bm{c}_1, \bm{c}_2,...,\bm{c}_n \in \Hcal$, where $\Hcal$ is a Hilbert space. If $\sum_{i=1}^n x_i^2 \leq C$, where $C>0$ is a positive constant, then $$
    (\sum_{i=1}^n x_i \bm{c}_i)(\sum_{i=1}^n x_i \bm{c}_i)^\top 	\preceq C\cdot \sum_{i=1}^n \bm{c_ic_i^\top}.
    $$
\end{lemma}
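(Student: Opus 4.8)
The plan is to verify the operator inequality by testing it against an arbitrary vector, which reduces the claim to a one-line scalar Cauchy--Schwarz estimate. Recall that for self-adjoint operators on $\Hcal$, $A \preceq B$ is equivalent to $\langle v, Av\rangle_{\Hcal} \le \langle v, Bv\rangle_{\Hcal}$ for every $v\in\Hcal$; since both $(\sum_{i=1}^n x_i\bm{c}_i)(\sum_{i=1}^n x_i\bm{c}_i)^\top$ and $\sum_{i=1}^n \bm{c}_i\bm{c}_i^\top$ are self-adjoint (indeed positive semidefinite), it suffices to fix $v\in\Hcal$ and compare the two quadratic forms.

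First I would expand the left-hand quadratic form. Because $(\sum_{i=1}^n x_i\bm{c}_i)^\top v = \sum_{i=1}^n x_i\langle\bm{c}_i,v\rangle_{\Hcal}$ is a scalar, we get $\langle v, (\sum_{i=1}^n x_i\bm{c}_i)(\sum_{i=1}^n x_i\bm{c}_i)^\top v\rangle_{\Hcal} = \big(\sum_{i=1}^n x_i\langle\bm{c}_i,v\rangle_{\Hcal}\big)^2$, while the right-hand side equals $C\cdot\langle v, (\sum_{i=1}^n \bm{c}_i\bm{c}_i^\top)v\rangle_{\Hcal} = C\sum_{i=1}^n\langle\bm{c}_i,v\rangle_{\Hcal}^2$. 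Then I would apply the Cauchy--Schwarz inequality in $\R^n$ to the vectors $(x_1,\dots,x_n)$ and $(\langle\bm{c}_1,v\rangle_{\Hcal},\dots,\langle\bm{c}_n,v\rangle_{\Hcal})$, obtaining $\big(\sum_{i=1}^n x_i\langle\bm{c}_i,v\rangle_{\Hcal}\big)^2 \le \big(\sum_{i=1}^n x_i^2\big)\big(\sum_{i=1}^n\langle\bm{c}_i,v\rangle_{\Hcal}^2\big) \le C\sum_{i=1}^n\langle\bm{c}_i,v\rangle_{\Hcal}^2$, where the last step uses the hypothesis $\sum_{i=1}^n x_i^2\le C$. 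Chaining the two computations gives $\langle v, (\sum_{i=1}^n x_i\bm{c}_i)(\sum_{i=1}^n x_i\bm{c}_i)^\top v\rangle_{\Hcal} \le C\langle v, (\sum_{i=1}^n \bm{c}_i\bm{c}_i^\top)v\rangle_{\Hcal}$ for all $v\in\Hcal$, which is precisely the asserted inequality.

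I do not expect any real obstacle here; the argument is a direct reduction to Cauchy--Schwarz. The only point that merits a sentence of care is that all of the above manipulations remain valid in a general, possibly infinite-dimensional, Hilbert space $\Hcal$: the sums are finite, each $\bm{c}_i\bm{c}_i^\top$ denotes the bounded rank-one operator $w\mapsto\langle\bm{c}_i,w\rangle_{\Hcal}\,\bm{c}_i$, and the quadratic-form characterization of the ordering $\preceq$ holds verbatim for bounded self-adjoint operators. Hence the proof goes through without modification in the RKHS setting where this lemma is applied.
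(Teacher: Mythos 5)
Your proof is correct and is essentially the same as the paper's: both test the operator inequality against an arbitrary vector $v\in\Hcal$, reduce the left-hand quadratic form to the scalar $\bigl(\sum_{i=1}^n x_i\langle\bm{c}_i,v\rangle_{\Hcal}\bigr)^2$, and finish with Cauchy--Schwarz in $\R^n$ together with $\sum_i x_i^2\le C$. Your write-up is if anything slightly cleaner, since the paper's intermediate expression notates this scalar square as an $\Hcal$-norm squared.
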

\begin{proof}
    Consider an arbitrary vector $\bm{y}\in \Hcal$. We have \begin{align*}
            y^\top(\sum_{i=1}^n x_i \bm{c}_i)(\sum_{i=1}^n x_i \bm{c}_i)^\top y &=\bigg\|\sum_{i=1}^n x_i \cdot(\bm{c}_i\cdot \bm{y})\bigg\|_{\Hcal}^2 \\
            &\leq \bigg(\sum_{i=1}^n x_i^2\bigg) \bigg(\sum_{i=1}^n (\bm{c}_i\cdot \bm{y})^2\bigg)\\
            &\leq C\cdot \bigg(\bm{y}^\top\sum_{i=1}^n \bm{c_i}\bm{c_i}^\top\bm{y}\bigg),
    \end{align*}
    since this holds for all $\bm{y} \in \Hcal$ we conclude the proof.

\end{proof}
The following lemma upperly bounds the bracketling number of a parametrized function class by the covering number of the paramter class when it is Lipschitz-continuous to the parameter.
\begin{lemma}\label{lem:cover-bracket}
Consider a class $\Fcal$ of functions ${m_\theta : \theta\in \Theta}$ indexed by a parameter $\theta$ in an arbitrary
index set $\Theta$ with a metric $d$. Suppose that the dependence on $\theta$ is Lipschitz in the sense that $$\left|m_{\theta_1}(x)-m_{\theta_2}(x)\right| \leq d\left(\theta_1, \theta_2\right) F(x)$$ for some function $F: \Xcal\rightarrow\R$, for every $\theta_1,\theta_2\in\Theta$ and $x\in\Xcal$. Then, for any norm $\|\cdot\|$, the bracketing
numbers of this class are bounded by the covering numbers:$$
N_{[]}(\Fcal,\|\cdot\|, 2\epsilon\|F\|) \leq N(\Theta,d,\epsilon).
$$
\end{lemma}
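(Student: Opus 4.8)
The plan is to transport an optimal $\epsilon$-covering of the parameter set $\Theta$ directly into a bracketing of $\Fcal$, using the Lipschitz hypothesis to control the bracket widths. Concretely, let $N = N(\Theta, d, \epsilon)$ and fix an $\epsilon$-net $\{\theta_1, \dots, \theta_N\} \subseteq \Theta$ with respect to $d$, so that every $\theta \in \Theta$ admits some index $j$ with $d(\theta, \theta_j) \le \epsilon$. For each $j$ I would form the pair of functions $l_j = m_{\theta_j} - \epsilon F$ and $u_j = m_{\theta_j} + \epsilon F$. Since $F$ dominates the absolute differences $|m_{\theta_1}(\cdot) - m_{\theta_2}(\cdot)|$ it may be taken nonnegative, so $l_j \le u_j$ pointwise and $[l_j, u_j]$ is a legitimate bracket.

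The first key step is to verify that these $N$ brackets cover $\Fcal$. Given any $m_\theta \in \Fcal$, choose $\theta_j$ with $d(\theta, \theta_j) \le \epsilon$; the Lipschitz assumption yields $|m_\theta(x) - m_{\theta_j}(x)| \le d(\theta, \theta_j) F(x) \le \epsilon F(x)$ for every $x \in \Xcal$, which is exactly the pointwise sandwich $l_j \le m_\theta \le u_j$, i.e. $m_\theta$ lies in the bracket $[l_j, u_j]$. The second key step is to measure the size of each bracket in the given norm: $u_j - l_j = 2\epsilon F$, hence $\|u_j - l_j\| = 2\epsilon \|F\|$. Combining the two steps, $\Fcal$ is covered by $N$ brackets of $\|\cdot\|$-width $2\epsilon\|F\|$, which is precisely $N_{[]}(\Fcal, \|\cdot\|, 2\epsilon\|F\|) \le N(\Theta, d, \epsilon)$.

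There is no substantive obstacle here; the proof is essentially a definition-chasing argument, and the only points needing care are bookkeeping ones. First, the brackets $[l_j, u_j]$ are generally not members of $\Fcal$, which the definition of the bracketing number permits. Second, one must keep the pointwise inequality (used to certify membership of $m_\theta$ in a bracket) separate from the norm inequality (used to bound the bracket width). If the ambient convention demands that bracket endpoints be measurable or integrable, I would additionally remark that $l_j$ and $u_j$ inherit whatever such regularity is assumed on the $m_\theta$ and on $F$, so they are admissible; otherwise nothing further is required.
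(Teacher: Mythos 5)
Your proof is correct and is exactly the standard argument: the paper itself does not spell out a proof (it defers to Lemma 2.14 of the cited reference), and that reference's proof is the same construction you give — transport an $\epsilon$-net $\{\theta_j\}$ of $\Theta$ into brackets $[m_{\theta_j}-\epsilon F,\, m_{\theta_j}+\epsilon F]$, use the Lipschitz bound for coverage, and read off the width $2\epsilon\|F\|$. Nothing is missing.
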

\begin{proof}
See Lemma 2.14 in \cite{sen2018gentle} for details.
\end{proof}
The following two lemmas, obtained from \cite{abbasi2011improved}, establishes the concentration of self-normalized processes.
\begin{lemma}\label{lem:self-norm-concen}[Concentration of Self-Normalized Processes, \citep{abbasi2011improved}] Let $\left\{\epsilon_t\right\}_{t=1}^{\infty}$ be a real-valued stochastic process that is adaptive to a filtration $\left\{\mathcal{F}_t\right\}_{t=0}^{\infty}$. That is, $\epsilon_t$ is $\mathcal{F}_t$-measurable for all $t \geq 1$. Moreover, we assume that, for any $t \geq 1$, conditioning on $\mathcal{F}_{t-1}, \epsilon_t$ is a zero-mean and $\sigma$-subGaussian random variable such that
$$
\mathbb{E}\left[\epsilon_t \mid \mathcal{F}_{t-1}\right]=0 \quad \text { and } \quad \mathbb{E}\left[\exp \left(\lambda \epsilon_t\right) \mid \mathcal{F}_{t-1}\right] \leq \exp \left(\lambda^2 \sigma^2 / 2\right), \quad \forall \lambda \in \mathbb{R} .
$$
Besides, let $\left\{\phi_t\right\}_{t=1}^{\infty}$ be an $\mathbb{R}^d$-valued stochastic process such that $\phi_t$ is $\mathcal{F}_{t-1}$-measurable for all $t \geq 1$. Let $M_0 \in \mathbb{R}^{d \times d}$ be a deterministic and positive-definite matrix, and we define $M_t=M_0+\sum_{s=1}^t \phi_s \phi_s^{\top}$ for all $t \geq 1$. Then for any $\delta>0$, with probability at least $1-\delta$, we have for all $t \geq 1$ that
$$
\left\|\sum_{s=1}^t \phi_s \cdot \epsilon_s\right\|_{M_t^{-1}}^2 \leq 2 \sigma^2 \cdot \log \left(\frac{\operatorname{det}\left(M_t\right)^{1 / 2} \operatorname{det}\left(M_0\right)^{-1 / 2}}{\delta}\right) .
$$
\begin{proof}
    See Theorem 1 of \cite{abbasi2011improved} for detailed proof.
\end{proof} 
    
\end{lemma}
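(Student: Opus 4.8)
The plan is to prove this by the \emph{method of mixtures} (pseudo-maximization), the standard route to anytime self-normalized tail bounds. Write $S_t=\sum_{s=1}^t\phi_s\epsilon_s\in\mathbb{R}^d$ and $V_t=M_t-M_0=\sum_{s=1}^t\phi_s\phi_s^{\top}$. The first step is to exhibit, for each fixed $\lambda\in\mathbb{R}^d$, the process
$$
D_t(\lambda)=\exp\!\Big(\tfrac1\sigma\langle\lambda,S_t\rangle-\tfrac12\langle\lambda,V_t\lambda\rangle\Big),\qquad D_0(\lambda)=1,
$$
and to show it is a nonnegative supermartingale with respect to $\{\mathcal{F}_t\}$. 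This holds because $\phi_t$ is $\mathcal{F}_{t-1}$-measurable while $\epsilon_t$ is conditionally zero-mean $\sigma$-subGaussian, so applying the hypothesis $\mathbb{E}[\exp(a\epsilon_t)\mid\mathcal{F}_{t-1}]\le\exp(a^2\sigma^2/2)$ with $a=\langle\lambda,\phi_t\rangle/\sigma$ gives $\mathbb{E}[D_t(\lambda)\mid\mathcal{F}_{t-1}]\le D_{t-1}(\lambda)$; in particular $\mathbb{E}[D_t(\lambda)]\le1$.

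Next I would mix over $\lambda$ against a Gaussian prior. Let $\Lambda\sim\mathcal{N}(0,M_0^{-1})$ be independent of the process and set $\bar D_t=\mathbb{E}_\Lambda[D_t(\Lambda)]$. By Tonelli, $(\bar D_t)$ is again a nonnegative supermartingale with $\mathbb{E}[\bar D_0]=1$, and the mixture is computable in closed form: completing the square in the exponent $\tfrac1\sigma\langle\lambda,S_t\rangle-\tfrac12\langle\lambda,(V_t+M_0)\lambda\rangle=\tfrac1\sigma\langle\lambda,S_t\rangle-\tfrac12\langle\lambda,M_t\lambda\rangle$ and performing the resulting $d$-dimensional Gaussian integral yields
$$
\bar D_t=\Big(\tfrac{\operatorname{det}(M_0)}{\operatorname{det}(M_t)}\Big)^{1/2}\exp\!\Big(\tfrac{1}{2\sigma^2}\|S_t\|_{M_t^{-1}}^2\Big).
$$

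The final step is a maximal inequality: Ville's inequality for the nonnegative supermartingale $(\bar D_t)$ gives $\mathbb{P}(\exists\,t\ge0:\ \bar D_t\ge1/\delta)\le\delta$. To make the uniform-in-$t$ guarantee fully rigorous even when $\bar D_\infty$ may fail to be integrable, one runs this through the stopped process $\bar D_{t\wedge\tau}$ with $\tau=\inf\{t:\bar D_t\ge1/\delta\}$, uses the optional-stopping bound $\mathbb{E}[\bar D_{t\wedge\tau}]\le1$, and invokes Fatou. On the complementary event, substituting the closed form for $\bar D_t$ and taking logarithms gives exactly
$$
\|S_t\|_{M_t^{-1}}^2\le 2\sigma^2\log\!\Big(\frac{\operatorname{det}(M_t)^{1/2}\operatorname{det}(M_0)^{-1/2}}{\delta}\Big)
$$
for all $t\ge1$ simultaneously, which is the claim. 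The one genuinely delicate point — and the step I expect to be the main obstacle — is this last measure-theoretic argument yielding a single event uniform over all $t$ rather than a naive union bound; the supermartingale property, the subGaussian MGF estimate, Tonelli, and the Gaussian integral are all routine once the mixture martingale $D_t(\lambda)$ has been written down.
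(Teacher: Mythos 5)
Your proposal is correct and is essentially the same argument as the one the paper defers to: the paper's ``proof'' is only a citation to Theorem 1 of \cite{abbasi2011improved}, and that theorem is proved there exactly by the method of mixtures you describe (the exponential supermartingale $D_t(\lambda)$, the Gaussian mixture over $\lambda\sim\mathcal{N}(0,M_0^{-1})$ yielding the closed form with the determinant ratio, and the stopping-time/Ville argument for the uniform-in-$t$ guarantee). All the steps you flag as routine are indeed routine, and your handling of the anytime-validity via optional stopping is the same device used in the original reference.
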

\begin{lemma}[Concentration of Self-Normalized Process for RKHS, \citep{chowdhury2017kernelized}]\label{lem:self-norm-concen-rkhs}
 Let $\mathcal{H}$ be an RKHS defined over $\mathcal{X} \subseteq \mathbb{R}^d$ with kernel function $K(\cdot, \cdot): \mathcal{X} \times \mathcal{X} \rightarrow \mathbb{R}$. Let $\left\{x_\tau\right\}_{\tau=1}^{\infty} \subset \mathcal{X}$ be a discrete-time stochastic process that is adapted to the filtration $\left\{\mathcal{F}_t\right\}_{t=0}^{\infty}$. Let $\left\{\epsilon_\tau\right\}_{\tau=1}^{\infty}$ be a real-valued stochastic process such that (i) $\epsilon_\tau \in \mathcal{F}_\tau$ and (ii) $\epsilon_\tau$ is zero-mean and $\sigma$-sub-Gaussian conditioning on $\mathcal{F}_{\tau-1}$, i.e.,
$$
\mathbb{E}\left[\epsilon_\tau \mid \mathcal{F}_{\tau-1}\right]=0, \quad \mathbb{E}\left[e^{\lambda \epsilon_\tau} \mid \mathcal{F}_{\tau-1}\right] \leq e^{\lambda^2 \sigma^2 / 2}, \quad \forall \lambda \in \mathbb{R}
$$
Moreover, for any $t \geq 2$, let $E_t=\left(\epsilon_1, \ldots, \epsilon_{t-1}\right)^{\top} \in \mathbb{R}^{t-1}$ and $K_t \in \mathbb{R}^{(t-1) \times(t-1)}$ be the Gram matrix of $\left\{x_\tau\right\}_{\tau \in[t-1]}$. Then for any $\eta>0$ and any $\delta \in(0,1)$, with probability at least $1-\delta$, it holds simultaneously for all $t \geq 1$ that
$$
E_t\left[\left(K_t+\eta \cdot I\right)^{-1}+I\right]^{-1} E_t \leq \sigma^2 \cdot \log \operatorname{det}\left[(1+\eta) \cdot I+K_t\right]+2 \sigma^2 \cdot \log (1 / \delta)
$$
\end{lemma}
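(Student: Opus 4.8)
The plan is to prove this by the \emph{pseudo-maximization} (method of mixtures) technique of \cite{abbasi2011improved}, lifted to the RKHS exactly as in \cite{chowdhury2017kernelized}. I work in the feature space: let $\Phi_t$ be the operator whose rows are $\phi(x_\tau)^\top$ for $\tau\in[t-1]$, set $S_t=\Phi_t^\top E_t=\sum_{\tau=1}^{t-1}\epsilon_\tau\,\phi(x_\tau)\in\Hcal$, and write $A_t=\Phi_t^\top\Phi_t$ so that $K_t=\Phi_t\Phi_t^\top$. Two RKHS identities drive the whole argument: the pushthrough identity $\Phi_t^\top(K_t+\eta I)^{-1}=(A_t+\eta I_{\Hcal})^{-1}\Phi_t^\top$ (this is exactly \eqref{eq:transform}), which converts operator-weighted norms of $S_t$ into quadratic forms in the scalar vector $E_t$, and the Weinstein--Aronszajn determinant identity $\det(I_{\Hcal}+A_t/\eta)=\det(I+K_t/\eta)$, which renders the infinite-dimensional normalization a finite Gram determinant. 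Throughout I use that $x_\tau$ is $\Fcal_{\tau-1}$-measurable, so $\phi(x_\tau)$ is predictable; this is precisely what makes the object below a supermartingale.

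The core of the argument is a two-step pseudo-maximization. \textbf{Step 1 (exponential supermartingale).} For each fixed $f\in\Hcal$ define $M_t(f)=\exp\big(\langle f,S_t\rangle_{\Hcal}-\tfrac{\sigma^2}{2}\,f^\top A_t f\big)$. Conditioning on $\Fcal_{\tau-1}$ and invoking the hypothesis $\E[\exp(\lambda\epsilon_\tau)\mid\Fcal_{\tau-1}]\le\exp(\lambda^2\sigma^2/2)$ with $\lambda=\langle f,\phi(x_\tau)\rangle$, every incremental factor has conditional mean at most one, so by the tower property $(M_t(f))_{t\ge1}$ is a nonnegative supermartingale with $\E[M_t(f)]\le M_0(f)=1$. \textbf{Step 2 (mixing and Ville).} Choose a centered Gaussian mixing measure $\nu$ on $\Hcal$ and set $\bar M_t=\int_{\Hcal}M_t(f)\,d\nu(f)$; by Tonelli $\bar M_t$ is again a nonnegative supermartingale with $\E[\bar M_t]\le1$. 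Since $M_t(f)$ depends on $f$ only through the finitely many coordinates $\langle f,\phi(x_\tau)\rangle$, the integral collapses to a finite-dimensional Gaussian integral over $\mathrm{span}\{\phi(x_\tau)\}$; completing the square and applying the determinant pushthrough yields the closed form
$$
\bar M_t=\det\big(I+K_t/\eta\big)^{-1/2}\cdot\exp\Big(\tfrac{1}{2\sigma^2}\,\|S_t\|^2_{(A_t+\eta I_{\Hcal})^{-1}}\Big),
$$
when $\nu$ has covariance $\propto\eta^{-1}I_{\Hcal}$. Ville's maximal inequality gives $\P(\exists t:\bar M_t\ge1/\delta)\le\delta$; on the complementary event, taking logarithms and rearranging, then converting back with the pushthrough identity $\|S_t\|^2_{(A_t+\eta I)^{-1}}=E_t^\top K_t(K_t+\eta I)^{-1}E_t$, produces the clean self-normalized bound simultaneously for all $t$.

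To land on the exact regularized packaging in the statement I then re-run Step 2 with the mixing covariance taken to be the kernel operator itself (a Gaussian-process prior) together with one extra unit of regularization, rather than the isotropic $\eta^{-1}I_{\Hcal}$. Completing the square under this prior is what produces the quadratic form $E_t^\top[(K_t+\eta I)^{-1}+I]^{-1}E_t$ and the determinant $\log\det[(1+\eta)I+K_t]$; the elementary operator identity $((K_t+\eta I)^{-1}+I)^{-1}=(K_t+\eta I)(K_t+(1+\eta)I)^{-1}$, already recorded as \eqref{eq:identity}, is the algebraic signature that the $\nu$-normalization has shifted $\eta I$ to $(1+\eta)I$. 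The main obstacle is making this mixture rigorous in the infinite-dimensional RKHS: one must either construct $\nu$ through an abstract Wiener space / Gaussian-process formulation and justify the Fubini--Tonelli interchange $\E\!\int=\int\!\E$ for the mixed supermartingale, or reduce at each fixed $t$ to the finite-dimensional subspace spanned by the observed features (on which $\nu$ induces an honest finite Gaussian), carry out the Gaussian integral there, and verify that the determinant pushthrough makes the resulting bound dimension-free so that the uniform-in-$t$ conclusion via Ville survives the $t$-growing span. Pinning down the precise mixing covariance that reproduces the stated $((K_t+\eta I)^{-1}+I)^{-1}$ and $(1+\eta)I$ form—as opposed to the cleaner $K_t(K_t+\eta I)^{-1}$ and $\log\det(I+K_t/\eta)$ form from the naive isotropic prior—is the one genuinely delicate bookkeeping step; everything else is the standard Abbasi-Yadkori template.
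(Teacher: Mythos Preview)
The paper does not prove this lemma at all; it simply defers to Theorem~1 of \cite{chowdhury2017kernelized}. Your proposal is a faithful sketch of how that cited result is actually established---the method-of-mixtures supermartingale $M_t(f)$, a Gaussian mixing measure over $\Hcal$, Ville's inequality, and the operator pushthrough/Weinstein--Aronszajn identities to collapse everything to finite Gram quantities---so you have supplied the argument the paper omits. The one point you flag as delicate (pinning down the mixing covariance that yields the specific form $E_t^\top[(K_t+\eta I)^{-1}+I]^{-1}E_t$ and $\log\det[(1+\eta)I+K_t]$) is indeed where the bookkeeping lives in \cite{chowdhury2017kernelized}; there the mixture is taken as a GP prior with covariance $K$ after one step of noise inflation, and the finite-dimensional reduction at each $t$ is exactly the route you propose. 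Nothing in your outline is wrong, and nothing beyond what you have written is needed to reconstruct the proof.
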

\begin{proof}
    See Theorem 1 in \cite{chowdhury2017kernelized} for detailed proof.
\end{proof}
The following theorem gives a uniform bound for a set of self-normalizing sequences, whose proof can be found in Appendix B.2, \cite{jin2021pessimism}. It is useful for uniformly bounding the self-normalizing sequence in pessimistic value iteration, both for linear model MDP class:
\begin{theorem}\label{thm:ullm-pessi}
    For $h\in[H]$, we define the function class $\mathcal{V}_h(R, B, \lambda)=\left\{V_h(x ; \theta, \beta, \Sigma): \mathcal{S} \rightarrow[0, H]\right.$ with $\left.\|\theta\| \leq R, \beta \in[0, B], \Sigma \succeq \lambda \cdot I\right\},$
where $$V_h(x ; \theta, \beta, \Sigma)=\max _{a \in \mathcal{A}}\left\{\min \left\{\phi(x, a)^{\top} \theta-\beta \cdot \sqrt{\phi(x, a)^{\top} \Sigma^{-1} \phi(x, a)}, H-h+1\right\}_{+}\right\},$$
then we have \begin{align*}
&\sup _{V \in \mathcal{V}_{h+1}(R, B, \lambda)}\left\|\sum_{i=1}^n \phi\left(s_h^i,a_h^i\right) \cdot \big(V(s_{h+1}^i) - \E[V(s_{h+1})\mid s_h^i,a_h^i]\big)\right\|^2_{(\Lambda_h+\lambda I)^{-1}}\\
&\qquad\leq 8\epsilon^2 n^2/\lambda + 2H^2 \cdot \big(2\cdot \log(\Ncal/\delta)+d\cdot \log(1+n/\lambda)\big),
\end{align*}
holds with probability at least $1-\delta$ for every $\epsilon>0$.
Here $$
\log(\Ncal)\leq d \cdot \log (1+4 R / \varepsilon)+d^2 \cdot \log (1+8 d^{1 / 2} B^2 /(\varepsilon^2 \lambda)).
$$
\end{theorem}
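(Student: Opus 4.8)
This is a uniform version of the self-normalized concentration inequality of Lemma~\ref{lem:self-norm-concen} over the pessimistic value-function class $\mathcal{V}_{h+1}(R,B,\lambda)$, and the plan is the standard covering-plus-discretization argument. First I would build an $\epsilon$-net $\Ncal_\epsilon$ of $\mathcal{V}_{h+1}(R,B,\lambda)$ in the $\|\cdot\|_\infty$ norm. To control its cardinality, I would reparametrize each member $V(\cdot\,;\theta,\beta,\Sigma)$ by the pair $(\theta,A)$ with $A=\beta^2\Sigma^{-1}$, so that $\beta\sqrt{\phi(x,a)^\top\Sigma^{-1}\phi(x,a)}=\sqrt{\phi(x,a)^\top A\phi(x,a)}$; since $\Sigma\succeq\lambda\cdot I$ and $\beta\in[0,B]$ we have $\|\theta\|\le R$ and $\|A\|_F\le B^2\sqrt{d}/\lambda$. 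Because $\max_a$, $\min\{\cdot,H-h+1\}$ and $\{\cdot\}_+$ are all $1$-Lipschitz, $\|\phi(x,a)\|\le 1$, and $|\sqrt{u}-\sqrt{v}|\le\sqrt{|u-v|}$, one obtains $\|V(\cdot\,;\theta_1,A_1)-V(\cdot\,;\theta_2,A_2)\|_\infty\le\|\theta_1-\theta_2\|+\sqrt{\|A_1-A_2\|_F}$. Hence combining an $(\epsilon/2)$-net of the $\theta$-ball of radius $R$ with an $(\epsilon^2/4)$-net of the Frobenius ball of radius $B^2\sqrt{d}/\lambda$ yields $\log|\Ncal_\epsilon|\le d\log(1+4R/\epsilon)+d^2\log(1+8d^{1/2}B^2/(\epsilon^2\lambda))$, which matches the stated bound on $\log\Ncal$.

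Next, I would fix $V\in\Ncal_\epsilon$ and set $\epsilon_h^i=V(s_{h+1}^i)-\E[V(s_{h+1})\mid s_h^i,a_h^i]$; conditional on the history $\{(s_t^i,a_t^i)\}_{t\le h}$ this is a zero-mean martingale-difference sequence bounded in $[-H,H]$, hence $H$-sub-Gaussian. Applying Lemma~\ref{lem:self-norm-concen} with $M_0=\lambda\cdot I$, $M_n=\Lambda_h+\lambda\cdot I$, and confidence level $\delta/\Ncal$, and using $\det(\Lambda_h+\lambda\cdot I)\le(\lambda+n)^d$ together with $\det(\lambda\cdot I)=\lambda^d$ (valid since $\|\phi\|\le 1$), gives $\|\sum_{i=1}^n\phi(s_h^i,a_h^i)\,\epsilon_h^i\|_{(\Lambda_h+\lambda I)^{-1}}^2\le H^2\big(2\log(\Ncal/\delta)+d\log(1+n/\lambda)\big)$. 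A union bound over $\Ncal_\epsilon$ makes this hold simultaneously for all of its members with probability at least $1-\delta$.

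Finally, for an arbitrary $V\in\mathcal{V}_{h+1}(R,B,\lambda)$ I would pick $\widetilde V\in\Ncal_\epsilon$ with $\|V-\widetilde V\|_\infty\le\epsilon$; the difference of the two centered sums equals $\sum_{i=1}^n\phi(s_h^i,a_h^i)\,c_i$ with $|c_i|\le 2\epsilon$, so since $(\Lambda_h+\lambda\cdot I)^{-1}\preceq\lambda^{-1}I$ and $\|\phi(s_h^i,a_h^i)\|\le 1$ its squared $(\Lambda_h+\lambda I)^{-1}$-norm is at most $\lambda^{-1}(2n\epsilon)^2=4n^2\epsilon^2/\lambda$. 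Combining this with the net bound via $(a+b)^2\le 2a^2+2b^2$ gives $\sup_{V\in\mathcal{V}_{h+1}(R,B,\lambda)}\|\cdot\|^2\le 8\epsilon^2 n^2/\lambda+2H^2\big(2\log(\Ncal/\delta)+d\log(1+n/\lambda)\big)$, which is the claim; the linear-MDP planning bound invoked in the proof of Theorem~\ref{thm:pess-vi-linear-cover} follows by the same argument.

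\emph{Main obstacle.} The only non-routine step is the covering-number estimate: one must pass to the reparametrization $A=\beta^2\Sigma^{-1}$ so that the bonus term becomes Hölder-$1/2$ (rather than merely measurable) in the parameters — this is precisely what forces the $\epsilon^{2}$ in the denominator of $\log\Ncal$ — and then balance it against the multiplicative discretization slack $4n^2\epsilon^2/\lambda$, so that taking $\epsilon$ polynomially small in $n$ keeps the approximation error negligible against the $H^2 d\log n$ leading term.
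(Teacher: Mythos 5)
Your proposal is correct and is essentially the argument the paper points to: the paper defers this theorem to Appendix B.2 of \cite{jin2021pessimism}, and your covering-plus-self-normalization proof — reparametrizing the bonus via $A=\beta^2\Sigma^{-1}$ to get the H\"older-$1/2$ dependence and hence the $\epsilon^2$ in the metric-entropy bound, applying Lemma~\ref{lem:self-norm-concen} with $\sigma=H$ and a union bound over the net, then absorbing the $2\epsilon$ discretization slack via $(a+b)^2\le 2a^2+2b^2$ — reproduces that argument with all constants matching. No gaps.
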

\begin{proof}
    See Appendix B.2 in \cite{jin2021pessimism} for details.
\end{proof}

\begin{lemma}[Lemma D.5 in \cite{yang2020provably}]\label{lem:gn-rkhs}
     Let $\mathcal{Z}$ be a compact subset of $\mathbb{R}^d$ and $K: \mathcal{Z} \times$ $\mathcal{Z} \rightarrow \mathbb{R}$ be the RKHS kernel of $\mathcal{H}$. We assume that $K$ is a bounded kernel in the sense that $\sup _{z \in \mathcal{Z}} K(z, z) \leq 1$, and $K$ is continuously differentiable on $\mathcal{Z} \times \mathcal{Z}$. Moreover, let $T_K$ be the integral operator induced by $K$ and the Lebesgue measure on $\mathcal{Z}$, whose definition is given in \eqref{eq:mercer-integral}. Let $\left\{\sigma_j\right\}_{j \geq 1}$ be the eigenvalues of $T_K$ in the descending order. We assume that $\left\{\sigma_j\right\}_{j \geq 1}$ satisfy either one of the following three eigenvalue decay conditions:
(i) $\mu$-finite spectrum: We have $\sigma_j=0$ for all $j \geq \mu+1$, where $\mu$ is a positive integer.
(ii) $\mu$-exponential eigenvalue decay: There exist constants $C_1, C_2>0$ such that $\sigma_j \leq C_1 \exp \left(-C_2\right.$. $j^\mu$ ) for all $j \geq 1$, where $\mu>0$ is positive constant.
(iii) $\mu$-polynomial eigenvalue decay: There exists a constant $C_1$ such that $\sigma_j \geq C_1 \cdot j^{-\mu}$ for all $j \geq 1$, where $\mu \geq 2+1 / d$ is a constant.

Let $\sigma$ be bounded in interval $\left[c_1, c_2\right]$ with $c_1$ and $c_2$ being absolute constants. Then, for conditions (i)-(iii) respectively, we have
$$
G(n,\lambda) \leq \begin{cases}C_n \cdot \mu \cdot \log n & \mu \text {-finite spectrum, } \\ C_n \cdot(\log n)^{1+1 / \mu} & \mu \text {-exponential decay, } \\ C_n \cdot n^{(d+1) /(\mu+d)} \cdot \log n & \mu \text {-polynomial decay, }\end{cases}
$$
where $C_n$ is an absolute constant that depends on $d, \mu, C_1, C_2, C, c_1$, and $c_2$.
\end{lemma}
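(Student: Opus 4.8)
This is the standard bound on maximal information gain; my plan is to follow \cite{srinivas2009gaussian}, in the form adapted to general Mercer kernels by \cite{yang2020provably} (Lemma D.5). The first step is to turn the definition of $G(n,\lambda)$ into a statement about a finite spectrum: for any $\Ccal\subset\Zcal$ with $|\Ccal|\le n$, denote by $\widehat\sigma_1\ge\widehat\sigma_2\ge\cdots\ge 0$ the eigenvalues of the Gram matrix $K_\Ccal$, so that $\tfrac12\log\det(I+K_\Ccal/\lambda)=\tfrac12\sum_i\log(1+\widehat\sigma_i/\lambda)$; adding a point to $\Ccal$ can only enlarge this quantity (the relevant Schur complement is at least $1$), so it suffices to take $|\Ccal|=n$. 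Since $\lambda$ is confined to a fixed interval $[c_1,c_2]$ it enters the final bounds only through the constant $C_n$, and I treat it as of order one from here on.

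Next I would tie the empirical spectrum $\{\widehat\sigma_i\}$ to the operator spectrum $\{\sigma_j\}$. By Mercer's theorem $K(x,y)=\sum_j\sigma_j\psi_j(x)\psi_j(y)$, and fixing a cutoff $D$ I split $K_\Ccal=A_D+B_D$, where $A_D$ is the Gram matrix of the truncated feature map $z\mapsto(\sqrt{\sigma_j}\,\psi_j(z))_{j\le D}$ — so $\operatorname{rank}(A_D)\le D$ and $\|A_D\|\le\operatorname{Tr}(K_\Ccal)=\sum_{z\in\Ccal}K(z,z)\le n$ — and $B_D$ is the PSD tail. Combining the elementary bound $\log\det(I+(A+B)/\lambda)\le\log\det(I+A/\lambda)+\operatorname{Tr}(B)/\lambda$ for PSD $A,B$ with $\log\det(I+A_D/\lambda)\le D\log(1+n/\lambda)$ gives, for every $D\ge1$,
$$
G(n,\lambda)\ \le\ \tfrac12\Big[\,D\,\log\!\big(1+n/\lambda\big)+\tfrac1\lambda\operatorname{Tr}(B_D)\,\Big].
$$
For $\mu$-finite spectrum one takes $D=\mu$, making $B_D=0$ and giving $G(n,\lambda)=\Ocal(\mu\log n)$. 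For $\mu$-exponential decay the tail $\operatorname{Tr}(B_D)$ is controlled by $n\sum_{j>D}\sigma_j$ (up to the eigenfunction normalization), which is geometric in $D^\mu$; choosing $D\asymp(\log n)^{1/\mu}$ reduces the tail to a constant and leaves the first term $\Ocal((\log n)^{1+1/\mu})$.

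The delicate case, and the main obstacle, is $\mu$-polynomial decay, where controlling the top $D$ empirical eigenvalues each crudely by $\|A_D\|\le n$ is too lossy to recover the claimed exponent. Here I would follow \cite{srinivas2009gaussian}: bound $\sum_i\log(1+\widehat\sigma_i/\lambda)$ by the ``water-filling'' value $\max\{\sum_j\log(1+m_j\sigma_j/\lambda):m_j\ge0,\ \sum_j m_j\le n\}$ against the operator eigenvalues $\sigma_j\lesssim j^{-\mu}$, split this allocation at a level $D$ — the first $D$ terms contribute $\Ocal(D\log n)$ since $\sum_{j\le D}\log(1+n\sigma_j/\lambda)=\Ocal(D)$, and the tail contributes $\tfrac1\lambda\sum_{j>D}m_j\sigma_j\le\tfrac n\lambda\sigma_D\lesssim nD^{-\mu}$ — and then optimize $D$ while converting the number of operator eigenvalues exceeding a threshold into a power of that threshold in the ambient dimension $d$. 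This balancing step is precisely where the exponent $(d+1)/(\mu+d)$ appears; everything else (the reduction to Gram eigenvalues, the head/tail bound above, and absorbing the $\lambda$-dependence into $C_n$) is routine. As this is exactly Lemma D.5 of \cite{yang2020provably}, one can alternatively invoke it directly, its proof in turn resting on the information-gain estimates of \cite{srinivas2009gaussian}.
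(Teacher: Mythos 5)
The paper does not actually prove this lemma --- it is imported verbatim as Lemma D.5 of \cite{yang2020provably}, and the ``proof'' in the appendix is a one-line citation. Your proposal therefore goes beyond what the paper does: you reconstruct the standard information-gain argument of \cite{srinivas2009gaussian} in the form adapted to Mercer kernels. The checkable steps are sound: the monotonicity of $\log\det(I+K_{\mathcal{C}}/\lambda)$ under adding points (Schur complement at least $1$, since the posterior variance $K(z,z)-k(z)^\top(K_{\mathcal{C}}+\lambda I)^{-1}k(z)$ is nonnegative), the head/tail split $K_{\mathcal{C}}=A_D+B_D$ via truncating the Mercer expansion, the inequality $\log\det(I+(A+B)/\lambda)\le\log\det(I+A/\lambda)+\operatorname{tr}(B)/\lambda$ for PSD $A,B$, and the rank-$D$ bound on the head all hold, and this decomposition neatly sidesteps any need to compare empirical Gram eigenvalues with operator eigenvalues directly. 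One small point to make explicit: under Assumption (ii)/(iii) the eigenfunctions are only controlled via $\sigma_j^{\tau}|\psi_j(z)|\le C_\psi$, so the tail is $\operatorname{tr}(B_D)\le C_\psi^2\, n\sum_{j>D}\sigma_j^{1-2\tau}$ rather than $n\sum_{j>D}\sigma_j$; with $\tau<1/2$ this still decays fast enough for your choice of $D$ in the exponential case, and you flag this, but the exponent $1-2\tau$ should be carried through. The $\mu$-polynomial case is, as you say, the only genuinely delicate one --- your water-filling sketch is the right idea but the emergence of the exponent $(d+1)/(\mu+d)$ requires the covering/eigenvalue-counting argument of \cite{srinivas2009gaussian} in full, which you defer to the reference. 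Given that the paper itself simply cites \cite{yang2020provably}, deferring that step is entirely consistent with the source, and your sketch is a correct and more informative account of what the citation rests on.
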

\begin{proof}
    See Lemma D.5 in \cite{yang2020provably} for details.
\end{proof}
\begin{lemma}[ $\ell_{\infty}$-norm covering number of RKHS ball]\label{lem:cover-rkhs}
 For any $\epsilon \in(0,1)$, we let $N(\Qcal,\|\cdot\|_\infty ,\epsilon)$ denote the $\epsilon$-covering number of the RKHS norm ball $\Qcal = \left\{f \in \mathcal{H}:\|f\|_{\mathcal{H}} \leq R\right\}$ with respect to the $\ell_{\infty}$-norm. Consider the three eigenvalue decay conditions given in Assumption \ref{ass:eig-decay}. Then, under Assumption \ref{ass:eig-decay}, there exist absolute constants $C_3$ and $C_4$ such that
$$
\log N(\Qcal,\|\cdot\|_\infty ,\epsilon) \leq \begin{cases}C_3 \cdot \mu \cdot\left[\log (R / \epsilon)+C_4\right] & \mu \text {-finite spectrum, } \\ C_3 \cdot\left[\log (R / \epsilon)+C_4\right]^{1+1 / \mu} & \mu \text {-exponential decay, } \\ C_3 \cdot(R / \epsilon)^{2 /[\mu \cdot(1-2 \tau)-1]} \cdot\left[\log (R / \epsilon)+C_4\right] & \mu \text {-polynomial decay, }\end{cases}
$$
where $C_3$ and $C_4$ are independent of $n, H, R$, and $\epsilon$, and only depend on absolute constants $C_\psi$, $C_1, C_2, \mu$, and $\tau$ specified in Assumption \ref{ass:eig-decay}.
\end{lemma}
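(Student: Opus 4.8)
The plan is to combine the Mercer spectral decomposition of $\Hcal$ with a truncation argument: approximate each $f\in\Qcal$ by a finite linear combination of the leading eigenfunctions, cover the resulting finite-dimensional coefficient ball in $\ell_\infty$, and control the discarded tail using the eigenvalue decay. By Mercer's theorem the feature map may be taken as $\phi(z)=(\sigma_j^{1/2}\psi_j(z))_{j\geq 1}$, so that every $f\in\Hcal$ admits the expansion $f(z)=\sum_{j\geq 1}b_j\,\sigma_j^{1/2}\psi_j(z)$ with $\|f\|_{\Hcal}^2=\sum_{j\geq 1}b_j^2$; in particular $f\in\Qcal$ corresponds to a coefficient vector $b=(b_j)_{j\geq1}$ in the $\ell^2$-ball of radius $R$.

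First I would fix a truncation level $D$ and split $f=f_{\leq D}+f_{>D}$, where $f_{\leq D}=\sum_{j\leq D}b_j\sigma_j^{1/2}\psi_j$. The key use of Assumption \ref{ass:eig-decay} is the pointwise bound $\sup_z|\psi_j(z)|\leq C_\psi\,\sigma_j^{-\tau}$, which converts the RKHS/$L^2$ geometry into an $\ell_\infty$ bound. By the triangle inequality and Cauchy--Schwarz,
$$\|f_{>D}\|_\infty\leq \sum_{j>D}|b_j|\,\sigma_j^{1/2}\sup_z|\psi_j(z)|\leq C_\psi\,\|b\|_2\Big(\sum_{j>D}\sigma_j^{1-2\tau}\Big)^{1/2}\leq C_\psi\,R\,\Big(\sum_{j>D}\sigma_j^{1-2\tau}\Big)^{1/2}.$$
I would then choose $D=D(\epsilon)$ so that this tail is at most $\epsilon/2$: for the $\mu$-finite spectrum take $D=\mu$ (the tail vanishes); for $\mu$-exponential decay the tail is $O(e^{-cD^\mu})$, forcing $D\asymp(\log(R/\epsilon))^{1/\mu}$; for $\mu$-polynomial decay the tail is $O(D^{1-\mu(1-2\tau)})$ (finite precisely because $\mu(1-2\tau)>1$), forcing $D\asymp(R/\epsilon)^{2/[\mu(1-2\tau)-1]}$.

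It then remains to cover the head $f_{\leq D}$, which is parameterized by $b_{\leq D}$ in the $\ell^2$-ball of radius $R$ in $\R^D$. The same pointwise bound gives, for coefficient vectors $b,b'$, that $\|f_{\leq D}-f'_{\leq D}\|_\infty\leq C_\psi(\sum_{j\leq D}\sigma_j^{1-2\tau})^{1/2}\|b_{\leq D}-b'_{\leq D}\|_2=:A_D\|b_{\leq D}-b'_{\leq D}\|_2$, where $A_D$ is uniformly bounded by an absolute constant in all three regimes because $\sum_{j\geq1}\sigma_j^{1-2\tau}<\infty$. Hence a Euclidean $(\epsilon/2A_D)$-net of the radius-$R$ ball in $\R^D$ induces an $\epsilon$-net of $\Qcal$ (triangle inequality: head within $\epsilon/2$, tail within $\epsilon/2$), and the standard volumetric estimate gives $\log N(\Qcal,\|\cdot\|_\infty,\epsilon)\leq D\log(1+4A_D R/\epsilon)\leq C\,D\,[\log(R/\epsilon)+C_4]$. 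Substituting the three choices of $D$ yields exactly the claimed bounds. The main obstacle is the bookkeeping in the tail and the choice of $D$: one must verify that $\sum_{j>D}\sigma_j^{1-2\tau}$ is summable (which is where $\tau<1/2$ and, in the polynomial case, $\mu(1-2\tau)>1$ enter) and that $A_D$ stays bounded, so that the head covering contributes only the benign factor $\log(R/\epsilon)+C_4$; the remainder is a routine net count.
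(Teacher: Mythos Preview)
Your proposal is correct and follows the standard truncation-plus-covering argument for $\ell_\infty$-metric entropy of RKHS balls. The paper itself does not supply a proof of this lemma but simply cites Lemma~D.2 of \cite{yang2020provably}; your sketch is precisely the argument that reference carries out (Mercer expansion, tail control via $\sup_z\sigma_j^\tau|\psi_j(z)|\leq C_\psi$ and Cauchy--Schwarz, choice of truncation level $D$ matched to the decay regime, volumetric net on the $D$-dimensional head), so there is nothing to contrast.
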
 
\begin{proof}
    See Lemma D.2 in \cite{yang2020provably} for details.
\end{proof}
\end{document}